\let\classAND\AND
\let\AND\relax
\let\AND\classAND
\title{Noise Stability Optimization for Finding Flat Minima:\\A Hessian-based Regularization Approach}
\author{\name Hongyang R. Zhang \email ho.zhang@northeastern.edu \\
      \addr Northeastern University, Boston
      \AND
      \name Dongyue Li \email li.dongyu@northeastern.edu \\
      \addr Northeastern University, Boston
      \AND
      \name Haotian Ju \email ju.h@northeastern.edu\\
      \addr Northeastern University, Boston
      }
\begin{document}
\maketitle
\begin{abstract}
The training of over-parameterized neural networks has received much study in recent literature. An important consideration is the regularization of over-parameterized networks due to their highly nonconvex and nonlinear geometry. In this paper, we study noise injection algorithms, which can regularize the Hessian of the loss, leading to regions with flat loss surfaces. Specifically, by injecting isotropic Gaussian noise into the weight matrices of a neural network, we can obtain an approximately unbiased estimate of the trace of the Hessian. However, naively implementing the noise injection via adding noise to the weight matrices before backpropagation presents limited empirical improvements. To address this limitation, we design a two-point estimate of the Hessian penalty, which injects noise into the weight matrices along both positive and negative directions of the random noise. In particular, this two-point estimate eliminates the variance of the first-order Taylor's expansion term on the Hessian. We show a PAC-Bayes generalization bound that depends on the trace of the Hessian (and the radius of the weight space), which can be measured from data.

We conduct a detailed experimental study to validate our approach and show that it can effectively regularize the Hessian and improve generalization. First, our algorithm can outperform prior approaches on sharpness-reduced training, delivering up to a 2.4\% test accuracy increase for fine-tuning ResNets on six image classification datasets. Moreover, the trace of the Hessian reduces by 15.8\%, and the largest eigenvalue is reduced by 9.7\% with our approach. We also find that the regularization of the Hessian can be combined with alternative regularization methods, such as weight decay and data augmentation, leading to stronger regularization. Second, our approach remains highly effective for improving generalization in pretraining multimodal CLIP models and chain-of-thought fine-tuning.
\end{abstract}

\section{Introduction}\label{sec_intro}

The loss landscape and its geometry properties are a recurring theme in the study of neural networks \citep{keskar2016large,hochreiter1997flat}.
Recently, the design of training methods such as sharpness-aware minimization and stochastic weight averaging has led to empirical advances in a wide variety of settings \citep{izmailov2018averaging,foret2020sharpness,wortsman2022model}.
The theoretical study of these training methods has also been explored \citep{andriushchenko2022towards}.
For instance, recent work shows that sharpness-aware minimization \citep{foret2020sharpness} has an implicit bias to flat surface regions by penalizing the largest eigenvalue of the loss Hessian matrix \citep{wen2022does,bartlett2022dynamics}.
In this paper, we study methods that can provide {explicit} regularization of the trace of the Hessian, and we will show provable generalization guarantees of our methods.
More formally, given an input function $f: \real^d \rightarrow \real$ that represents the empirical risk of a neural network and a $d$-dimensional distribution $\cP$ with mean zero, we consider minimizing the noise-perturbed function
\begin{align} F(W) \define \exarg{U\sim\cP}{f({W + U})}. \label{eq_nso} \end{align}

Minimizing this perturbed function can improve the resilience of the neural network to noise injection, leading to flatter loss surfaces and improved regularization \citep{nagarajan2019deterministic}.
By analyzing the perturbed loss of a fine-tuned model, one can identify a measure of the \emph{sharpness} of loss surfaces based on the trace of the Hessian \citep{ju2022robust,ju2022generalization}.
We remark that the minimization problem of the form \eqref{eq_nso} traces back to earlier works on randomized smoothing \citep{duchi2012randomized}, which have provided a detailed study of convergence rates for nonsmooth stochastic optimization.
\emph{Our work differs from this line of literature in that we focus on evaluating the regularization effect of penalizing the Hessian trace upon neural network training.}

\begin{figure}[t!]
    \centering
    \includegraphics[width=0.45\linewidth]{./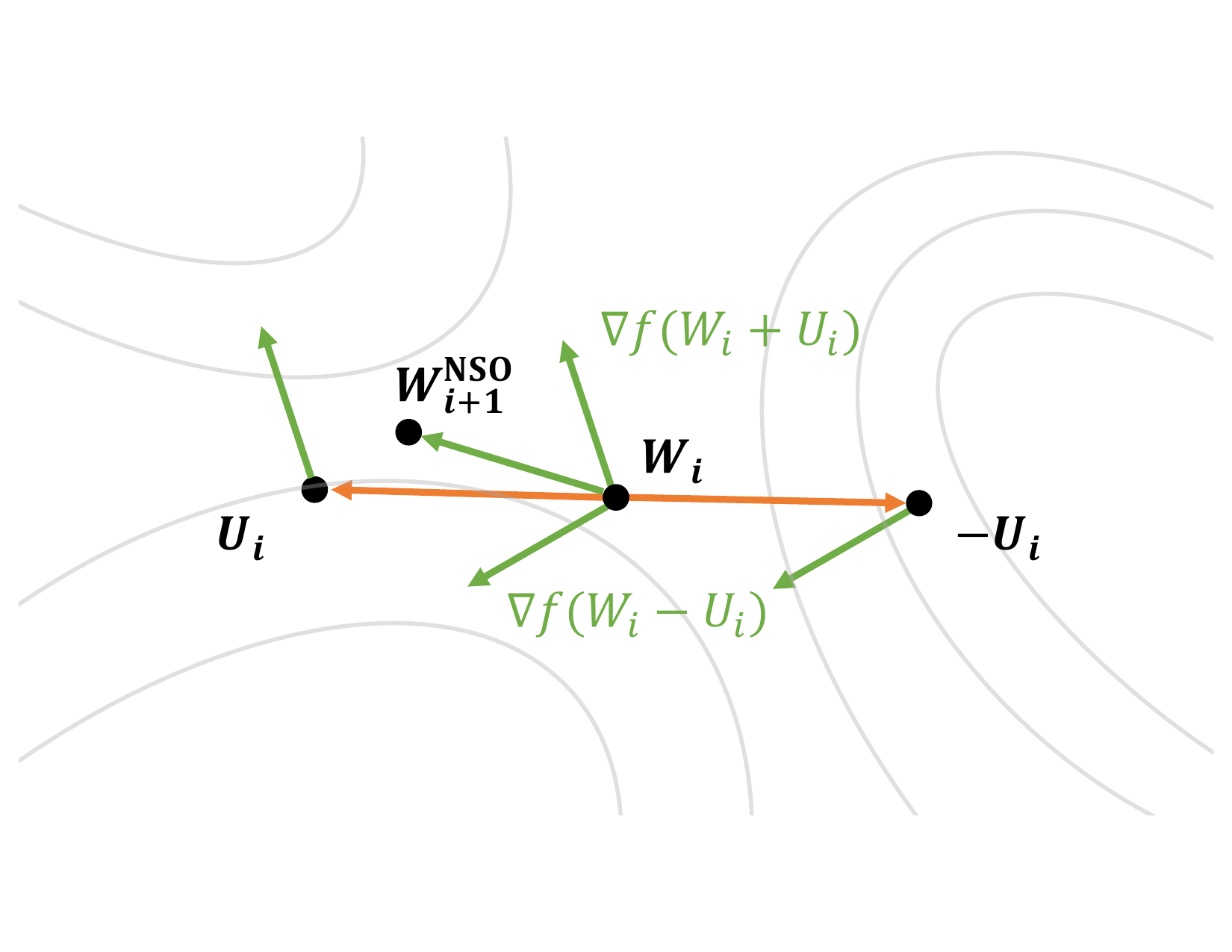}
    \vspace{-0.225in}
    \caption{\hl{An illustration of one update step in our algorithm. At each iteration $i$, we sample a random variable $U_i$ from a zero-mean distribution $\cP$ (e.g., an isotropic Gaussian with variance $\sigma^2$), where $\sigma$ is a hyper-parameter that controls the strength of the noise injection (hence the regularization).
    We query the gradient of $f$, at $f(W_i + U_i)$, and $f(W_i - U_i)$, and take their average.
    This results in a two-point noise injection scheme, whose computation cost is the same as sharpness-aware minimization \citep{foret2020sharpness}, and twice the cost of running SGD.
    Notice that in practice, we can also implement an extension of this algorithm, which samples multiple $U$s.
    For details, see Algorithm \ref{alg:two_point}.}}
    \label{fig_illustration}
\end{figure}

Although noise injection algorithms can be theoretically motivated as improving generalization (and stability), its
practical implication is not evident \citep{hinton1993keeping,an1996effects,graves2011practical}.
To motivate our study, we begin by running several empirical studies to compare the performance of (standard) SGD and weight-perturbed SGD (WP-SGD), which first injects random noise into the weight matrices of a neural network before computing its gradient in SGD.
As mentioned above, this would provide a randomized smoothing effect to the loss surface \citep{duchi2012randomized}.
We will fine-tune (pretrained) ResNets on three image classification tasks for this empirical study.
To ensure the robustness of the analysis, we also vary the distribution of $\cP$ and the variance of $U$.
Our overall finding is that WP-SGD (or randomized smoothing) does not offer clear benefits over SGD, which is also consistent with recent studies of weight noise injection \citep{orvieto2022explicit,dauphin2024neglected} (see Section \ref{sec_measure}, Table \ref{tab_wpsgd} for the complete results).
However, we hypothesize that these results may be due to the randomness of the noise injection (upon the Hessian penalty term) rather than the ineffectiveness of regularizing the Hessian trace.

Our approach to {mitigate the randomness of the noise injection on the Hessian penalty} involves two parts. First, we retrieve the gradient at $W - U$ to cancel out the first-order expansion term of $W + U$ (recall that $U$ is a random sample from $\cP$).
Meanwhile, the second-order expansion term remains the same after this cancellation.
We term this modification a two-point noise injection scheme, which is reminiscent of two-point gradient estimates in zeroth-order optimization \citep{duchi2015optimal}.
{The difference in our setting is that this two-point averaging cancels out the first-order gradient term, thereby eliminating its variance on the Hessian penalty.}
Second, we sample multiple perturbations $U_1, U_2, \dots, U_k$ at each epoch and take their averaged two-point (noise-injected) gradients.
See Figure \ref{fig_illustration} for an illustration of one step.%

A primary advantage of our approach compared to prior sharpness minimization algorithms is that our approach can provide an approximately unbiased estimate of the Hessian trace. We empirically validate this claim across three real-world settings (see Figure \ref{fig_noise_stability_approximation}, Section \ref{sec_measure} for an illustration).
By utilizing this property, we show a PAC-Bayes bound that depends on the trace of the Hessian and the radius of the weight hypothesis space.
We briefly describe this result, leaving a formal statement to Theorem \ref{thm_hessian}.
Let $\alpha$ be an upper bound on the trace of the Hessian measured within the hypothesis space and the data distribution (in practice, one may take this as the union of training and testing data).
Let $r$ be the radius of the hypothesis space, measured in $\ell_2$ distance.
Suppose there are $n$ empirical samples from an unknown distribution.
We show a generalization bound that scales as $O\Big(\sqrt{\frac{\alpha r^2}{n}}\Big)$.
Our proof utilizes a linear PAC-Bayes bound \citep{catoni2007pac,mcallester2013pac,alquier2021user}, and we optimize the variance of the prior and posterior distributions to derive the result. A detailed proof sketch is presented in Section \ref{sec_proof_sketch}.

Next, we validate our approach with a detailed empirical study. 
First, we compare our approach with four prior approaches for the setting of fine-tuning pretrained ResNets, including sharpness-aware minimization \citep{foret2020sharpness}, tested on six image classification datasets.
We show that our algorithm can reduce the trace and the largest eigenvalue of the loss Hessian matrix by 15.8\% and 9.7\%, respectively.
Our approach also improves test accuracy by 2.4\%. 
Second, we show that by combining our approach with regularization methods such as data augmentation and distance-based regularization \citep{gouk2021distance}, we can further regularize the Hessien, leading to 13.6\% lower trace values and 16.3\% lower test loss values (averaged over six tasks).
Third, we extend our approach to pretraining and chain-of-thought fine-tuning.
The details can be found in Section \ref{sec_pretrain}.
Overall, our algorithm can consistently provide better regularization of Hessian and improved test accuracy across these different settings and datasets.
Some of these empirical results are not completely explained by our theory, and we discuss the limitations in Section \ref{sec_conclude}.

\begin{table}
    \centering
    \caption{Comparison between our approach (NSO) and SAM \citep{foret2020sharpness},
    based on inductive bias, generalization guarantee, and convergence rate. 
    In particular, the inductive bias of SAM is based on the results of \cite{wen2022does}.
    The list of notations used in the table is explained as follows.
    $\nabla^2 \ell$ refers to the Hessian matrix of the loss function $\ell$.
    $\lambda_1[\cdot]$ and $\textup{Tr}[\cdot]$ refer to the largest eigenvalue and the trace of an input matrix.
    $\alpha$ refers to the trace norm, taken over the maximum of the entire hypothesis space and data distribution (including unseen test data).
    $r$ is the radius of the fine-tuning region measured in $\ell_2$ distance.
    $n$ is the number of samples in the training dataset.
    $T$ is the total number of iterations run by our algorithm.}\label{table_compare}
    \resizebox{\textwidth}{!}{
    \begin{tabular}{@{}c|c|c|c@{}}
        \toprule
        Approach & Inductive Bias & Generalization Guarantee & Convergence Rate \\
        \midrule
        Sharpness-Aware Minimization (SAM) & $\lambda_1[\nabla^2 \ell]$   & - & - \\
        Noise Stability Optimization (NSO) & $\textup{Tr}[\nabla^2 \ell]$ & $\sqrt{\frac{\alpha r^2}{n}}$ (Theorem \ref{thm_hessian}) & $\Theta\Big({\frac 1 {\sqrt T}}\Big)$ (Section \ref{sec_converge}) \\
        \bottomrule
    \end{tabular}}
\end{table}

Finally, we analyze the convergence of our algorithm using techniques from the stochastic optimization literature \citep{ghadimi2013stochastic,lan2020first,carmon2020lower,drori2020complexity,zhang2023mathematical}, leading to matching upper and lower bounds.
We also present a case study of Hessian regularization in over-parametrized matrix sensing and show that it is equivalent to nuclear norm regularization for this setting.
Our work raises several new questions that may be worth revisiting: can accelerated gradient descent methods be applied to design flat-minima optimizers? Can recent advances in zeroth-order optimization be leveraged to better regularize the training of transformer neural networks?

{In summary, the contributions of this paper are three-fold.
First, we present an algorithm that can explicitly regularize the Hessian trace and show a PAC-Bayes generalization bound that could be measured from data.
Second, we conduct experiments on multiple settings to validate our approach by comparing downstream performance and Hessian statistics with prior sharpness minimization algorithms and alternative regularization methods.
Third, we analyze the convergence of our algorithm using stochastic optimization techniques.
In Table \ref{table_compare}, we highlight the key aspects of our approach compared to prior approaches.}

\paragraph{Organization:}
\hl{The rest of this paper is organized as follows. 
In Section \ref{sec_alg}, we will present our approach.
We will start by presenting the motivating experiments. Then, we describe our algorithm and a PAC-Bayes bound that depends on the Hessian.
In Section \ref{sec_exp}, we present our experiments for validating the proposed approach.
Section \ref{sec_converge} presents an analysis of the convergence rate.
Section \ref{sec_hessian} provides a case study of the regularization effect of the Hessian trace in the over-parameterized matrix sensing problem.
In Section \ref{sec_related}, we discuss the related works.
Finally, in Section \ref{sec_conclude}, we state the conclusion and the limitations of this work.
We provide complete proof of our theoretical results in Appendix \ref{proof_gen}-\ref{proof_converge}.
We provide additional experimental details in Appendix \ref{sec_additional_exp}.}
\section{Our Approach}\label{sec_alg}

\hl{In this section, we present our approach. First, to set up the stage, we will study the straightforward implementation of noise injection by directly adding noise to the weight matrices of the neural network before computing the gradients in backpropagation.
We term this procedure weight-perturbed SGD (or WP-SGD in short), also known as randomized smoothing \citep{duchi2012randomized}.
We will compare the empirical performance of these two approaches to evaluate the effect of noise injection.
Then, we describe our algorithm and provide empirical measurements of the trace of the Hessian, along with the actual perturbation gaps observed in practice.
Finally, we will show a PAC-Bayes generalization bound that depends on the trace of the Hessian, which can be measured from data to compare different methods.}

\subsection{Motivating Experiments}\label{sec_motivating_exp}

We compare the results from running WP-SGD to standard SGD.
We choose the setting of fine-tuning pretrained foundation models, as overfitting is a common problem for this setting \citep{wortsman2022model}, and strong regularization is needed \citep{li2021improved,ju2022robust}.
We will fine-tune a pretrained ResNet-34 on several image classification datasets, including aircraft recognition (Aircraft) \citep{maji2013fine}, indoor scene recognition (Caltech-256) \citep{griffin2007caltech}, and medical image classification (retina images for diabetic retinopathy classification) \citep{pachade2021retinal}.
To implement WP-SGD, we sample a random vector from $\cP$ and add it to the model weights at each iteration before computing the gradient.
We set $\cP$ as the isotropic Gaussian and adjust its standard deviation between $0.008, 0.01,$ and $0.012$ via cross-validation.

We report our results in Table \ref{tab_wpsgd}, which indicate that the performance gap is less than 0.5\%, $\approx 0.75$ standard deviations based on five independent runs.
Furthermore, varying $\cP$ does not change the results. In particular, we test four types of $\cP$, including Gaussian, Laplace, uniform, and Binomial. We adjust standard deviations between $0.008, 0.01$, and $0.012$ via cross-validation.
We find that using Laplace and uniform distributions achieves a performance comparable to that of Gaussian.
However, using Binomial results in worse results.
These experiments suggest that the straightforward implementation of noise injection does not offer clear benefits over SGD.

\begin{table}[h!]
\caption{Comparing the outcome of running WP-SGD to standard SGD across four different $\cP$, measured over three image classification datasets. 
{Recall that WP-SGD refers to normal weight perturbation (without the paired perturbation). To be concise, we have included the results of running our approach (i.e., NSO). All the results and their standard deviations are based on five independent runs.}}\label{tab_wpsgd}
\resizebox{\textwidth}{!}
{\small
\begin{tabular}{@{}lccc|cc|cc@{}}
\toprule
& & \multicolumn{2}{c}{Aircraft} & \multicolumn{2}{c}{Indoor}  & \multicolumn{2}{c}{Retina Disease} \\
& $\cP$ &\textbf{Train Acc.} &\textbf{Test Acc.} & \textbf{Train Acc.} & \textbf{Test Acc.} & \textbf{Train Acc.} & \textbf{Test Acc.} \\
\midrule
SGD & None & 100.0\% $\pm$ 0.0 & 59.8\% $\pm$ 0.7  & 100.0\% $\pm$ 0.0  & 76.0\% $\pm$ 0.4  & 100.0\% $\pm$ 0.0 & 61.7\% $\pm$ 0.8 \\
\midrule
WP-SGD & {Gaussian} & 98.4\% $\pm$ 0.2 & 60.4\% $\pm$ 0.1 & 99.0\% $\pm$ 0.3 & 76.3\% $\pm$ 0.0 & 100.0\% $\pm$ 0.0 & 62.3\% $\pm$ 0.5\\
WP-SGD & {Laplace}  & 98.3\% $\pm$ 0.1 & 60.3\% $\pm$ 0.3 & 98.9\% $\pm$ 0.1 & 76.4\% $\pm$ 0.3 & 100.0\% $\pm$ 0.0 & 62.0\% $\pm$ 0.1\\
WP-SGD & {Uniform}  & 98.6\% $\pm$ 0.3 & 60.3\% $\pm$ 0.5 & 98.6\% $\pm$ 0.3 & 76.6\% $\pm$ 0.1 & 100.0\% $\pm$ 0.0 & 62.3\% $\pm$ 0.0 \\
WP-SGD & {Binomial} & 19.6\% $\pm$ 0.1 & 11.3\% $\pm$ 0.1 & 18.2\% $\pm$ 0.9 & 10.7\% $\pm$ 0.1 & 58.1\% $\pm$ 0.1 &  57.1\% $\pm$ 0.0 \\
\midrule
NSO & Gaussian & 95.8\% $\pm$ 0.4 & 62.3\% $\pm$ 0.3 & 95.7\% $\pm$ 0.2 & 77.4\% $\pm$ 0.3 & 100.0\% $\pm$ 0.0& 66.6\% $\pm$ 0.7\\ 
NSO & Laplace & 96.5\% $\pm$ 0.3 & 61.9\% $\pm$ 0.3 & 96.1\% $\pm$ 0.3 & 77.1\% $\pm$ 0.1 & 100.0\% $\pm$ 0.0 & 65.9\% $\pm$ 0.1 \\
NSO & Uniform & 96.4\% $\pm$ 0.4 & 61.9\% $\pm$ 0.5 & 96.4\% $\pm$ 0.2 & 76.8\% $\pm$ 0.2 & 100.0\% $\pm$ 0.0 & 65.7\% $\pm$ 0.1 \\
NSO & Binomial & 20.1\% $\pm$ 0.1 & 14.3\% $\pm$ 0.3 & 22.8\% $\pm$ 0.1 & 17.9\% $\pm$ 0.2 & 59.2\% $\pm$ 0.1 & 57.8\% $\pm$ 0.1\\
\bottomrule
\end{tabular}
}
\end{table}

\subsection{Description of Our Algorithm}\label{sec_measure}

In our approach, we make two modifications to WP-SGD.
First, we add the perturbation from both the positive and negative directions during the noise injection, as shown in line \ref{eq_grad2}.
Second, we average over multiple noise injections to reduce the variance from noise injection, as described in line \ref{eq_multi_noise}.
As for the first modification, recall that $\cP$ is a symmetric distribution. We use Taylor's expansion on both $f(W + U)$ and $f(W - U)$ as follows:
\begin{align}
    f(W + U) &= f(W) + \inner{U}{\nabla f(W)} + \frac 1 2 {U^{\top}}{\nabla^2 f(W)}  U + O(\norms{\Sigma}^{\frac 3 2}), \label{eq_perturb_pos_exp} \\
    f(W - U) &= f(W) - \inner{U}{\nabla f(W)} + \frac 1 2 {U^{\top}}{\nabla^2 f(W)}  U + O(\norms{\Sigma}^{\frac 3 2}). \label{eq_perturb_neg_exp}
\end{align}
We have that $\ex{U} = 0$, and $\ex{UU^{\top}} = \Sigma$.
Thus, by taking the average of equations \eqref{eq_perturb_pos_exp} and \eqref{eq_perturb_neg_exp}, we get
\begin{align}\label{eq_tay}
    \exarg{U\sim\cP}{\frac 1 2 (f(W + U) + f(W - U))} 
    = F(W)
    = f(W) + \frac 1 2 \inner{\Sigma}{{\nabla^2 f(W)}} + O\left(\bignorms{\Sigma}^{\frac 3 2}\right).
\end{align}
We can see that the two-point estimate eliminates the first-order gradient term, potentially reducing its variance in estimating the Hessian term.
The second modification reduces the variance of the stochastic gradient, using the fact that each perturbation is independent of the others.
The entire procedure is summarized in Algorithm \ref{alg:two_point}.
As a remark, two-point gradient estimators are commonly used in zeroth-order optimization \citep{duchi2015optimal}.
However, their use in designing flat minima optimizers has not been explored much.
\begin{algorithm}[H]
	\caption{\textbf{Noise stability optimization (NSO) for regularizing the Hessian of neural networks}}\label{alg:two_point}
    \textbf{Input}: Initialization $W_0\in\real^d$, a function $f: \real^d \rightarrow \real$\\
    \textbf{Require}: An estimator $g: \real^d \rightarrow \real^d$ that for any $W$, returns $g(W)$ s.t. $\ex{g(W)} = \nabla f(W)$\\
	\textbf{Parameters:} $\#$ perturbations $k$, $\#$ epochs $T$, step sizes $\eta_0, \dots, \eta_{T-1}$%
	\begin{algorithmic}[1] %
		\FOR{$i = 0, 1, \dots, T-1$}
            \STATE  {/*\qquad{\itshape {Compute the two-point averaged stochastic gradient for each independent noise injection}} \hfill */}
            \FOR{$j = 0, 1, \dots, k-1$} 
                \STATE $U_i^{(j)} \leftarrow$ sampled independently from $\cP$
                \STATE $G_i^{(j)} \leftarrow g\big(W_{i} + U_{i}^{(j)}\big) + {g}\big({W_i - U_i^{(j)}}\big)$ \label{eq_grad2}%
            \ENDFOR
            \STATE $W_{i + 1} \leftarrow W_i - {\eta_i} \left( \frac {1} {2k} \sum_{j=1}^k G_i^{(j)} \right)$\label{eq_multi_noise}
        \ENDFOR
	\end{algorithmic}
\end{algorithm}

\paragraph{Measurements of the Hessian trace and the perturbation gap:}%
Next, we provide several examples to measure the approximation quality of equation \eqref{eq_tay}.
Following the experimental setup of Section \ref{sec_motivating_exp}, we will fine-tune a foundation model on a downstream task.
After training, we will set $W$ as the model weight at the last epoch for all the measurements.

To measure equation \eqref{eq_tay}, we %
then add $U$ to $W$, where $U$ is sampled from an isotropic Gaussian.
We will measure $f(W + U) - f(W)$, averaged over $100$ independent samples of $U$, and we measure this and $\nabla^2 f(W)$ by taking the average over the training dataset.

The results are shown in Figure \ref{fig_noise_stability_approximation}.
We can see that $\nabla^2 f$ provides an accurate approximation to $F(W) - f(W)$ %
for various values of $\sigma$.
In particular, the approximation error of equation \eqref{eq_tay} using the Hessian trace is less than $3\%$.
As a remark, the range of $\sigma^2$ differs across architectures because of the differing scales of their weights.
More details about the neural network architectures can be found in Appendix \ref{sec_additional_exp}.
\begin{figure*}[!h]
    \centering
    \begin{subfigure}[b]{.975\textwidth}
        \centering
		\includegraphics[width=0.315\textwidth]{./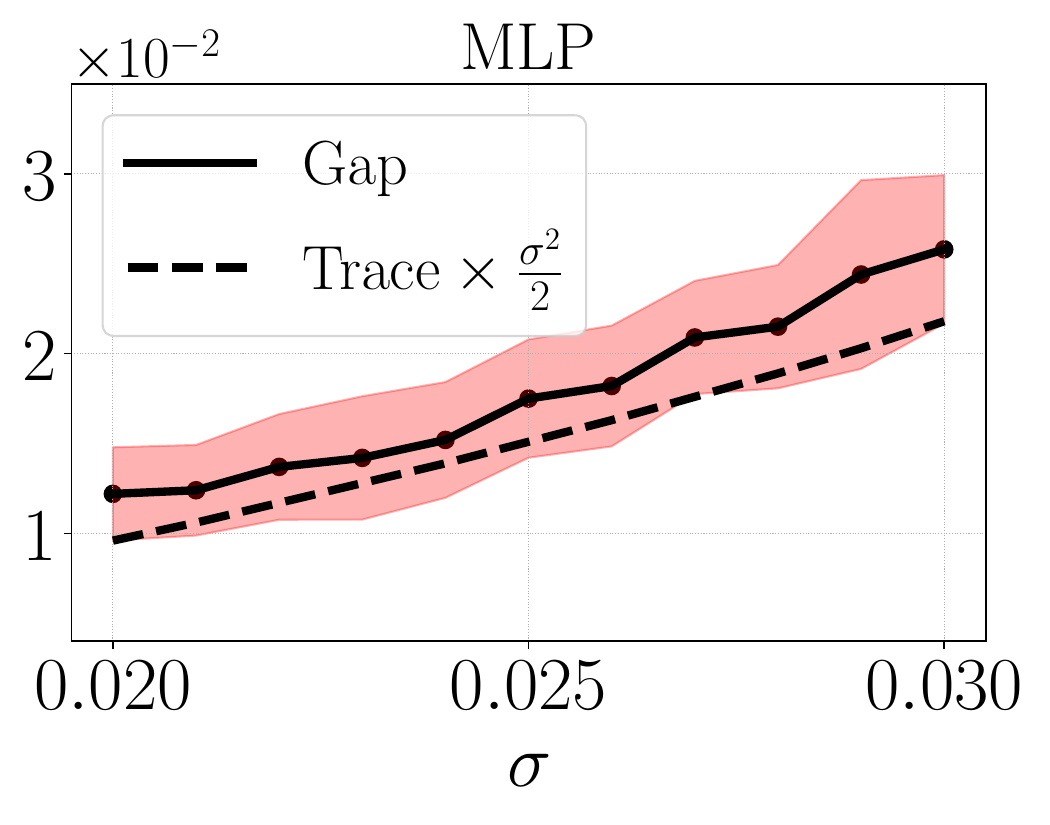}\hfill
		\includegraphics[width=0.315\textwidth]{./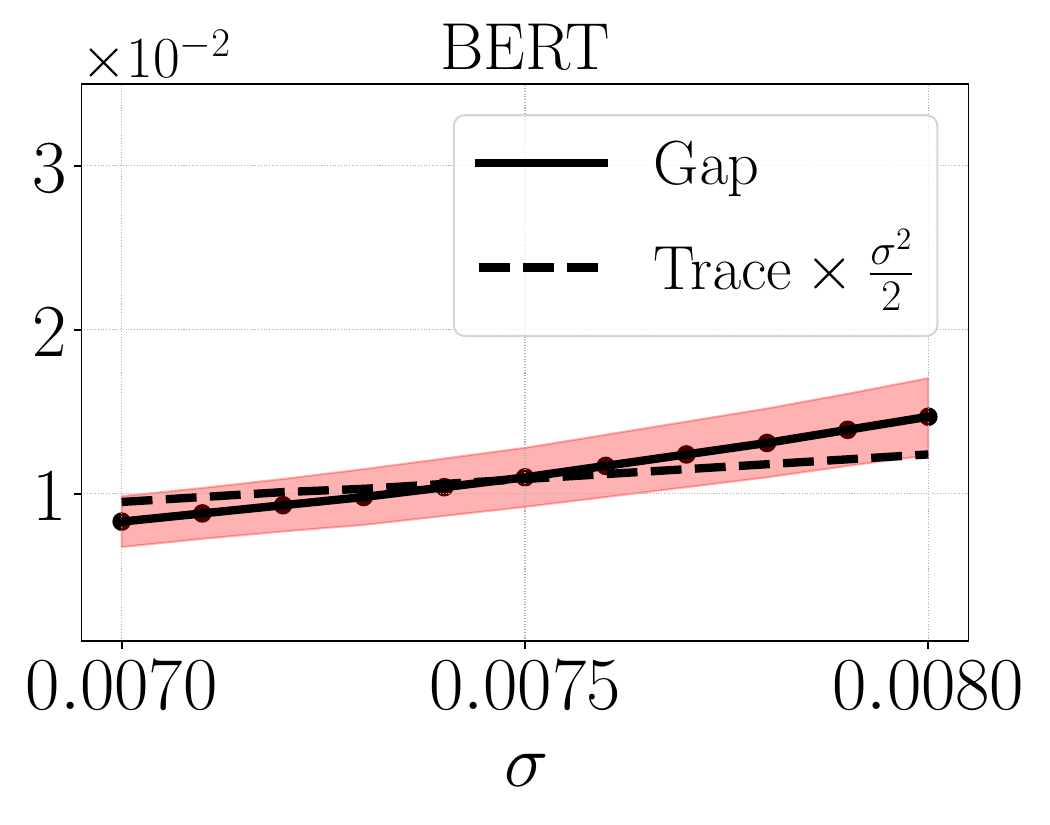}\hfill
        \includegraphics[width=0.315\textwidth]{./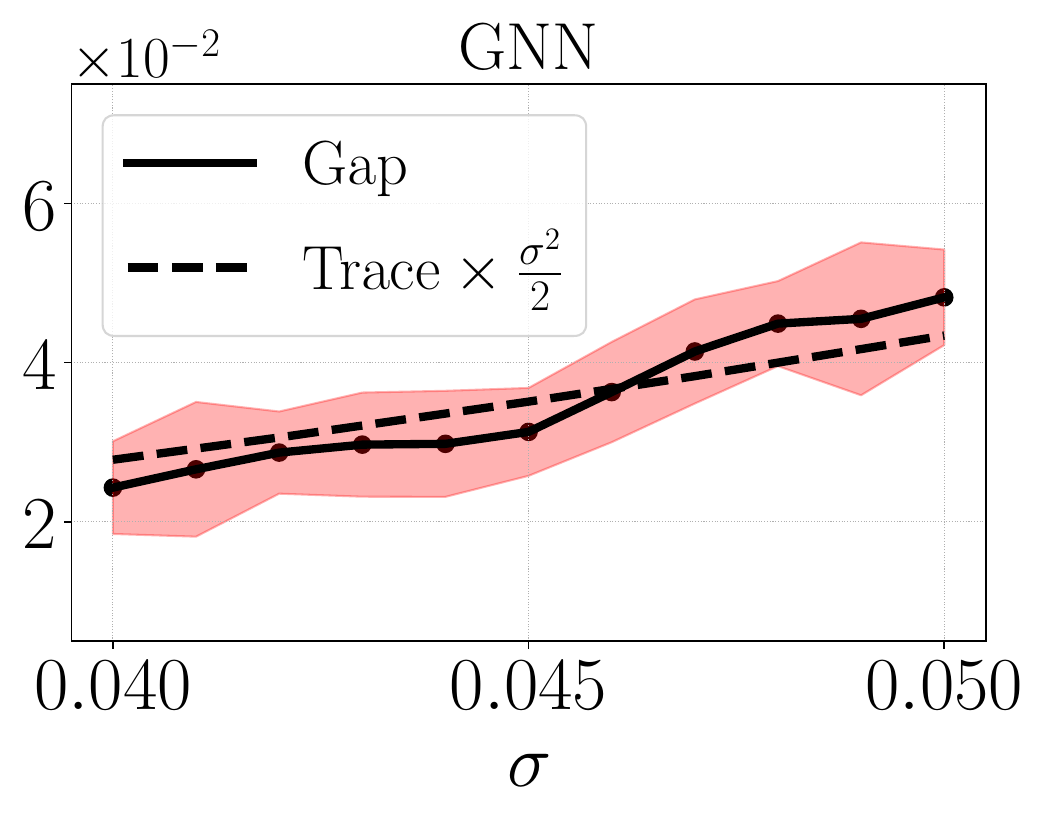}
    \end{subfigure}
    \caption{\hl{Illustration of the approximation quality of equation \eqref{eq_tay}. We report all measurements based on the network weight at the last epoch of fine-tuning.
    We can see that the perturbation gap (i.e., $F(W) - f(W)$ in equation \eqref{eq_tay}) and $\frac {\sigma^2} 2 \tr[\nabla^2 f(W)]$ are at the same order.
    Recall that $\sigma$ refers to the standard deviation of the Gaussian noise injected into the weight matrices. More specifically, $\sigma$ will decide the strength of noise injection or the strength of regularization on the Hessian trace.}}\label{fig_noise_stability_approximation}
\end{figure*}

\subsection{Generalization Guarantee and Proof Sketch}\label{sec_proof_sketch}

Next, we present a PAC-Bayes generalization bound that depends on the trace of the Hessian.
Our bound can be related to the notion of trace norm, which has been used in earlier works for quantifying sample complexity in the context of matrix recovery \citep{srebro2005rank}.

Concretely, suppose we have a pretrained model in the fine-tuning setting.
This can be viewed as our prior belief of the target hypothesis in PAC-Bayes analysis.
Once we have learned a model (though fine-tuning), we can view this as the posterior in PAC-Bayes analysis.
Let $\cD \subseteq \cX \times \cY$ be an unknown data distribution, supported on the feature space $\cX$ and the label space $\cY$.
Given $n$ random samples $(x_1, y_1), (x_2, y_2), \dots, (x_n, y_n)$ drawn from $\cD$, the empirical loss (measured by loss function $\ell$) applied to a model $f_W$ (with $W \in \real^p$) is:
\[ \hat L(W) = \frac 1 n \sum_{i = 1}^n \ell(f_W(x_i), y_i). \]
The population loss is %
$L(W) = \exarg{(x, y) \sim \cD}{\ell(f_W(x), y)}.$
It is sufficient to think that the empirical loss is less than the population loss, and the goal is to bound the gap between $\hat L(W)$ and $L(W)$ from above \citep{shalev2014understanding}.

Let $W$ be any learned hypothesis within the hypothesis space, denoted as $\cH$.
Our generalization bound will apply uniformly to $W$ within the hypothesis space.
We state our result, including the required assumptions, as follows.

\begin{theorem}\label{thm_hessian}
    Assume that the loss function $\ell$ is bounded between $0$ and $C$ for a fixed constant $C > 0$ on the data distribution $\cD$.
    Suppose $\ell(f_W(\cdot), \cdot)$ is twice-differentiable in $W$ and the Hessian matrix $\nabla^2[\ell(f_W(\cdot), \cdot)]$ is Lipschitz continuous within the hypothesis space.
    Suppose for any $W$ in $\cH$, the trace norm of the Hessian is less than $\alpha$:
    \begin{align}
        \alpha \define \max_{W\in\cH} \max_{(x, y) \sim \cD} \bigtr{\nabla^2 \ell(f_W(x), y)}, \label{eq_trace_norm}
    \end{align}
    and the $\ell_2$-norm of $W$ is at most $r$ for any $W \in \cH$.
    Then, for any $W$ in $\cH$, with probability at least $1 - \delta$ for any $\delta > 0$, the following must hold, for any $\epsilon$ close to zero:
    \begin{align}
        L(W) \le (1 + \epsilon) \hat L(W) + (1 + \epsilon) \sqrt{\frac {C \alpha r^2}{n}} + O\Big(n^{-\frac 3 4}\log(\delta^{-1})\Big). \label{eq_main_1}
    \end{align}
\end{theorem}

\paragraph{Proof Sketch:}
We provide a high-level illustration of the proof of Theorem \ref{thm_hessian}.
Let $\cQ$ denote the \textit{posterior} distribution. Specifically, we consider $\cQ$ as being centered at the learned hypothesis $W$ (which could be anywhere within the hypothesis space), given by a Gaussian distribution $\cN(W, \sigma^2 \id_p)$, where $\id_p$ denotes the $p$ by $p$ identity matrix.
Given a sample $U\sim \cN(0, \sigma^2\id_p)$, let the perturbed loss be given by 
\begin{align}
    \ell_{\cQ}(f_W(x), y) = \exarg{U}{\ell(f_{W + U}(x), y)}. \label{eq_lq}
\end{align}
Then, let $\hat L_{\cQ}(W)$ be the averaged value of $\ell_{\cQ}(f_W(\cdot), \cdot)$, taken over $n$ empirical samples from the training dataset.
Likewise, let $L_{\cQ}(W)$ be the population average of $\ell_{\cQ}(f_W(\cdot), \cdot)$, in expectation over an unseen data sample from the underlying data distribution.

Having introduced the notations, we start with the linear PAC-Bayes bound \citep{catoni2007pac,mcallester2013pac,alquier2021user} (see Theorem \ref{lemma_pac} for reference), stated as follows, which holds with probability $1 - \delta$ for any $\delta\in(0, 1)$ and $\beta \in (0, 1)$:
\begin{align}
    L_{\cQ}(W) \le \frac 1 {\beta} \hat L_{\cQ}(W) + \frac{C(KL(\cQ || \cP) + \log(\delta^{-1}))}{2\beta(1 - \beta) n}, \label{eq_pacbayes}
\end{align}
where $\cP$ refers to the \textit{prior} distribution, $C$ refers to the upper bound on the loss value $\ell$.
For analyzing fine-tuning, we view $\cP$ as centered at the pretrained model, with covariance matrix $\sigma^2 \id_p$.
By Taylor's expansion of $\ell_{\cQ}$ (see Lemma \ref{lemma_perturb} for the precise statement), we show that:
\begin{align}
    L_{\cQ}(W) &= L(W) + \frac {\sigma^2} 2 \exarg{(x,y)\sim\cD}{\bigtr{\nabla^2 \ell(f_{W}(x), y)}} + O(\sigma^3) \label{eq_taylorh1}\\
    \hat L_{\cQ}(W) &= \hat L(W) + \frac {\sigma^2} {2n} \sum_{i = 1}^n \bigtr{\nabla^2 \ell(f_W(x_i), y_i)} + O(\sigma^3). \label{eq_taylorh2}
\end{align}
Since the Hessian operator is Lipschitz continuous by the assumption of Theorem \ref{thm_hessian}, we can bound the gap between the above two quantities with $\epsilon$-covering arguments (see Lemma \ref{lemma_union_bound} for the precise statement).
By plugging in these results back to the PAC-Bayes bound of equation \eqref{eq_pacbayes}, after some calculation, we can get:
\begin{align}
    L(W) \le \frac 1 {\beta} \hat L(W) + \frac{\sigma^2(1 - \beta) \alpha}{2\beta} + \frac{C r^2 / 2\sigma^2}{2\beta(1 - \beta) n} + O\left(\sigma^3 + \frac{\sigma^2 \sqrt p}{\sqrt n} + \frac{\log(\delta^{-1})}{n}\right). \label{eq_intermediate}
\end{align}
In particular, the above uses the fact that the $\ell_2$-norm of $W$ is less than $r$ for any $W \in \cH$ (the KL divergence is discussed in Proposition \ref{prop_kl}).
By choosing $\sigma^2$ and $\beta$ to minimize equation \eqref{eq_intermediate}, we will obtain equation \eqref{eq_main_1}.
This summarizes the high-level proof idea.
The complete proof can be found in Appendix \ref{sec_proof_pacbayes}.

\bigskip
\begin{remark}
We highlight two key aspects of our results. The first is that our PAC-Bayes bound is non-vacuous, meaning that it matches the scale of empirically observed gaps when measured in practice; this is based on the trace measurements in Figures \ref{fig_noise_stability_approximation} and \ref{fig_lower_trace}. The second is that this non-vacuous bound has practical implications, meaning that we can utilize this bound to design optimization algorithms that improve generalization.

These are non-trivial to achieve. To give some context, prior work has provided a PAC-Bayes margin bound for multi-layer neural networks, which depends on the product of the spectral norm of the network layers \citep{neyshabur2018pac}. While this paper provides important insights regarding the generalization of deep networks, the bound is vacuous when measured in practice.
\citet{arora2018stronger} provide another data-dependent PAC-Bayes bound based on compression techniques. Their work started with an experiment in which they injected noise into the network layers and showed that deep nets can absorb the noise after retraining. However, their bound remains orders of magnitude higher than the actual generalization errors observed in practice.

In contrast, our bound matches the scale of empirically observed gaps. To achieve this, we start from the line of work on \emph{data-dependent} PAC-Bayes bounds. We build on the line of work on distance from the initialization \citep{nagarajan2019deterministic}, 
which is ideal for understanding fine-tuning \citep{li2021improved}.
Our key breakthrough is to connect noise stability in PAC-Bayes bound with the loss Hessian matrix (cf. equations \eqref{eq_taylorh1} and \eqref{eq_taylorh2}).
Then, we can measure the Hessian of loss landscapes from data. %

We additionally note that few existing works have considered using PAC-Bayes bounds to design algorithms. The reason is that for new algorithm designs, we need to connect the PAC-Bayes bound with data in a non-vacuous way. The work of \cite{dziugaite2017computing} has provided a computational framework to achieve non-vacuous generalization bounds. Instead, our result provides an analytical expression that can be leveraged in algorithm design. To operationalize the design, we utilize the explicit dependence of our result on the Hessian to design the regularization scheme.
\end{remark}

\section{Experiments}\label{sec_exp}

We now turn to empirical validations of our algorithm. 
First, we apply our approach to fine-tune pretrained ResNets on various image classification datasets.
We find that NSO can more significantly regularize the Hessian of the loss surface, resulting in reductions in the trace and the largest eigenvalue by \textbf{15.8}\% and \textbf{9.7}\%, respectively.
After controlling computation costs, it can outperform four sharpness-reducing methods by up to \textbf{2.4}\%.
In addition, we justify our algorithm design through detailed ablation analysis.
We also show that our approach is compatible with alternative regularization techniques, including distance-based regularization and data augmentation, and combining these methods with our approach leads to more significant regularization and test performance.
Second, we show similar results for pretraining and chain-of-thought fine-tuning.
The experiment code for reproducing our empirical findings can be found online at: \url{https://github.com/VirtuosoResearch/Noise-stability-optimization}.

\subsection{Comparison with Sharpness Minimization Methods}\label{sec_main_comparison_results}

We now compare Algorithm \ref{alg:two_point} with five sharpness-reducing training methods,  including sharpness-aware minimization (SAM) \citep{foret2020sharpness}, unnormalized SAM (USAM) \citep{agarwala2023sam}, adaptive variants of SAM (ASAM), and random SAM (RSAM) \citep{liu2022random}. %
During the comparison, we control for the same amount of computation (for Algorithm \ref{alg:two_point}, we will set the number of sampled injections $k$ as $1$). Thus, all the methods under consideration will use twice the computation of SGD. For NSO, we sample perturbation from an isotropic Gaussian distribution and adjust $\sigma$ between $0.008, 0.01$, and $0.012$. For SAM, we adjust the $\ell_2$ norm of the perturbation between $0.01, 0.02$, and $0.05$. 
For each method, we run it with both momentum and weight decay. 
We ensure that all the training methods are carefully adjusted. See Appendix \ref{sec_additional_exp} for the details.

\subsubsection{Empirical Findings}

In Table \ref{tab_w_momentum_weight_decay}, we report the comparison between NSO, SGD, SAM, unnormalized SAM (USAM), and adaptive SAM (ASAM).
We find that our approach reduces the trace of Hessian by \textbf{15.8}\% on average.
The largest eigenvalue of the Hessian is also reduced by \textbf{9.7}\%.
This finding is intriguing since SAM has been motivated by a min-max problem.
As for test accuracy, our approach can provide up to \textbf{2.4}\% lift, with an average improvement of \textbf{1.2}\%.
Additional comparisons are deferred to Table \ref{tab_res} in Appendix \ref{sec_additional_exp}.
\begin{table}[h!]
    \centering
    \caption{Comparison between our approach (NSO) with SGD, sharpness-aware minimization (SAM), unnormalized SAM (USAM), and adaptive SAM (ASAM). We fine-tune the ResNet-34 network on six image classification datasets and report the test accuracy and the trace of Hessian using the model in the last epoch of training. The results are averaged over five random seeds.}
    \label{tab_w_momentum_weight_decay}
    \resizebox{\textwidth}{!}{
    {\small\begin{tabular}{clcccccc}
        \toprule
         & &  \textbf{CIFAR-10} & \textbf{CIFAR-100} & \textbf{Aircrafts} & \textbf{Caltech-256} & \textbf{Indoor} &  \textbf{Retina} \\
         \multirow{5}{*}{\shortstack{Basic \\ Statistics { }}} & {\# Training} & 45,000 & 45,000 & 3,334 & 7,680 & 4,824 & 1,396  \\
        & {\# Validation} & 5,000 & 5,000 & 3,333 & 5,120 & 536 & 248  \\ 
        & {\# Test} & 10,000 & 10,000 & 3,333 & 5,120 & 1,340 & 250 \\
        & {\# Classes} & 10 & 100 & 100 & 256 & 67 & 5  \\ 
        \midrule
        \multirow{3}{*}{\shortstack{\textbf{Trace} \\ ($\downarrow$)}} & SGD & 4128 $\pm$ 83 & 13188 $\pm$ 221 & 5471 $\pm$ 65 &  3674 $\pm$ 95 & 3629 $\pm$ 61 & 28607 $\pm$ 226 \\
        & SAM & 2429 $\pm$ 87 & 9227 $\pm$ 286  & 4499 $\pm$ 70 &  3285 $\pm$ 95 & 3159 $\pm$ 75 & 15444 $\pm$ 173 \\
        & USAM & 2352 $\pm$ 61 & 7382 $\pm$ 222  & 4298 $\pm$ 94 &  3174 $\pm$ 52 & 3072 $\pm$ 51 & 12068 $\pm$ 246 \\
        & ASAM & 2445 $\pm$ 63 & 9960 $\pm$ 313 & 4475 $\pm$ 69 &  3339 $\pm$ 78 & 3014 $\pm$ 53 & 14155 $\pm$ 136 \\
        & \textbf{NSO} & \textbf{1728} $\pm$ 79 & \textbf{5244} $\pm$ 89 & \textbf{3678} $\pm$ 83 & \textbf{2958} $\pm$ 77 & \textbf{2737} $\pm$ 90 & \textbf{10970} $\pm$ 146 \\
        \midrule
        \multirow{3}{*}{\shortstack{\bf{Test} \bf{Acc.} \\ ($\uparrow$)}} & SGD & 96.1\% $\pm$ 0.1 & 82.8\% $\pm$ 0.1 &  60.5\% $\pm$ 0.7 & 80.0\% $\pm$ 0.1 & 76.7\% $\pm$ 0.4 & 62.2\% $\pm$ 0.8 \\
        & SAM & 97.0\% $\pm$ 0.2 & 84.0\% $\pm$ 0.4  & 62.3\% $\pm$ 0.3 & 77.0\% $\pm$ 0.4 & 77.2\% $\pm$ 0.3 & 65.0\% $\pm$ 0.3 \\
        & USAM & 96.9\% $\pm$ 0.2 & 83.7\% $\pm$ 0.2 & 61.9\% $\pm$ 0.3 & 76.9\% $\pm$ 0.2 & 76.7\% $\pm$ 0.3 & 64.7\% $\pm$ 0.1 \\
        & ASAM & 97.1\% $\pm$ 0.1 & 84.2\% $\pm$ 0.3 & 62.4\% $\pm$ 0.5 & 77.3\% $\pm$ 0.2 & 77.2\% $\pm$ 0.2 & 65.2\% $\pm$ 0.3\\
        & \textbf{NSO} & \textbf{97.6\%} $\pm$ 0.4 & \textbf{84.9\%} $\pm$ 0.3 & \textbf{63.2\%} $\pm$ 0.3 & \textbf{78.1\%} $\pm$ 0.5  & \textbf{78.2\%} $\pm$ 0.3 & \textbf{67.0\%} $\pm$ 0.4 \\
        \bottomrule
    \end{tabular}}}
\end{table}

Figure \ref{fig_lower_trace} illustrates the measurements between SGD, WP-SGD, and NSO.
Curiously, we find that the trace of the Hessian also decreases for SGD, possibly due to implicit norm control of SGD.
While both WP-SGD and NSO reduce the trace of the Hessian, our approach penalizes the Hessian more.
Besides, the generalization gap and the test loss are consistently lower during NSO training. %
\begin{figure*}[!h]
    \centering
    \begin{subfigure}[b]{.95\textwidth}
        \centering
		\includegraphics[width=0.30\textwidth]{./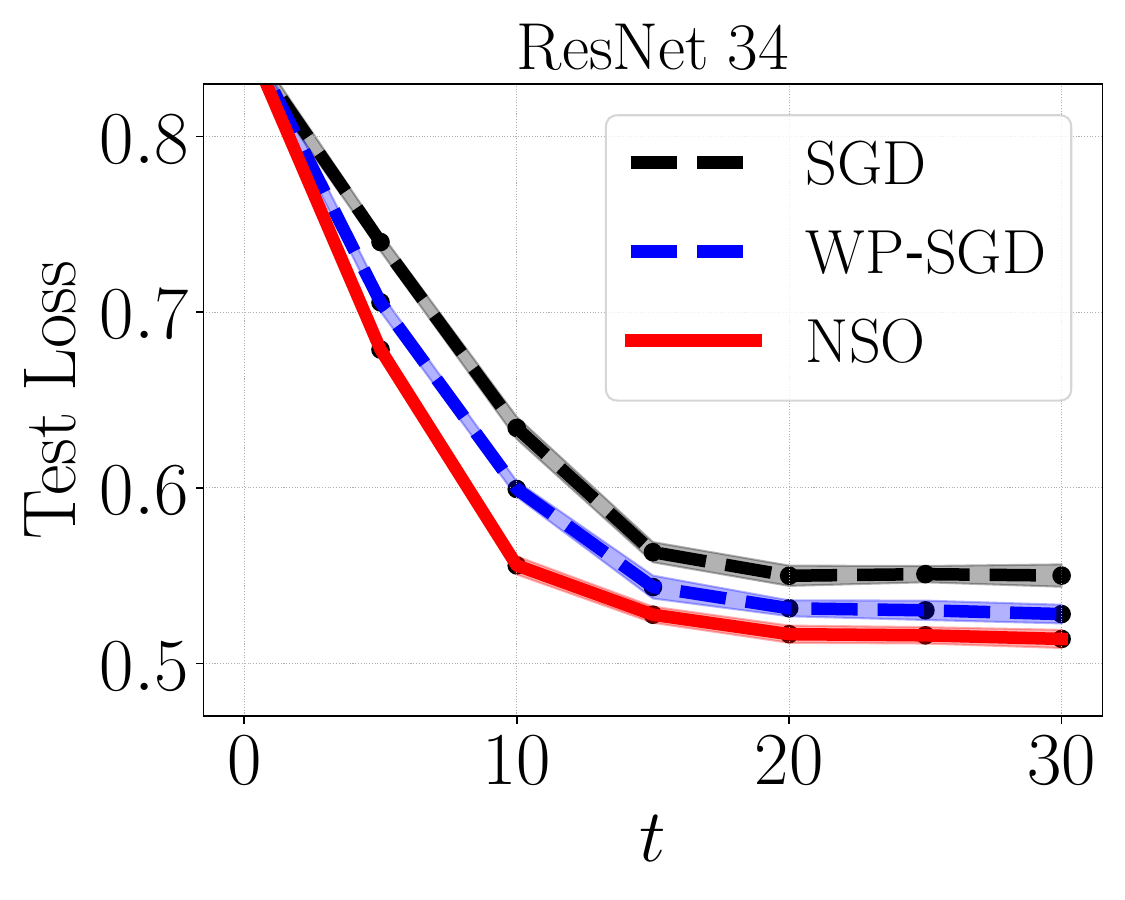}\hfill
		\includegraphics[width=0.30\textwidth]{./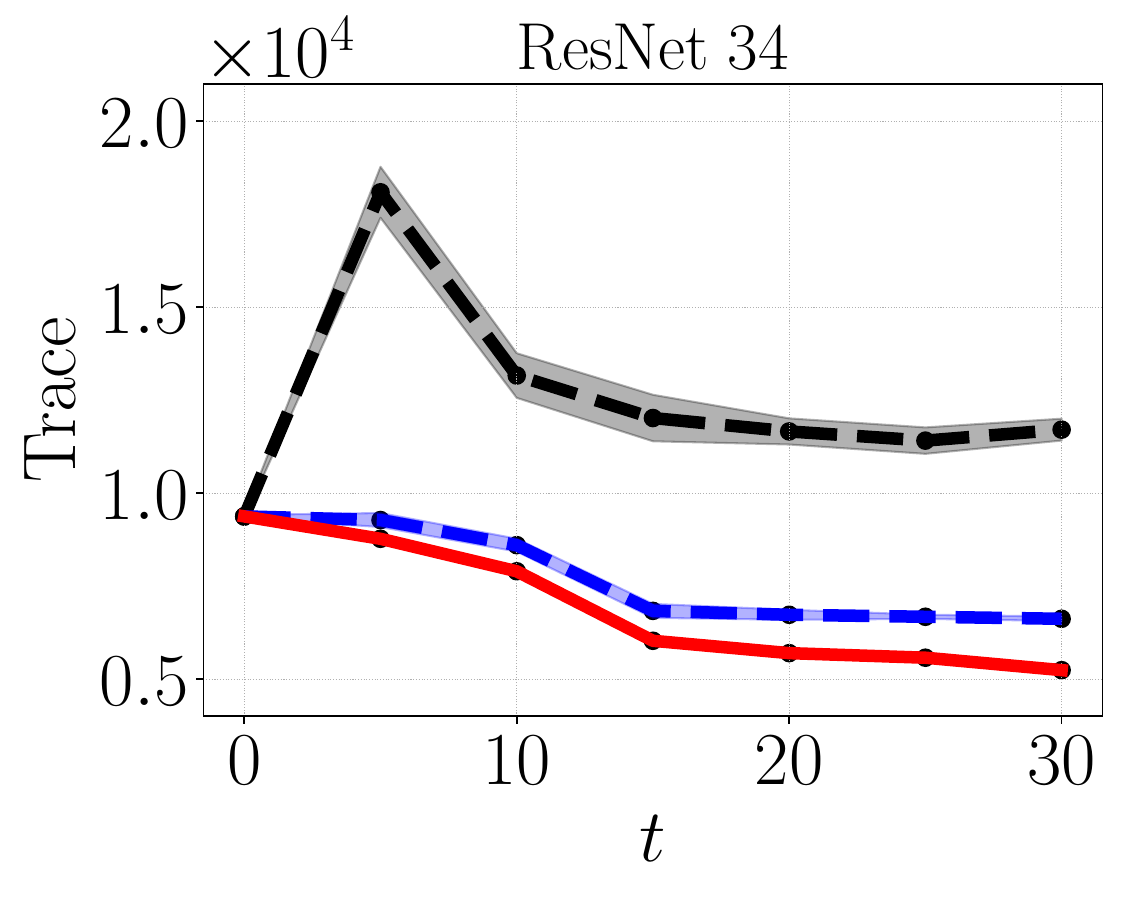}\hfill
        \includegraphics[width=0.30\textwidth]{./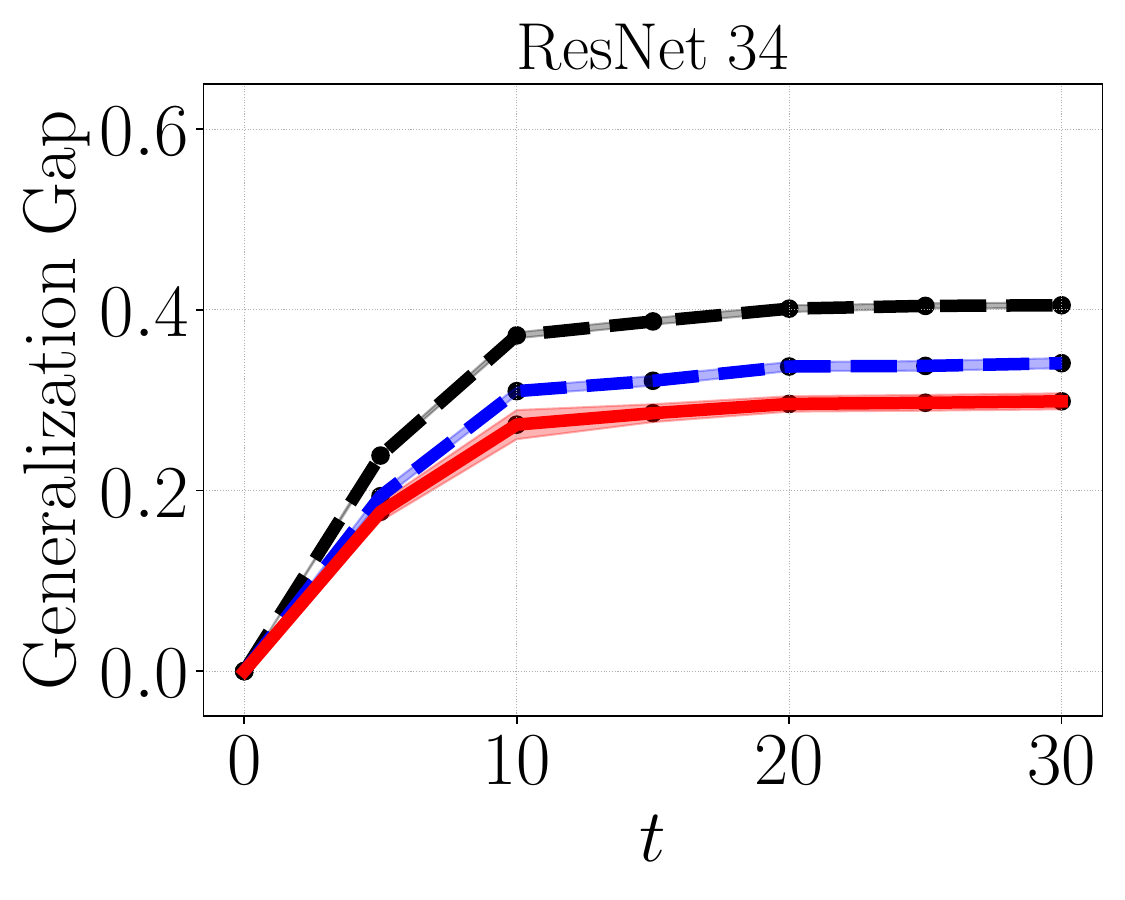}
    \end{subfigure}\hfill\\
    \begin{subfigure}[b]{0.95\textwidth}
        \centering
		\includegraphics[width=0.30\textwidth]{./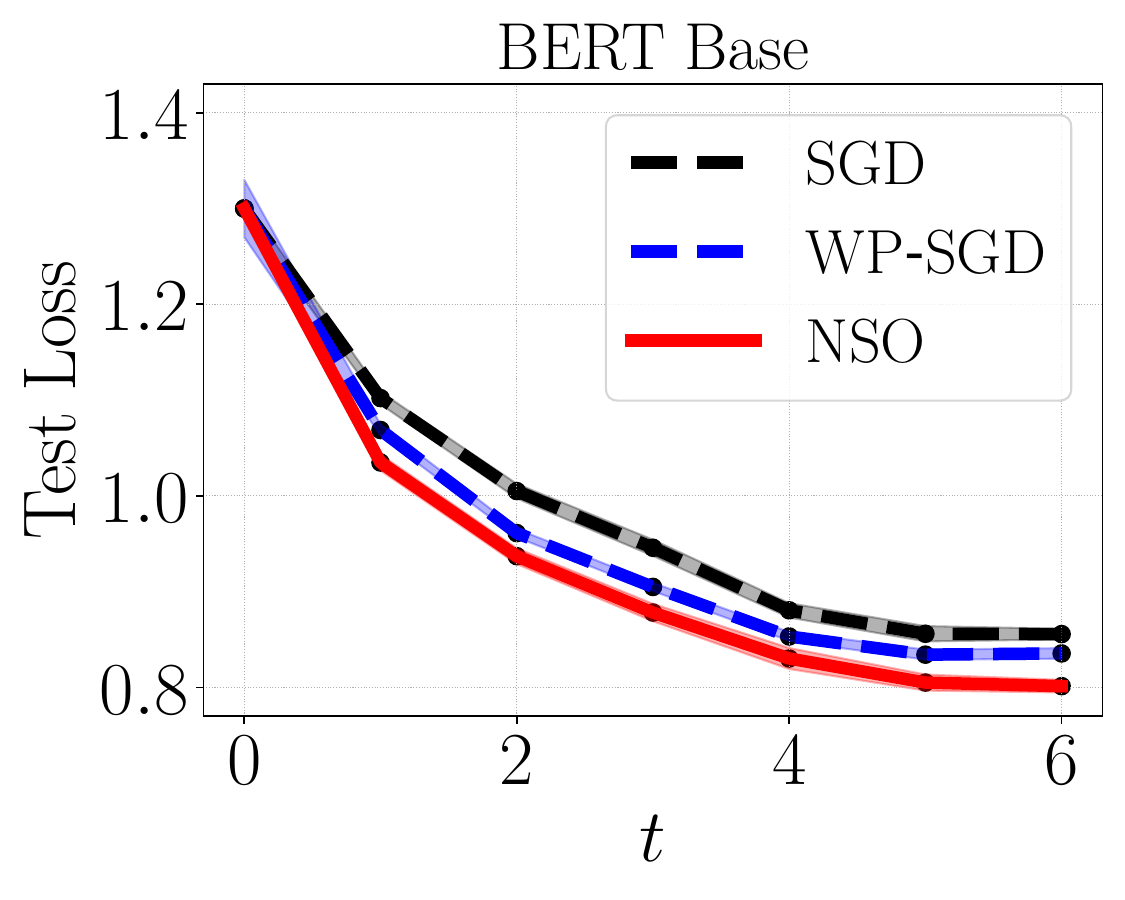}\hfill
		\includegraphics[width=0.30\textwidth]{./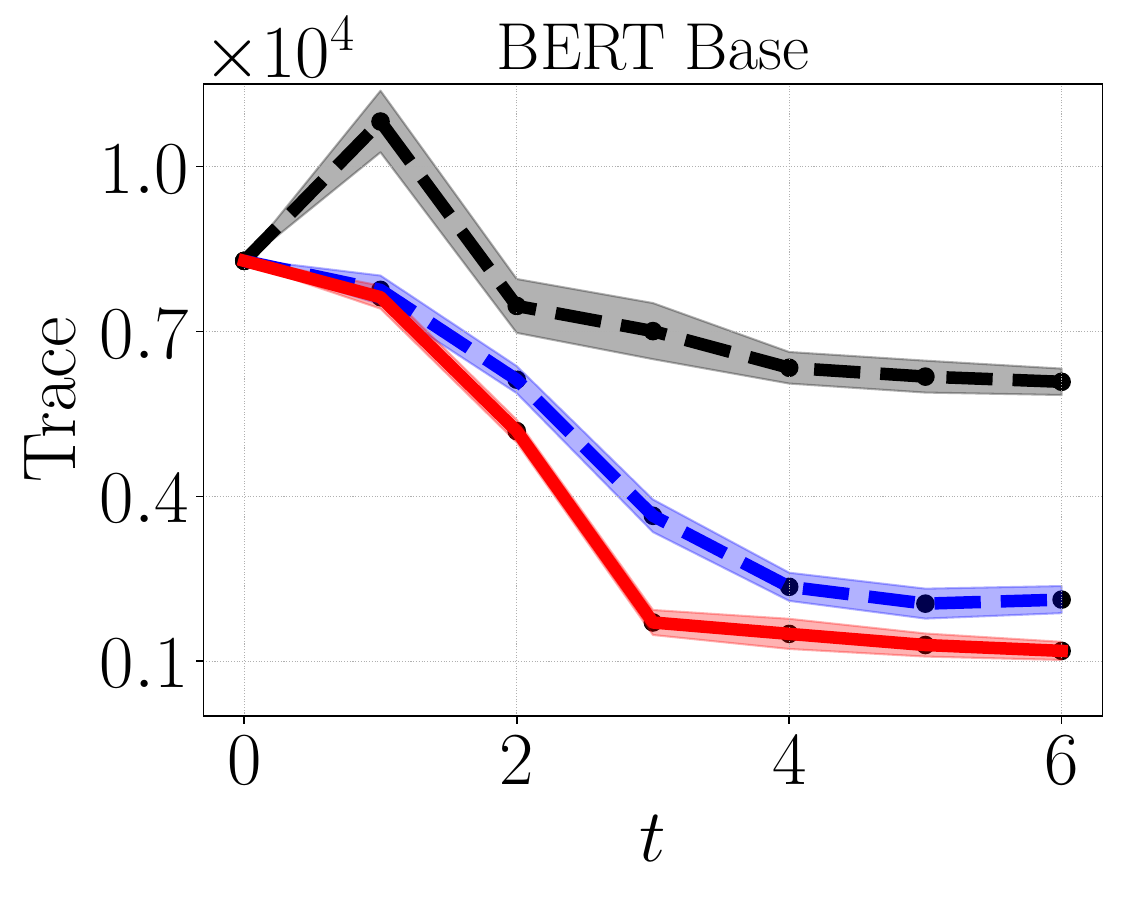}\hfill
        \includegraphics[width=0.30\textwidth]{./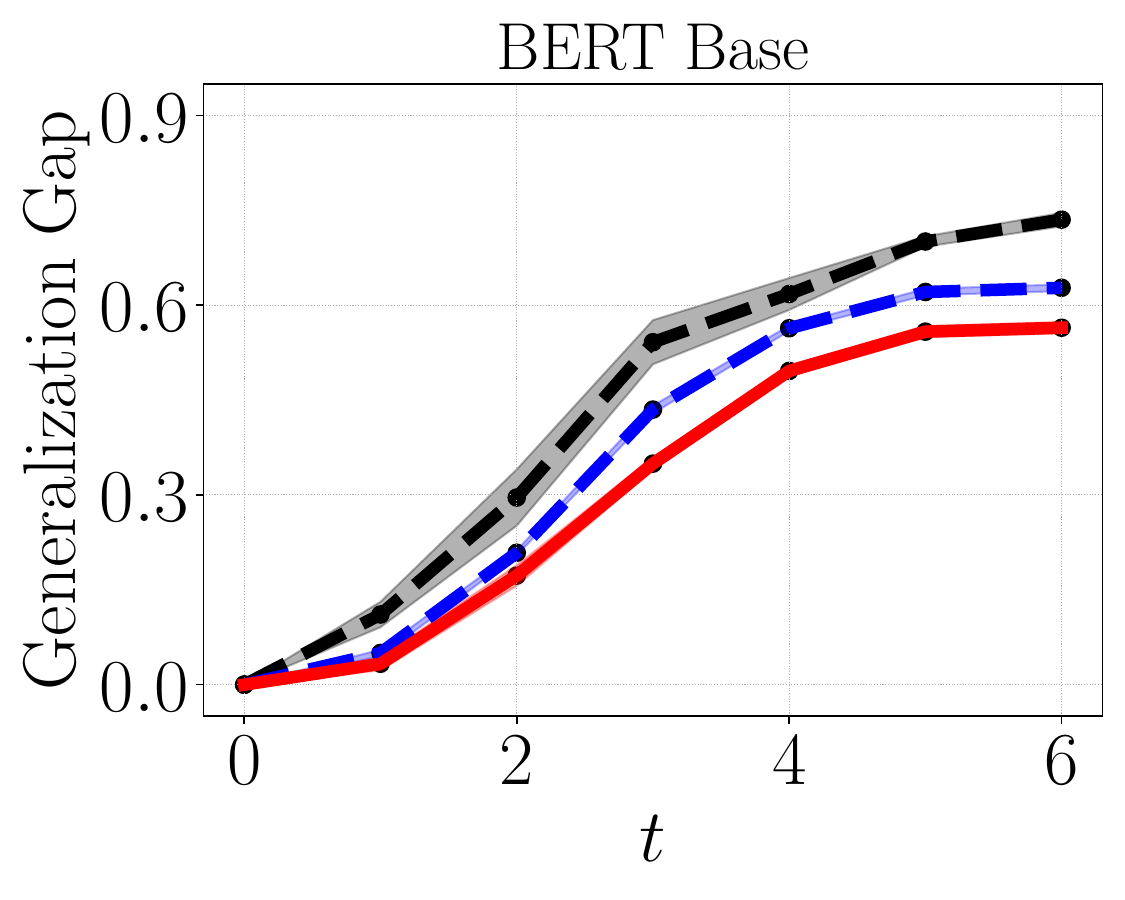}
    \end{subfigure}
    \caption{Comparison between SGD, WP-SGD, and NSO for fine-tuning ResNet-34 and BERT-Base, respectively, on an image and a text classification dataset. We evaluate the test loss, the trace of the Hessian, and the generalization gap for the trained model at each epoch. For WP-SGD and NSO, we sample noise from isotropic Gaussian with standard deviation $\sigma=0.01$ in both settings.}\label{fig_lower_trace}
\end{figure*}

As a remark, the regularization effect of noise injection should be orthogonal to training methods such as momentum, weight decay, learning rate scheduling, etc.
To this end, we performed comparisons without using either momentum or weight decay.
Our approach can again reduce the trace of the Hessian by \textbf{17.7}\% compared to the five sharpness-reducing methods on average, with up to \textbf{1.8}\% higher test accuracy.

\subsubsection{Ablation Analysis}

Next, we conduct ablation studies of two modifications in our approach: the use of negative perturbations and the sampling of multiple perturbations.

\paragraph{Comparing using negative cancellation or not after controlling computation costs:} Recall that our algorithm uses negative perturbations to zero out the first-order term in Taylor's expansion of $F(W)$. We validate this by comparing the performance between using or not using the negative perturbation. We control for the same amount of computation costs to ensure a fair comparison. In particular, we sample two independent perturbations and take their averaged stochastic gradient.
We find that using the negative perturbation achieves a \textbf{3.6}\% improvement in test accuracy (on average) over not using the negative perturbation, i.e., randomized smoothing.

As discussed in Section \ref{sec_measure}, our intuition on why NSO can be expected to generalize better than randomized smoothing is that it can better regularize the Hessian.
In particular, even though, in theory, the expectation of $f(W+U)$ and $\frac 1 2 (f(W+U) + f(W-U))$ over $U$ are both equal to $F(W)$.
However, the two-point scheme cancels out the gradient expansion term compared to randomized smoothing at every epoch.
More precisely, we believe that the improved regularization from our approach stems from its better estimate of the Hessian penalty term. As illustrated in Figure \ref{fig_lower_trace}, NSO consistently reduces the trace of the Hessian and achieves lower generalization errors compared to randomized smoothing throughout model training.  
At the end of the training, NSO yields \textbf{10.6}\% smaller trace of the Hessian on average than randomized smoothing. 

\paragraph{Increasing the number of noise injections $k$:} Recall that increasing the number of perturbations $k$ can reduce the variance of the estimated gradient. Thus, we consider increasing $k$ in NSO and compare that with a specific implementation of WP-SGD that uses the same amount of computation.
Using $k=2$ perturbations improves the test accuracy by \textbf{1.2}\% on average compared to $k=1$.

\paragraph{Varying the learning rate and the number of epochs.} We provide a detailed comparison between NSO and WP-SGD when varying the learning rate and the number of epochs. The learning rate is varied between $0.0001, 0.0002, 0.0005, 0.001, 0.002$, and $0.005$. The number of epochs is varied between $10, 20, 30, 40, 50$, and $60$. We report the test loss from running an image classification task in Figure \ref{tab_tuning_lr_and_epochs}. %
\begin{figure}[h!]
    \centering
    \begin{subfigure}[b]{.27\textwidth}
        \centering
		\includegraphics[width=0.99\textwidth]{./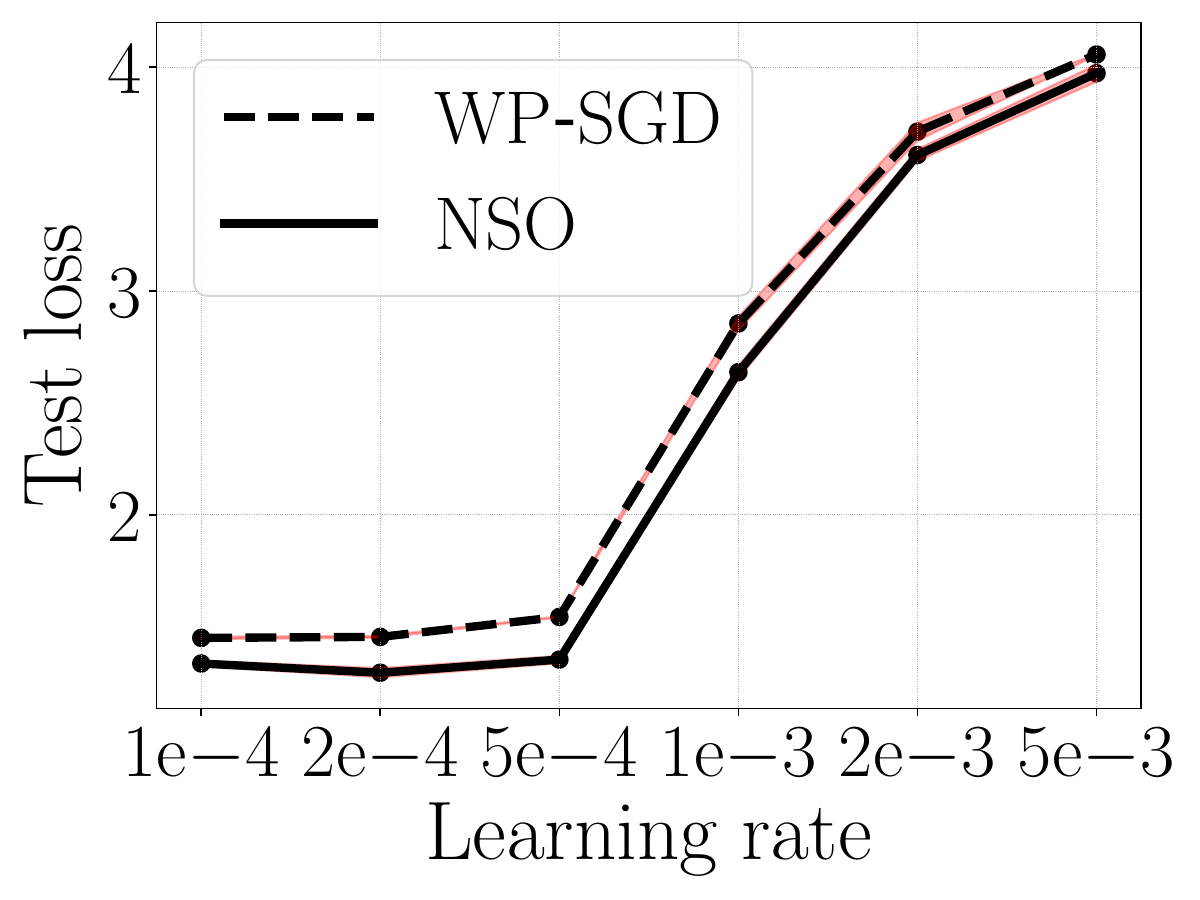}
    \end{subfigure}\hspace{0.1\textwidth}
    \begin{subfigure}[b]{.27\textwidth}
        \includegraphics[width=0.99\textwidth]{./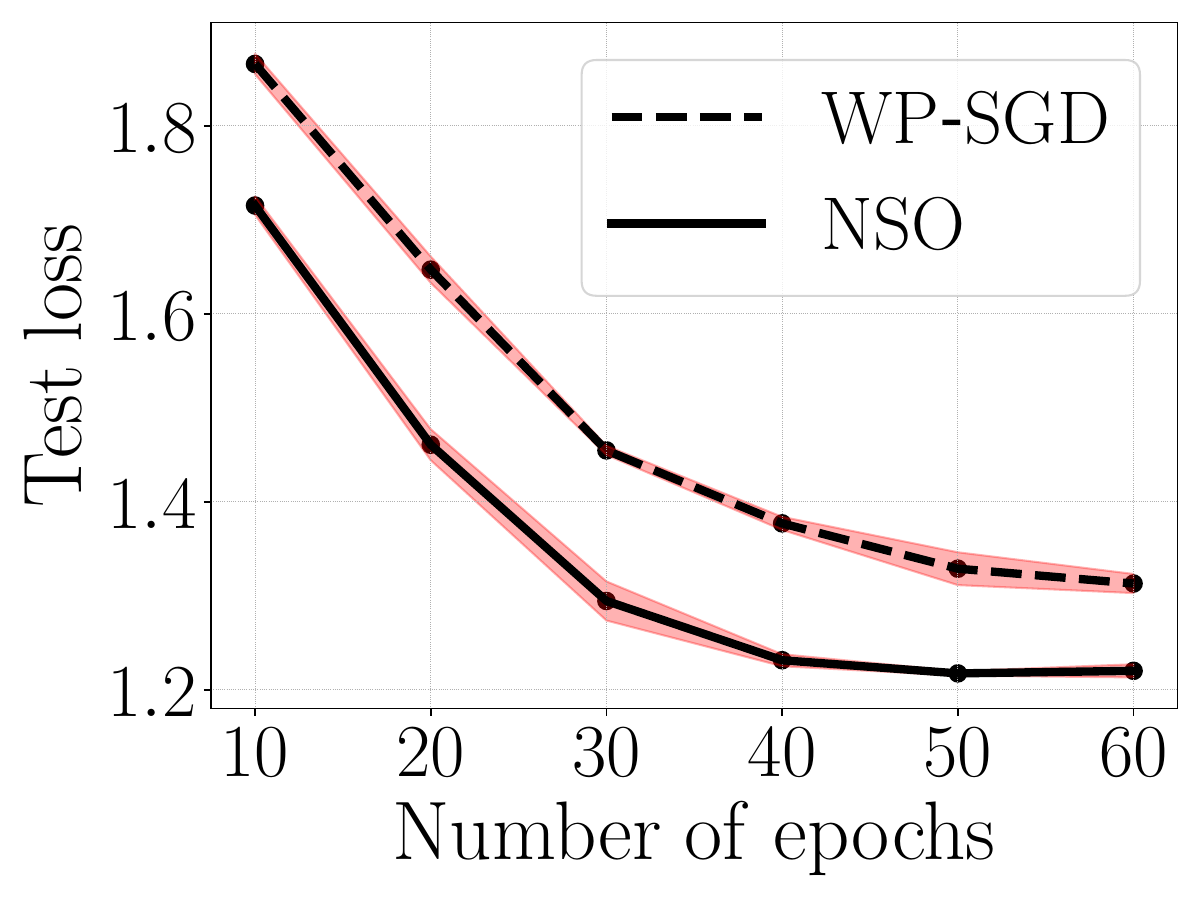}
    \end{subfigure}
    \vspace{-0.1in}
    \caption{\hl{Results of varying the learning rate and the number of epochs for running our approach and WP-SGD. We report the test loss from the last epoch and average the results over five random seeds.}}
    \label{tab_tuning_lr_and_epochs}
\end{figure}

\begin{remark}[Noise variance scheduling as $k$ increases] A natural question is whether one can gradually increase or decrease the regularization strength by $\sigma$ during training, similar to learning rate scheduling. 
To facilitate this discussion, we test two schedules for adjusting $\sigma$.
The first schedule is to increase $\sigma$ to a specified value at a linear rate.
The second schedule exponentially increases $\sigma$ to reach a specified value.
Our preliminary experiments show that neither schedule offers significant performance improvements over using a constant noise variance.
One might also consider other scheduling schemes; we leave this to future work.
\end{remark}

\subsubsection{Detailed Comparison with Sharpness-Aware Minimization (SAM)}

\textbf{Varying the radius of SAM:} We provide a detailed comparison to SAM by varying the perturbation radius of SAM (denoted as $\rho$).
To illustrate this comparison, we vary $\rho$ between $0.001, 0.002, 0.005, 0.01, 0.02,$ and $0.05$.
We report both the validation accuracy and the trace of the Hessian for SAM and unnormalized SAM on an image classification dataset.
We present the results in Table \ref{tab_tuning_radius}.
We observe that using a smaller $\rho$ (i.e., less than $0.01$) results in worse results.
Thus, we choose $\rho$ between $0.01, 0.02,$ and $0.05$ in our experiments.

\begin{table}[t!]
    \centering
    \caption{\hl{Results of varying the perturbation radius of SAM (denoted as $\rho$) and unnormalized SAM. 
    We report both the test accuracy and the trace of the Hessian based on the model trained at the last epoch. We report the averaged results and their standard deviations across five random seeds.}}
    \label{tab_tuning_radius}
    \resizebox{\columnwidth}{!}
    {\begin{tabular}{cccccccc}
        \toprule
         & $\rho$ & $0.001$ & $0.002$ & $0.005$ & $0.01$ & $0.02$ & $0.05$ \\
        \midrule
        \multirow{2}{*}{\shortstack{\textbf{Trace} \\ ($\downarrow$)}} & SAM & 4920 $\pm$ 158 & 4347 $\pm$ 166 & 4016 $\pm$ 80 & 3918 $\pm$ 94 & 3159 $\pm$ 75  & \textbf{3028} $\pm$ 78 \\
        & Unnormalized SAM  & 4352 $\pm$ 169 & 3990 $\pm$ 70 & 3723 $\pm$ 87 & 3427 $\pm$ 57 & 3072 $\pm$ 51   & \textbf{3048} $\pm$ 22\\
        \midrule
        \multirow{2}{*}{\shortstack{\bf{Test} \bf{Accuracy} \\ ($\uparrow$)}} & SAM & 73.6 $\pm$ 0.2 & 74.4 $\pm$ 0.4 & 74.8 $\pm$ 0.6 & 75.2 $\pm$ 0.3 & \textbf{76.6} $\pm$ 0.5 & 73.8 $\pm$ 0.7 \\
        & Unnormalized SAM & 74.1 $\pm$ 0.1 & 74.1 $\pm$ 0.7 & 74.7 $\pm$ 0.5 & 74.6 $\pm$ 0.3 & \textbf{76.3} $\pm$ 0.3 & 73.1 $\pm$ 0.6 \\ 
        \bottomrule
    \end{tabular}}
\end{table}

\paragraph{Varying the batch size of SAM:} Next, we measure the sensitivity of our approach concerning the batch size.
In particular, we vary the batch size between 8, 16, 32, and 64 for fine-tuning ResNet-34 on two image classification datasets.
The results are shown in the leftmost two panels of Figure \ref{tab_tuning_batch_size}.
We use the same number of epochs for each batch size configuration to ensure a fair comparison.
On the indoor dataset, our approach is less sensitive to different batch sizes than SAM.
Across all the batch sizes and datasets, our approach consistently provides a more robust regularization of the Hessian compared to SAM.
The best results are achieved when the batch size is $32$. Thus, we use this particular setting in our experiments.

\begin{figure}[h!]
    \centering
    \begin{subfigure}[b]{.245\textwidth}
        \centering
		\includegraphics[width=0.99\textwidth]{./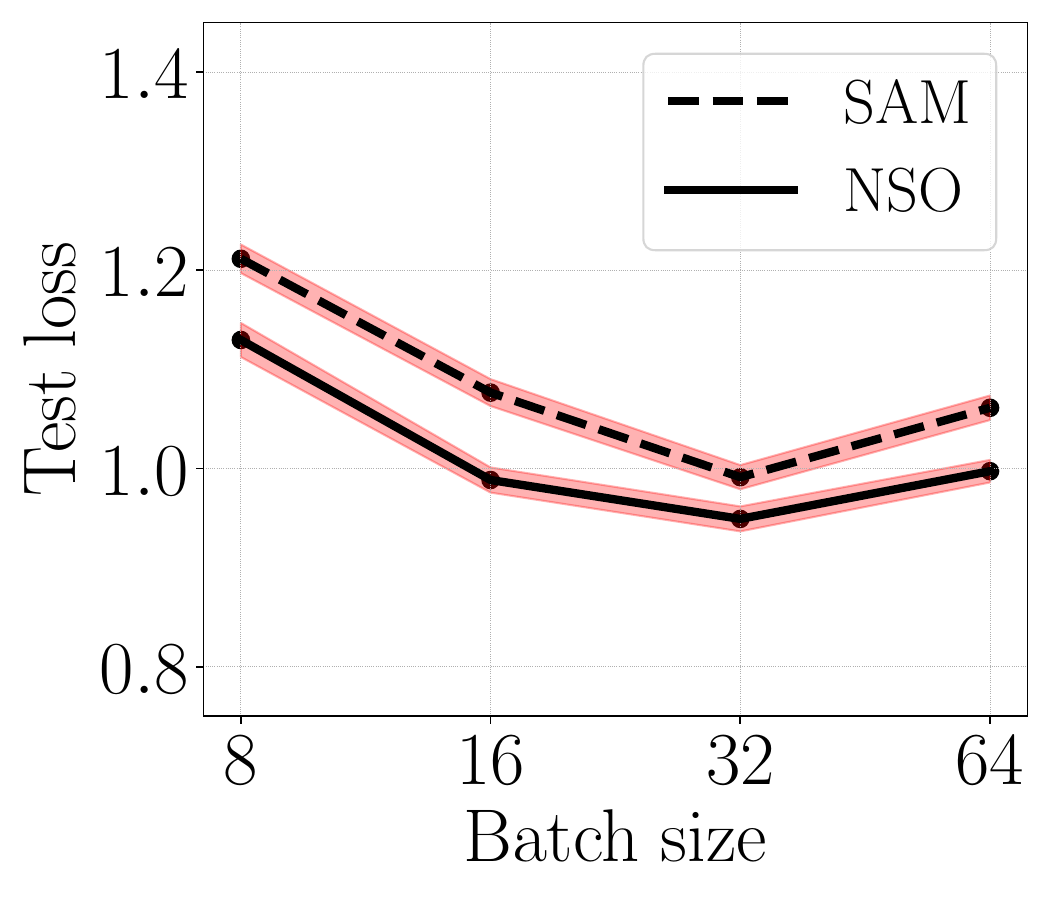}
        \caption{Loss on Indoor dataset}\label{fig_loss_ind}
    \end{subfigure}\hfill
    \begin{subfigure}[b]{.245\textwidth}
        \centering
		\includegraphics[width=0.99\textwidth]{./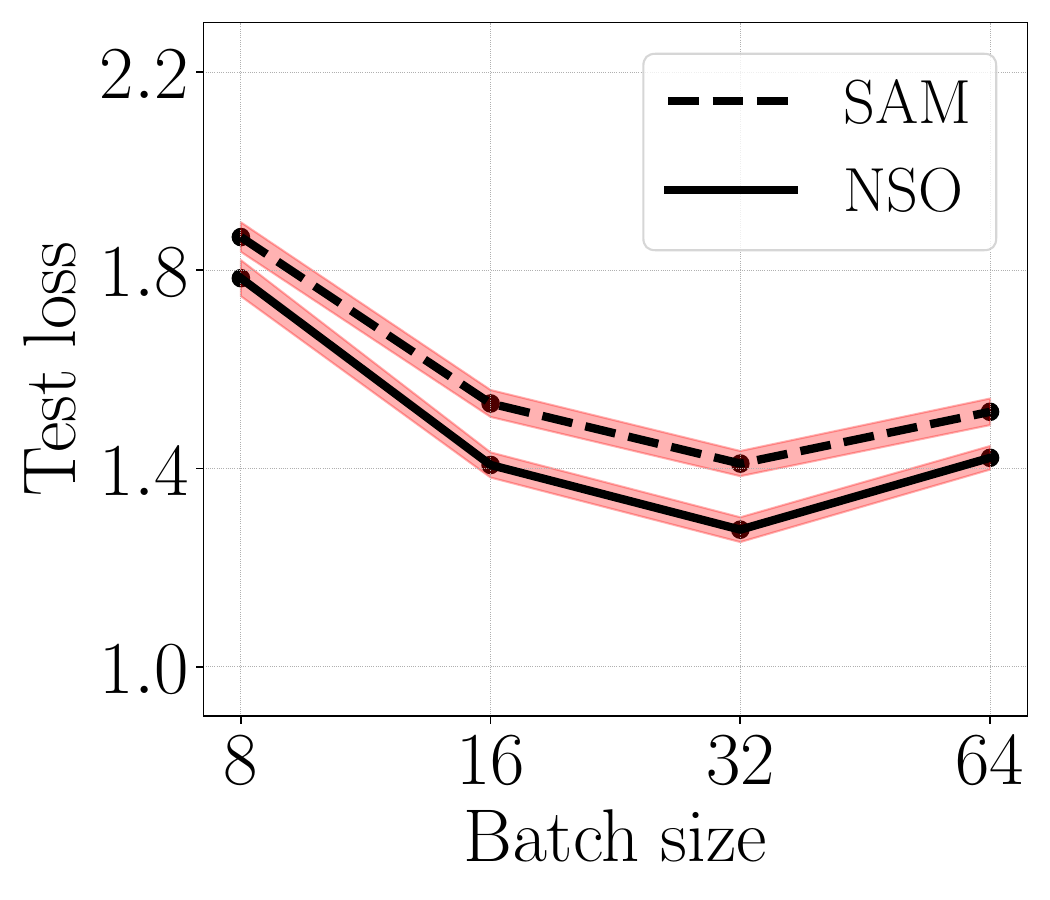}
        \caption{Loss on Aircraft dataset}
    \end{subfigure}\hfill
    \begin{subfigure}[b]{.245\textwidth}
        \centering
		\includegraphics[width=0.99\textwidth]{./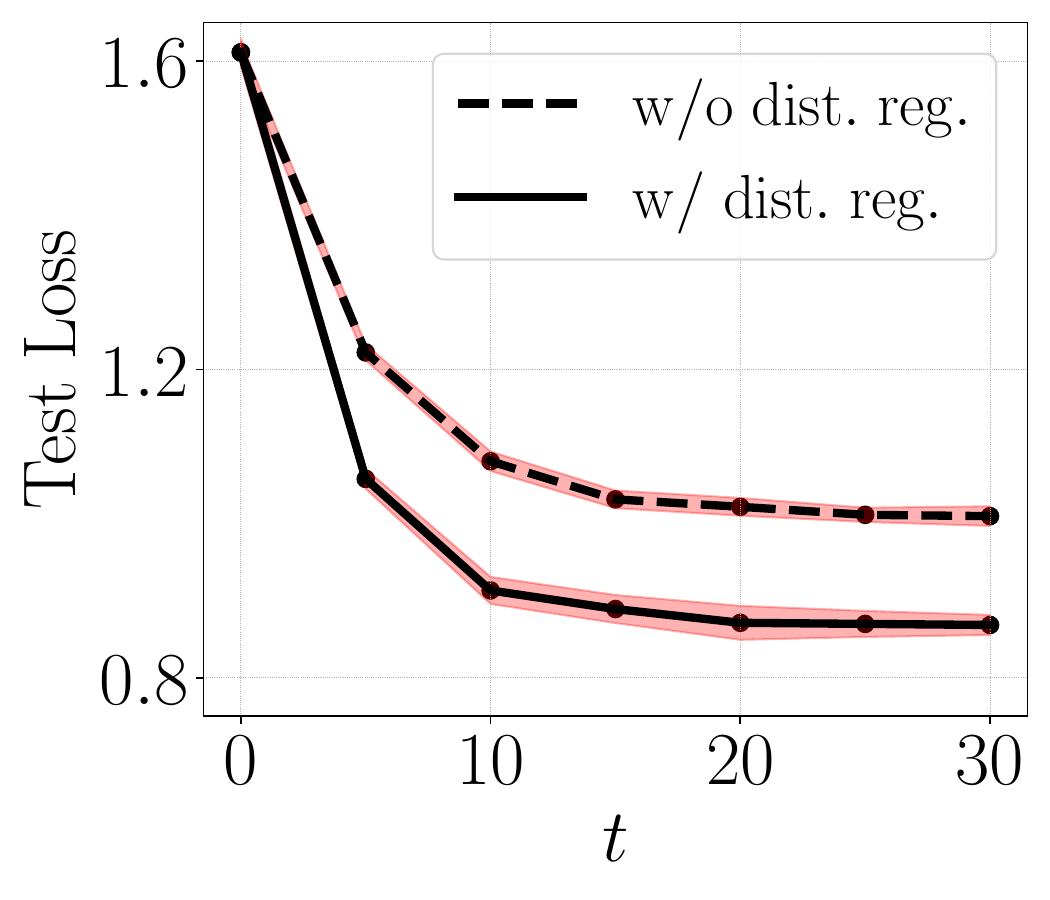}
        \caption{Test loss, w/ dist. reg.}
    \end{subfigure}\hfill    
    \begin{subfigure}[b]{.245\textwidth}
        \centering
        \includegraphics[width=0.99\textwidth]{./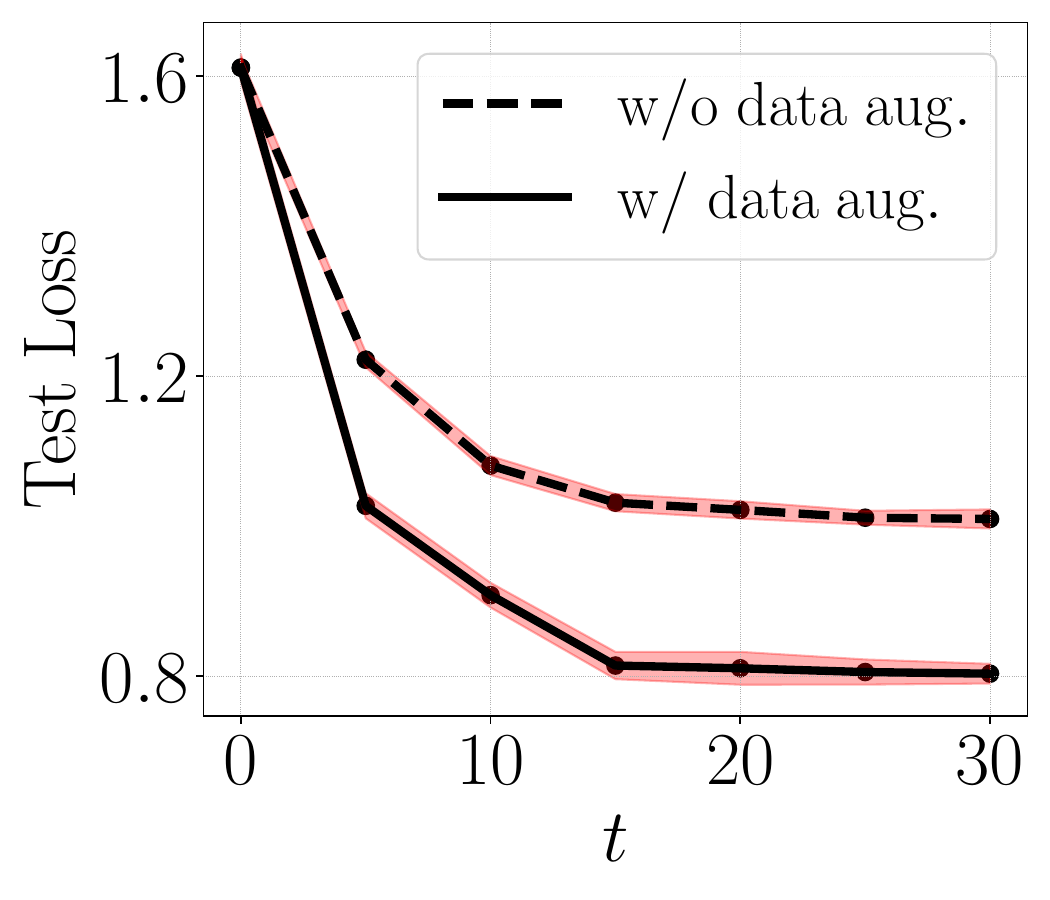}
        \caption{Test loss, w/ data aug.}\label{fig_loss_da}
    \end{subfigure}\hfill
    \begin{subfigure}[b]{.245\textwidth}
        \centering
        \includegraphics[width=0.99\textwidth]{./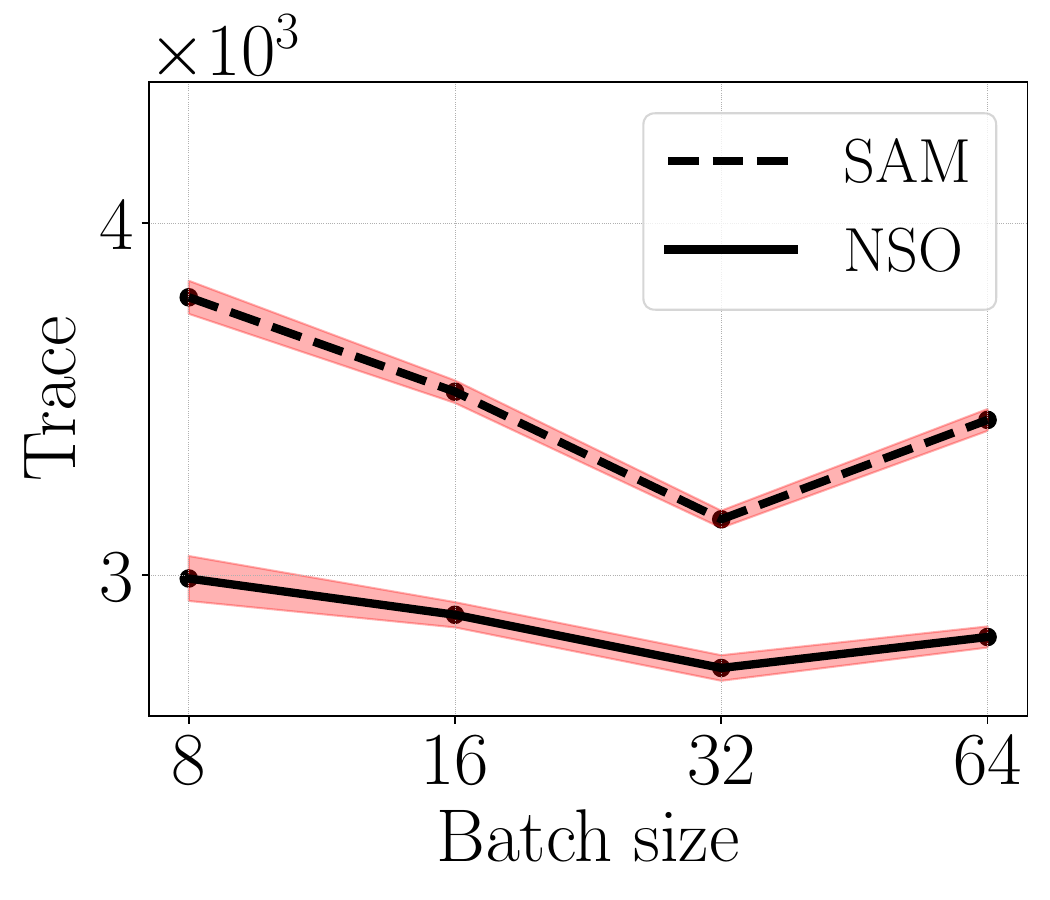}
        \caption{Hessian, Indoor dataset}\label{fig_hes_indoor}
    \end{subfigure}\hfill
    \begin{subfigure}[b]{.245\textwidth}
        \includegraphics[width=0.99\textwidth]{./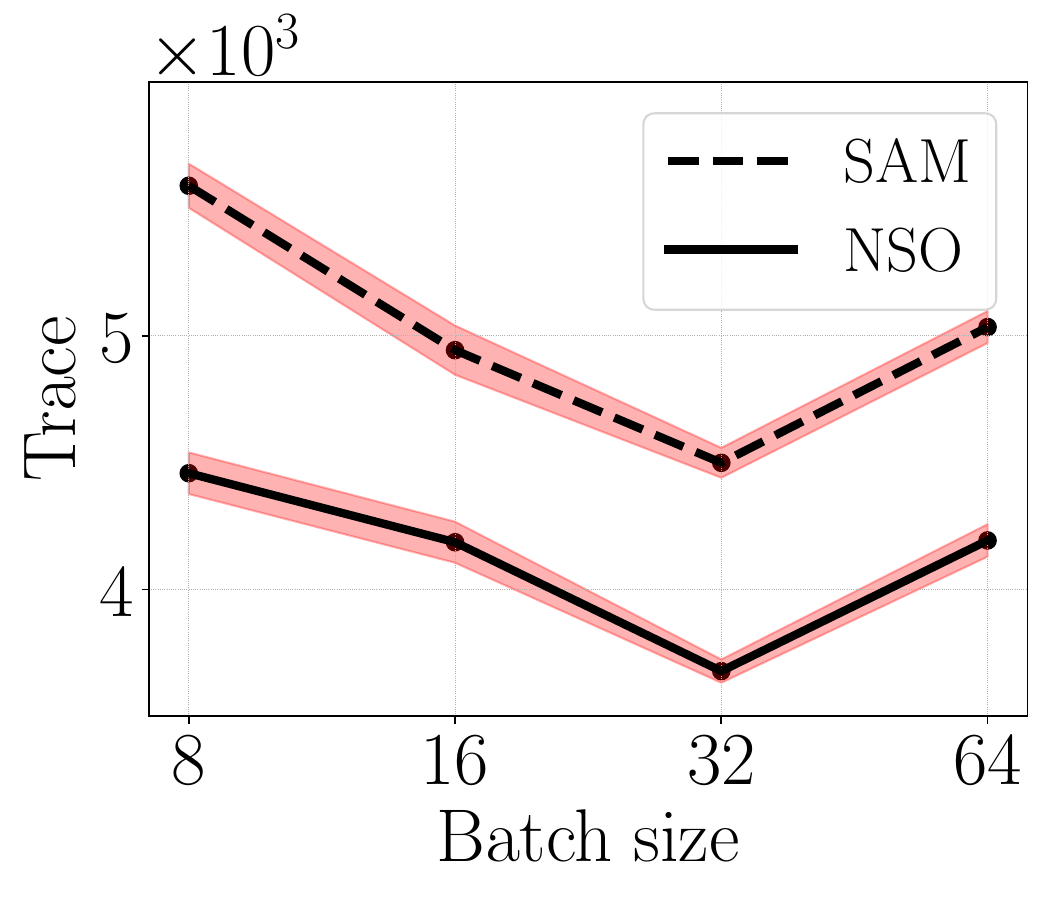}
        \caption{Hessian, Aircraft dataset}
    \end{subfigure}
    \begin{subfigure}[b]{.245\textwidth}
        \centering
		\includegraphics[width=0.99\textwidth]{./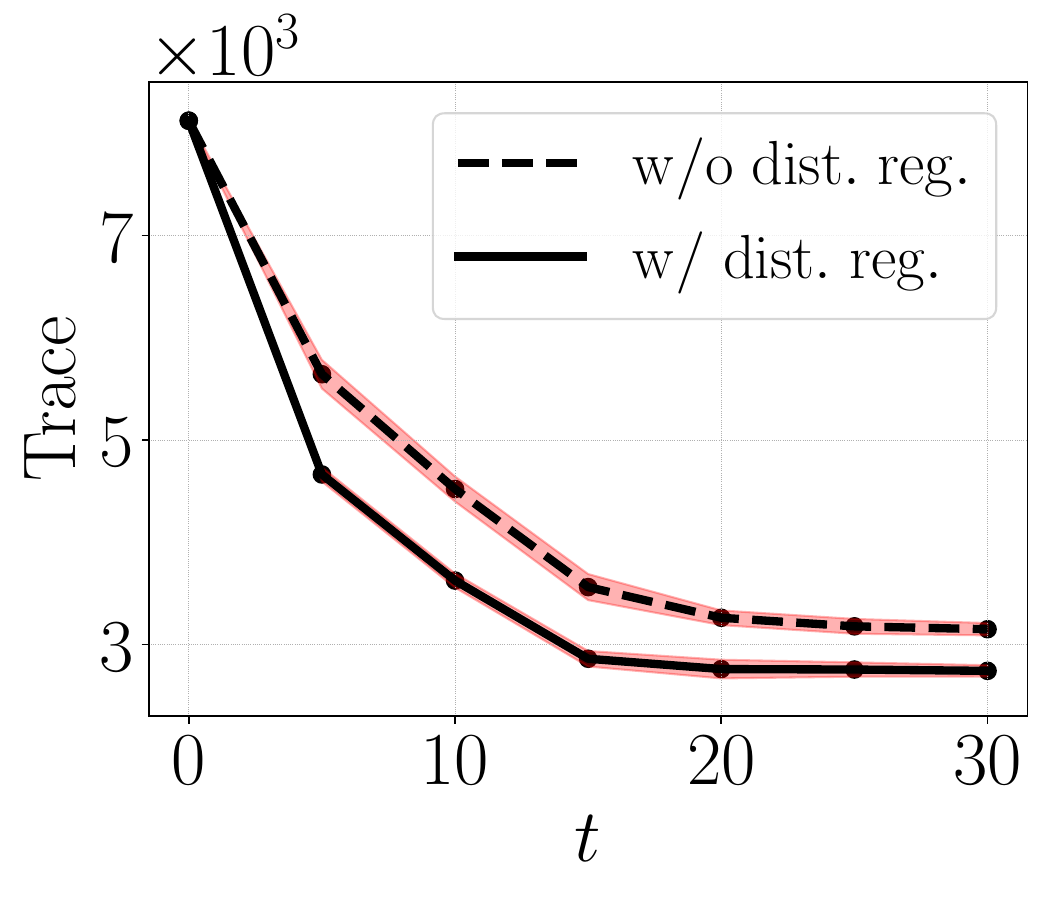}
        \caption{Hessian, w/ dist. reg.}
    \end{subfigure}\hfill
    \begin{subfigure}[b]{.245\textwidth}
        \includegraphics[width=0.99\textwidth]{./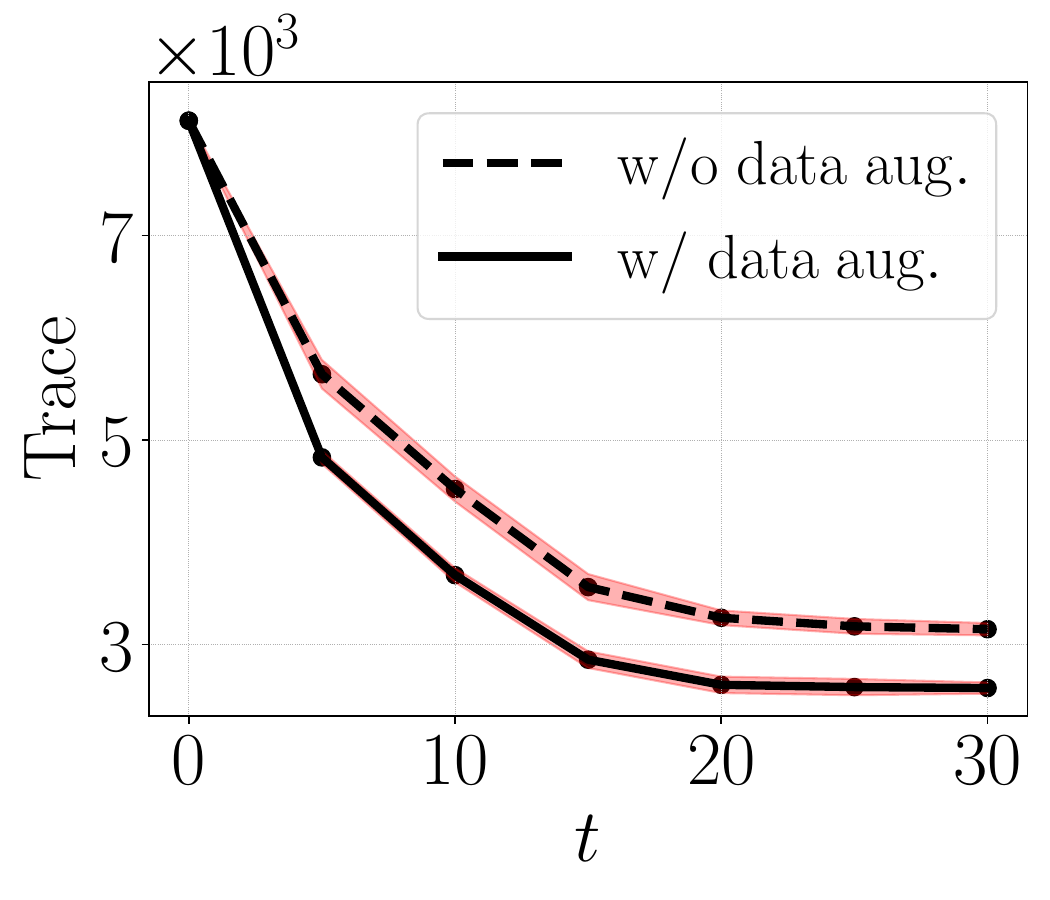}
        \caption{Hessian, w/ data aug.}\label{fig_hes_wda}
    \end{subfigure}
    \caption{\hl{Results of varying the batch size of our approach and SAM ran on two image classification datasets (indoor scene recognition and Aircraft detection). We report the test loss and the trace of Hessian using the model from the last epoch of training. The results are averaged over five random seeds.}
    The regularization provided by noise injection can be combined with distance-based regularization and data augmentation to reduce the test loss and the Hessian trace.}
    \label{tab_tuning_batch_size}    
\end{figure}

\subsubsection{Combining Algorithm \ref{alg:two_point} with Alternative Regularization Methods}

In this section, we show that the regularization of the Hessian can serve as a complement to existing, alternative regularization methods.
To validate this, we combine our training approach with data augmentation and distance-based regularization \citep{gouk2021distance}.
In particular, the latter approach has been used to regularize fine-tuning algorithms.
We use a popular scheme for data augmentation that applies random horizontal flipping and random cropping sequentially to each training image.
As for distance-based regularization, we penalize the $\ell_2$ distance between the fine-tuned model and the pretrained initialization.

The results are shown in Figure \ref{tab_tuning_batch_size} within the two rightmost panels. Combining our approach with each regularization method further reduces the trace of the loss Hessian matrix by \textbf{13.6}\% (on average).
This further leads to \textbf{16.3}\% lower test loss of the fine-tuned network, suggesting that our approach can be used on top of these preexisting regularization methods.

\subsection{Applying Algorithm \ref{alg:two_point} to Pretraining and Fine-tuning}\label{sec_pretrain}

We apply our approach to pretraining randomly initialized models by replacing SGD to train contrastive language-image (CLIP) models on a dataset of image-caption pairs.
In particular, we use the Conceptual Caption dataset, which contains 3.3 million image caption pairs. Each caption briefly describes the corresponding image, with ten tokens on average. We use a 12-layer Vision Transformer as the image encoder and a 12-layer GPT-2 transformer as the text encoder. We train the encoders jointly to maximize the cosine similarity between the embedding of image caption pairs following the protocol of \citet{radford2021learning}.

Table \ref{tab_pretrainig_clip} presents the results. For each algorithm, we evaluate the trace of the loss Hessian and recall scores (of the top-10 scored images in retrieving images from texts) on the development set. The results show that our approach can reduce the trace of the Hessian by \textbf{17}\% compared to both SAM and SGD. In addition, our approach achieves \textbf{1.4}\% higher recall scores in image retrieval.

Lastly, we apply our algorithm to fine-tuning pretrained language models on chain-of-thought reasoning datasets. The task is to generate the reasoning process, i.e., a chain of thoughts and the answer for a given commonsense reasoning question. We fine-tune pretrained GPT-2 models on two question-answering datasets: Commonsense QA and Strategy QA.
Table \ref{tab_pretrainig_clip} shows that our approach can yield \textbf{25}\% lower trace values than SAM and SGD.
In addition, we can obtain \textbf{5.3}\% higher test accuracy.

\begin{table}[h!]
    \centering
    \caption{\hl{Results for pretraining CLIP  the Conceptual Caption and chain-of-thought fine-tuning on Commonsense/Strategy QA. We report the recall score of image retrieval/test accuracy and trace/$\lambda_1$ using the model at the last epoch. We report the averaged results and standard deviations over five random seeds.}}
    \label{tab_pretrainig_clip}
    {\begin{tabular}{cccccccc}
        \toprule
        Conceptual Caption & \shortstack{\textbf{Trace} ($\downarrow$)} & \shortstack{{$\bm{\lambda_1}$} ($\downarrow$)} & \shortstack{\bf{Recall@10} ($\uparrow$)} \\
        \midrule
        SGD & 220 $\pm$ 24 & 41 $\pm$ 2.8 & 36.1\% $\pm$ 0.3\\
        SAM & 144 $\pm$ 20 & 30 $\pm$ 1.1 & 36.9\% $\pm$ 0.4\\
        \textbf{NSO} & \textbf{119} $\pm$ 34 & \textbf{22} $\pm$ 1.2 & \textbf{37.5}\% $\pm$ 0.3\\
        \midrule
        CommonsenseQA & \shortstack{\textbf{Trace} ($\downarrow$)} & \shortstack{{$\bm{\lambda_1}$} ($\downarrow$)} & \shortstack{\bf{Test Accuracy} ($\uparrow$)} \\
        \midrule
        SGD & 372 $\pm$ 34 & 19 $\pm$ 0.8 & 27.7\% $\pm$ 1.8 \\
        SAM & 288 $\pm$ 15 & 15 $\pm$ 0.3 & 32.7\% $\pm$ 1.4 \\
        \textbf{NSO} & \textbf{208} $\pm$ 31 & \textbf{13} $\pm$ 0.6 & \textbf{39.2}\% $\pm$ 1.4\\ \midrule
        StrategyQA & \shortstack{\textbf{Trace} ($\downarrow$)} & \shortstack{{$\bm{\lambda_1}$} ($\downarrow$)} & \shortstack{\bf{Test Accuracy} ($\uparrow$)} \\
        \midrule
        SGD & 294 $\pm$ 13 & 44 $\pm$ 1.5 & 68.9\% $\pm$ 1.0 \\
        SAM & 249 $\pm$ 33 & 42 $\pm$ 2.6 & 71.1\% $\pm$ 1.2 \\
        \textbf{NSO} & \textbf{193} $\pm$ 31 & \textbf{33} $\pm$ 1.8 & \textbf{75.2}\% $\pm$ 1.2 \\
        \bottomrule        
    \end{tabular}}
\end{table}

\section{Convergence Rates}\label{sec_converge}

We now study the convergence of Algorithm \ref{alg:two_point}.
Recall that our algorithm minimizes $f(W)$ plus a regularization term on the Hessian trace.
As is typical with regularization, the penalty is usually small relative to the loss value.
Thus, we aim to find a stationary point of $F(W)$ instead of $f(W)$ because otherwise, we would not have the desired regularization.
We state the convergence to an approximate stationary point such that $\bignorm{\nabla F(W)}$ is small, building on the following gradient oracle assumption (see, e.g., \citet{ghadimi2013stochastic,duchi2015optimal}).

\smallskip
\begin{assumption}\label{ass_oracle}
    Given a random seed $z$, let $g_z: \real^d \rightarrow \real^d$ be a continuous function that gives an unbiased estimate of the gradient: $\exarg{z}{g_z(W)} = \nabla f(W)$, for any $W \in \real^d$.
    Additionally, the variance is bounded in the sense that %
      $\exarg{z}{\bignorm{g_z(W) - \nabla f(W)}^2} \le \sigma^2.$ 
\end{assumption}
\hl{To help understand the above assumption, suppose there is a dataset of size $n$.
Then, in SGD, the stochastic gradient would be an unbiased estimate of the gradient of the entire dataset.
As for the variance of the gradient estimator, we note that as long as the $\ell_2$ norm of the gradient remains bounded, which will always hold in practice, then the last equation of the above assumption will hold.}
We now state an upper bound on the convergence rate of Algorithm \ref{alg:two_point}.
\begin{proposition}\label{thm_main_ub}
    Suppose Assumption \ref{ass_oracle} holds.
    Let $\cP$ be a distribution that is symmetric at zero and     let $H(\cP) = \mathbb{E}[\norm{U}^2]$.
    Let $C$ and $D$ be fixed, positive constants.
    Let $W_0\in\real^d$ denote an arbitrary initialization.
    Suppose $F(W_0) - \min_{\small W\in \real^d} F(W) \le D^2,$ and $\nabla f$ is $C$-Lipschitz continuous.
    There exists a fixed learning rate $\eta< C^{-1}$ such that if we run Algorithm \ref{alg:two_point} with $\eta_i = \eta$ for all $i$ for $T$ steps, the algorithm returns $W_t$ (where $t$ is a random integer between $1, 2, \dots, T$), such that in expectation over the randomness of $W_t$:
    {\begin{align}\label{eq_thm_grad}
        \ex{\bignorm{\nabla F(W_t)}^2} \le \sqrt{\frac{2 C D^2 (\sigma^2 + C^2 H(\cP)) } {k T}} + \frac {2 C D^2} T.
    \end{align}}%
\end{proposition}
As a remark, existing sharpness-reducing methods such as SAM seem to suffer from oscillation around the local basin \citep{bartlett2022dynamics}.
Thus, the convergence behavior of SAM seems challenging to analyze for nonconvex functions.
By contrast, our algorithm is amenable to stochastic optimization techniques.
Our proof slightly extends the proof of Theorem 2.1, \cite{ghadimi2013stochastic}, to tackle noise injection and other variations.
For details, see Appendix \ref{proof_main}.

\paragraph{Lower bounds:} Next, we construct an example to match the rate of equation \eqref{eq_thm_grad}, essentially showing that this is tight under the same set of assumptions.
We use an example from the work of \citet{drori2020complexity}.
The difference is that we have to deal with the perturbations added to the objective.
For $t = 0, 1, \dots, d-1$, let $e_t \in \real^d$ be the basis vector in dimension $d$, whose $t$-th coordinate is $1$, while the remaining coordinates are all zero.
Let $f: \real^d \rightarrow \real$ be defined as
{\begin{align}
    f(W) = \frac 1 {2G}  \langle W, e_0 \rangle^2 + \sum_{i=0}^{T-1} h_i(\langle W, e_{i+1} \rangle), \label{eq_def_f}
\end{align}}%
where $h_i$ is a piece-wise quadratic function parameterized by $\alpha_i$, defined as follow:
{\begin{equation*}
        h_i(x) = \left \{
        \begin{array}{ll}
            \frac {C\alpha_i^2} 4   &|x|\leq \alpha_i,\\
            - \frac {C\big(|x| - \alpha_i\big)^2} 2  + \frac{C\alpha_i^2} 4     &\alpha_i \leq |x| \leq \frac 3 2 \alpha_i,\\
            \frac {C\big(|x| - 2\alpha_i\big)^2} 2   &\frac 3 2 \alpha_i \leq |x| \leq 2\alpha_i,\\
            0    &2\alpha_i \leq |x|.
        \end{array}
        \right.
\end{equation*}}%
One can verify that for each piece above, $\nabla h_i$ is $C$-Lipschitz. %
As a result, provided that $G\le C^{-1}$,  $\nabla f$ is $C$-Lipschitz, based on the definition of $f$ in equation \eqref{eq_def_f}.

The stochastic function $F$ requires setting the perturbation distribution $\cP$.
We set $\cP$ by truncating an isotropic Gaussian $N(0, \sigma^2\id_d)$ so that the $i$-th coordinate is at most $2^{-1}\alpha_{i-1}$, for $i = 1, \dots, T$.
Additionally, we set the initialization $W_0$ to satisfy $\langle W_0, e_i \rangle = 0$ for any $i\ge 1$ while $\langle W_0, e_0 \rangle \neq 0$.
Finally, we choose the gradient oracle to satisfy that the $i$-th step's gradient noise $\xi_i = \langle \xi_i, e_{i+1} \rangle e_{i+1}$, which means that $\xi_i$ is along the direction of the basis vector $e_{i+1}$. In particular, this implies only coordinate $i+1$ is updated in step $i$, as long as $\langle \xi_i, e_{i+1} \rangle \le 2^{-1} \alpha_i$.
With this construction, we state the lower bound below.

\begin{theorem}\label{thm_main_lb}
    Let the learning rates $\eta_0, \dots, \eta_{T-1}$ be at most $C^{-1}$.
    Let $D > 0$ be a fixed value.
    When either $\sum_{i=0}^{T-1} \eta_i \lesssim \sqrt{k T}$,
    or $\eta_i = \eta < C^{-1}$ for any epoch $i$, then  
    for the above construction, the following must hold
    {\begin{align}\label{eq_lb_grad}
        \min_{1\leq t\leq T}\ex{\bignorm{\nabla F(W_t)}^2} \geq D \sqrt{\frac{C \sigma^2}{32 k\cdot T}}.
    \end{align}}%
\end{theorem}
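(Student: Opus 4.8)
The plan is to adapt the lower-bound construction of \citet{drori2020complexity} to the two-point perturbed iteration, treating the injected perturbation $U$ and the oracle noise $\xi_i$ as the adversary's two levers. The instance itself is already pinned down by equation \eqref{eq_def_f}: a slowly-decaying quadratic along $e_0$ chained to a sequence of ``bump'' coordinates $h_i$ along $e_{i+1}$, with the truncated Gaussian $\cP$ and the noise $\xi_i$ both aligned with $e_{i+1}$. The goal is to show that, whatever the admissible step sizes do, some iterate is forced to retain a gradient of order $\sqrt{C\sigma^2 D^2/(kT)}$, matching the leading $\sqrt{\sigma^2/(kT)}$ term of Theorem \ref{thm_main_ub}.

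First I would establish the zero-chain (progress-limiting) property. The truncation $|\langle U, e_i\rangle|\le 2^{-1}\alpha_{i-1}$ guarantees that whenever coordinate $i+1$ of the current iterate sits on the plateau $|x|\le\alpha_i$, both perturbed points $W\pm U$ remain inside $|x|\le\alpha_i$, so $h_i'$ vanishes there and the smoothed partial derivative $\langle\nabla F(W),e_{i+1}\rangle=\ex{h_i'(\langle W,e_{i+1}\rangle+\langle U,e_{i+1}\rangle)}$ is exactly zero. Moreover, since $h_i'$ is zero on the plateau for both $+U$ and $-U$, the symmetric two-point average contributes nothing either, so coordinate $i+1$ does not move until step $i$, where it is displaced solely by the injected noise, giving the clean recursion $\langle W_{i+1},e_{i+1}\rangle=-\eta_i\langle\xi_i,e_{i+1}\rangle$. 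The quadratic coordinate is even cleaner: because its gradient is linear, the two-point average cancels $U$ exactly, so $\langle W_{i+1},e_0\rangle=(1-\eta_i/G)\langle W_i,e_0\rangle$ evolves deterministically. Thus $W_t$ is supported on coordinates $0,\dots,t$, and its gradient splits across these two mechanisms.

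The core of the argument is to lower bound $\ex{\bignorm{\nabla F(W_t)}^2}$ at \emph{every} $t$ by the larger of two contributions. The quadratic term gives $G^{-2}\langle W_0,e_0\rangle^2\prod_{i<t}(1-\eta_i/G)^2$, which stays bounded below precisely when the step sizes are small and the product does not collapse; this is where the hypotheses $\sum_i\eta_i\lesssim\sqrt{kT}$ (resp.\ $\eta_i=\eta<C^{-1}$) enter, since they control the cumulative contraction. The bump terms give $\sum_{i<t}\big(\ex{h_i'(-\eta_i\langle\xi_i,e_{i+1}\rangle+\langle U,e_{i+1}\rangle)}\big)^2$; here I would choose each width $\alpha_i$ so that the noise-driven displacement lands in the sloped band $\alpha_i<|x|<2\alpha_i$ where $|h_i'|\sim C\alpha_i$, and take the worst-case noise distribution saturating the budget $\ex{\langle\xi_i,e_{i+1}\rangle^2}=\sigma^2/k$ furnished by Lemma \ref{lemma_sd}. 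This is exactly what injects the factor $\sigma^2/k$, hence the $k$ in the denominator. Finally I would split the suboptimality budget $F(W_0)-\min_W F(W)=\tfrac1{2G}\langle W_0,e_0\rangle^2+\sum_i\tfrac{C\alpha_i^2}{4}\le D^2$ between the two mechanisms and optimize over the admissible $G$ and $\{\alpha_i\}$, arranging that for every $t$ at least one contribution exceeds $D\sqrt{C\sigma^2/(32kT)}$; taking the minimum over $t$ then yields \eqref{eq_lb_grad}.

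The main obstacle is the interaction between the two randomness sources, which is absent from \citet{drori2020complexity}. Concretely, I must verify that averaging $h_i'$ against the truncated perturbation $U$ does not wash out the sloped-region gradient, i.e.\ that $\ex{h_i'(Y+\langle U,e_{i+1}\rangle)}$ retains magnitude of order $C\alpha_i$ once $Y$ is kicked into the band; this is what dictates the truncation level $2^{-1}\alpha_{i-1}$. The second delicate point is uniformity over $t$: the coordinate-$0$ contribution is decreasing in $t$ while the accumulated bump contribution is increasing, so the choices of $G$, $\{\alpha_i\}$, and the budget split must be arranged so that their crossover value, rather than either endpoint, meets the target rate, and it is precisely at this crossover that the learning-rate hypotheses are tight. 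Matching the explicit constant $1/\sqrt{32}$ is then a careful but routine bookkeeping of these choices, handled identically in the constant-step and $\sum_i\eta_i\lesssim\sqrt{kT}$ cases.
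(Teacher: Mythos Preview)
Your plan departs from the paper's in a crucial way, and that departure introduces a gap. In the paper's argument for the small-step-size regime (Lemma \ref{lemma_lb_grad_1}), the widths are chosen as $\alpha_i = 2\eta_i\sigma/\sqrt{k}$ precisely so that the noise displacement $\eta_i\langle\bar\xi_i,e_{i+1}\rangle$ together with the truncated perturbation $\langle U,e_{i+1}\rangle$ never leaves the plateau $|x|\le\alpha_i$. Consequently $h_i'$ is \emph{always} zero along the trajectory, and the bump coordinates contribute nothing to $\nabla F(W_t)$; the entire lower bound comes from the $e_0$ coordinate alone. The role of the bump functions is only to eat the suboptimality budget: the constraint $\sum_i \tfrac{C\alpha_i^2}{4}\le \tfrac{D^2}{2}$ becomes $\sum_i\eta_i^2\le \tfrac{kD^2}{2C\sigma^2}$, which via Cauchy--Schwarz forces $\sum_i\eta_i\le\sqrt{kTD^2/(2C\sigma^2)}$ and hence bounds $G^{-1}$ below.

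You instead want to choose $\alpha_i$ so that the kick lands in the \emph{sloped} band and the bump coordinates contribute to the gradient. But then your clean recursion $\langle W_t,e_{i+1}\rangle=-\eta_i\langle\xi_i,e_{i+1}\rangle$ fails for $t>i+1$: once coordinate $i+1$ sits in the sloped region, $h_i'$ is nonzero there, so at step $i+1$ (and every subsequent step) that coordinate moves under gradient descent on $h_i$ toward the minimum at $|x|=2\alpha_i$. Your claimed bump contribution $\sum_{i<t}\big(\ex{h_i'(-\eta_i\langle\xi_i,e_{i+1}\rangle+\langle U,e_{i+1}\rangle)}\big)^2$ presumes the coordinate stays frozen at its first-kick position, which it does not. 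Your ``increasing bump contribution'' assertion is therefore unsupported, and the crossover balancing you sketch cannot be carried out as stated.

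There is a second structural difference: the paper does not handle both step-size regimes with a single construction. For the constant-step, large-$\eta$ case (Lemma \ref{lemma_lb_grad_2}) it switches to a \emph{different} function, $f(W)=g(\langle W,e_0\rangle)+\tfrac{C}{2}\sum_{t}\langle W,e_{t+1}\rangle^2$, whose non-$e_0$ coordinates are pure quadratics; there the noise accumulates linearly (since the two-point average of a linear gradient is exact) and the lower bound follows from the accumulated variance $\tfrac{\sigma^2}{k}\cdot\tfrac{\rho}{2-\rho}$. Your proposal to cover both regimes with the single chain-of-bumps instance would require a genuinely new argument.
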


We remark that the above construction requires $T \le d$. Notice that this is purely for technical reasons. %
We briefly illustrate the key steps of the proof.
At step $i$, the gradient noise $\xi_i$ plus the perturbation noise is less than $2^{-1}\alpha_i + 2^{-1} \alpha_i = \alpha_i$ at coordinate $i+1$ (by triangle inequality).
Thus, $h_i'(\langle W_t, e_{i+1} \rangle) = 0$, which holds for all prior update steps. This implies
\[ \nabla f(W_i) = G^{-1}\inner{W_i}{e_0}. \]

Recall that $F(W_0)\leq D^2$.
This condition imposes how large the $\alpha_i$'s can be.
In particular, we will set $\alpha_i = {2\eta_i \sigma}/{\sqrt k}$ in the proof.
Then, based on the definition of $f(W_0)$,
\[ h_i(\inner{W_0}{e_{i+1}}) = \frac {C \alpha_i^2} 4, \text{ since } \inner{W_0 + U}{e_{i+1}} \le \alpha_i. \]
In Lemma \ref{lemma_lb_grad_1}, we then argue that the learning rates in this case must satisfy
$\sum_{i=0}^{T-1}\eta_i \le O(\sqrt{T}).$
When the learning rate is fixed and at least $\Omega(T^{-1/2})$, we construct a piece-wise quadratic function (similar to equation \eqref{eq_def_f}), now with a fixed $\alpha$. This is described in Lemma \ref{lemma_lb_grad_2}.
In this case, the gradient noise grows by $1 - C^{-1}\eta$ up to $T$ steps.
We then carefully set $\alpha$ to lower bound the norm of the gradient.
Combining these two cases, we conclude the proof of Theorem \ref{thm_main_lb}.
For details, see Appendix \ref{proof_thm_lb_grad}.
As is typical in lower-bound constructions, our result holds for a specific instance (with a particular learning rate range).

The proof can also be extended to adaptive learning rate schedules. Notice that the above construction holds for arbitrary learning rates defined as a function of previous iterates.
Then, we set the width of each function $h_t$, $\alpha_t$, proportional to $\eta_t > 0$, for any $\eta_t$ that may depend on previous iterates, as long as they satisfy the constraint that $\sum_{i=0}^{T-1} \eta_i \le O(\sqrt{T})$.

\paragraph{Extension:} We can show a similar lower bound for the momentum update rule. Recall this is defined as
{\begin{align}
    M_{i+1} = \mu M_i - \eta_i G_i, \text{ and }
    W_{i+1} = W_i + M_{i+1}, \label{eq_momentum}
\end{align}}%
for $i = 0, 1, \dots, T-1$, where $G_i$ is the specific gradient at step $i$.
To handle this case, we will need a more fine-grained control on the gradient, so we consider a quadratic function as $f(W) = \frac{C}{2} \bignorm{W}^2.$
We leave the result and its proof to Appendix \ref{proof_convex}.

\medskip
\begin{remark}
    The novelty of our results lies in analyzing sharpness minimization using techniques from stochastic optimization.
    This appears to be new, and we hope our work can inspire further studies, such as designing accelerated sharpness minimization methods.
\end{remark}

\section{Regularization Effect of Hessian Trace in Over-Parameterized Matrix Sensing}\label{sec_hessian}

Before proceeding, let us give an example of the regularization effect of penalizing the Hessian trace.
We consider the matrix sensing problem, whose generalization properties are particularly well-understood in the nonconvex factorization setting \citep{li2018algorithmic}.
Let there be an unknown, rank-$r$ positive semi-definite matrix $X^{\star} = U^{\star} {U^{\star}}^{\top} \in \real^{d\times d}$.
The input consists of a list of $d$ by $d$ Gaussian measurement matrix $A_1, A_2, \dots, A_n$.
The labels are given by $y_i = \inner{A_i}{X^{\star}}$, for every $i = 1, 2, \dots n$.
The empirical loss is%
\begin{align}\label{eq_empirical_loss}
    \hat L(W) = \frac 1 {2n} \sum_{i=1}^n \bigbrace{\inner{A_i}{WW^{\top}} - y_i}^2, \text{ where } W \in \real^{d\times d}.
\end{align}
When the loss reaches near zero (which implies the gradient also reaches near zero), it is known that multiple local minimum solutions exist \citep{li2018algorithmic}, and the Hessian becomes
\[ \frac 1 n \sum_{i=1}^n \bignorm{A_i W}_F^2 \approx d \bignormFro{W}^2 = d \bignorm{W W^{\top}}_{\star}. \]
By prior results \citep{recht2010guaranteed}, among all $X = WW^{\top}$ such that $\hat{L}(W) = 0$, $X^{\star}$ has the lowest nuclear norm.
Thus, the regularization placed on $\hat L(W)$ is similar to nuclear norm regularization after interpolating the training dataset.
We formalize this discussion and state the result below.
\begin{proposition}\label{thm_sam_sol}
    In the setting above, for any $W$ that satisfies $\hat L(W) = 0$, the following must hold with high probability:
    \begin{align}\label{eq_lambda_char}
        \bigtr{\nabla^2[\hat L(U^{\star})]} \le \bigtr{\nabla^2[\hat L(W)]} + O(n^{-\frac 1 2}).
    \end{align}
\end{proposition}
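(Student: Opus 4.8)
The plan is to reduce the comparison of Hessian traces to a comparison of nuclear norms, for which the exact-recovery guarantee of \citet{recht2010guaranteed} applies. First I would record the trace formula at an interpolating solution. Since $\hat L(W) = 0$ forces every residual $\inner{A_i}{WW^{\top}} - y_i$ to vanish, the second-order term of the Hessian collapses to its Gauss--Newton part, giving $\bigtr{\nabla^2 \hat L(W)} = \frac 1 n \sum_{i=1}^n \bignorm{A_i W}_F^2$ (up to the symmetrization constant coming from differentiating $WW^{\top}$, which I absorb into the implicit constants), as already asserted in Example \ref{ex_reg}. Writing this as a quadratic form, $\bigtr{\nabla^2 \hat L(W)} = \bigtr{W^{\top} M W}$ with $M \define \frac 1 n \sum_{i=1}^n A_i^{\top} A_i$, turns the problem into understanding the single random matrix $M$.

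The second step is a concentration statement for $M$. Because the entries of each $A_i$ are i.i.d.\ standard Gaussian, $\ex{A_i^{\top}A_i} = d\,\id_d$, so $\ex{M} = d\,\id_d$. A matrix-Bernstein (or sample-covariance) operator-norm bound then gives $\bignorm{M - d\,\id_d} = O(n^{-1/2})$ with high probability, where the hidden constant depends on $d$ (treated as fixed). Crucially, this is a \emph{single} high-probability event that holds \emph{uniformly} over all $W$, which matters because an interpolating $W$ is itself a function of the random $A_i$ and cannot be treated as fixed. On this event, $\bignorm{\bigtr{W^{\top}MW} - d\bignormFro{W}^2} \le \bignorm{M - d\,\id_d}\,\bignormFro{W}^2 = O(n^{-1/2})\bignormFro{W}^2$, so the Hessian trace equals $d\bignormFro{W}^2$ up to a relative $O(n^{-1/2})$ error for every interpolating solution at once.

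The third step supplies the ordering of the Frobenius norms. Since $WW^{\top}$ is positive semidefinite, $\bignormFro{W}^2 = \bigtr{WW^{\top}} = \bignorm{WW^{\top}}_{\star}$, and every $W$ with $\hat L(W) = 0$ yields an $X = WW^{\top}$ feasible for the measurements $\inner{A_i}{X} = y_i$. For a Gaussian design the measurement operator satisfies the restricted isometry property with high probability once $n \gtrsim dr$, so by the exact-recovery theorem of \citet{recht2010guaranteed} the unique nuclear-norm minimizer among feasible matrices is $X^{\star} = U^{\star}{U^{\star}}^{\top}$. Hence $\bignormFro{U^{\star}}^2 = \bignorm{X^{\star}}_{\star} \le \bignorm{WW^{\top}}_{\star} = \bignormFro{W}^2$. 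Combining the three steps, $\bigtr{\nabla^2\hat L(U^{\star})} = d\bignormFro{U^{\star}}^2 + O(n^{-1/2}) \le d\bignormFro{W}^2 + O(n^{-1/2}) = \bigtr{\nabla^2\hat L(W)} + O(n^{-1/2})$, which is the claim.

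The main obstacle I anticipate is the quantifier order in the concentration step: because the comparison point $W$ is an arbitrary, data-dependent solution of $\hat L(W) = 0$, a pointwise tail bound is not enough, and one must obtain the estimate uniformly in $W$. Routing everything through the operator norm of the fixed matrix $M$ is what makes this uniformity automatic. A secondary subtlety is that the relative error $O(n^{-1/2})\bignormFro{W}^2$ yields an additive $O(n^{-1/2})$ bound only because, in the tight case $\bignormFro{W}^2 = \bignormFro{U^{\star}}^2$, the norm is a fixed constant, whereas in the non-tight case the genuine gap $d(\bignormFro{W}^2 - \bignormFro{U^{\star}}^2)$ dominates the error for large $n$; both cases are consistent with the stated inequality.
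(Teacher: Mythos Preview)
Your proposal is correct and follows the same three-stage skeleton as the paper---trace formula at interpolation, concentration to $d\bignormFro{W}^2$, then the nuclear-norm minimality of $X^{\star}$ from \citet{recht2010guaranteed}---but the concentration step is handled differently. The paper applies a pointwise $\chi^2$ tail bound to $\frac 1 n \sum_i \bignormFro{A_i W}^2$ for a \emph{fixed} $W$, obtaining a relative deviation of order $n^{-1/2}$, and then invokes Recht's result. You instead route everything through the operator norm of the single random matrix $M = \frac 1 n \sum_i A_i^{\top} A_i$ and bound $\bignorm{M - d\,\id_d}$ once. This buys you the uniformity over $W$ that you correctly flag as necessary (since an interpolating $W$ depends on the $A_i$), a point the paper's proof does not address. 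You also make explicit the passage from a relative $O(n^{-1/2})\bignormFro{W}^2$ error to the stated additive $O(n^{-1/2})$, which the paper leaves implicit. In short, the two arguments are structurally the same, but your concentration step is the more careful of the two.
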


A similar statement holds if the trace operator is replaced by $\lambda_1$ in equation \eqref{eq_lambda_char}.
To see this, we look at the quadratic form of the Hessian to find the maximum eigenvalue.
Let $u$ be a $d^2$ dimension vector with length equal to one, $\norm{u} = 1$.
One can derive that:
\begin{align*}
    \lambda_{1}[\nabla^2 \hat L(W)]
    = \max_{u\in\real^{d^2}: \norm{u} = 1} u^{\top} \nabla^2 \hat L(W) u 
    = \max_{u\in\real^{d^2}: \norm{u} = 1}\frac 1 n \sum_{i=1}^n \inner{A_i W}{u}^2 
    \ge \frac 1 { d^2 n} \sum_{i=1}^n \bignormFro{A_i W}^2.
\end{align*}
The last step is by setting $u = d^{-1} \bm{1}_{d^2}$, whose length is equal to one.
The proof of Proposition \ref{thm_sam_sol} can be found in Appendix \ref{proof_prop_ms}.

\paragraph{Simulation:}%
We conduct a numerical simulation to compare algorithmic behaviors.
We generate a low-rank matrix $U^{\star} \in \real^{d \times r}$ from the isotropic Gaussian.
We set $d = 100$ and $r = 5$.
Then, we test three algorithms: gradient descent (GD), weight-perturbed gradient descent (WP-GD), and Algorithm \ref{alg:two_point} (NSO).
In particular, we will implement the full gradient update rather than using the stochastic updates.
We use an initialization $U_0 \in \real^{d \times d}$ where each matrix entry is sampled independently from standard Gaussian $\cN(0, 1)$.

Recall that WP-GD and NSO require setting $\sigma$.
We choose $\sigma$ between $0.001, 0.002, 0.004, 0.008, 0.0016$.
NSO additionally requires setting the number of sampled perturbations $k$. We set $k=1$ for faster computing.
As for the learning rate, we choose a fixed $\eta$ for each run and vary its value between $0.001, 0.0025, 0.005$, and $0.01$.
We find that setting $\eta$ as either $0.005$ or $0.01$ would be too large, leading the loss values to explode.
Hence, we report the results for setting $\eta$ as $0.0025$ or $0.001$.

Our findings are illustrated in Figure \ref{fig_weight_based_hessian_measure}.
\begin{itemize}
    \item We see that all three algorithms can reduce the training MSE to near zero, as shown in Figure \ref{fig_ms_train}.
    From the trends, we can see that the training loss has fully converged for all cases.
    \item GD suffers from overfitting to training data, while both WP-GD and NSO can generalize to the validation samples.
    Moreover, NSO reduces this validation loss further in Figure \ref{fig_ms_val}.
    \item Finally, we can see in Figure \ref{fig_ms_dist} that our algorithm can indeed produce a more accurate estimate of the ground truth matrix $X^{\star}$, as measured by the Frobenius norm distance between $W_i W_i^{\top}$ and $X^{\star}$.
    \item The results for varying learning rates can be found in the bottom panel from Figure \ref{fig_ms_lr001_train} to \ref{fig_ms_lr001_test}. The comparative results remain consistent.
\end{itemize}

\begin{figure}[!h]
	\begin{subfigure}[b]{0.33\textwidth}
		\centering
		\includegraphics[width=0.905\textwidth]{./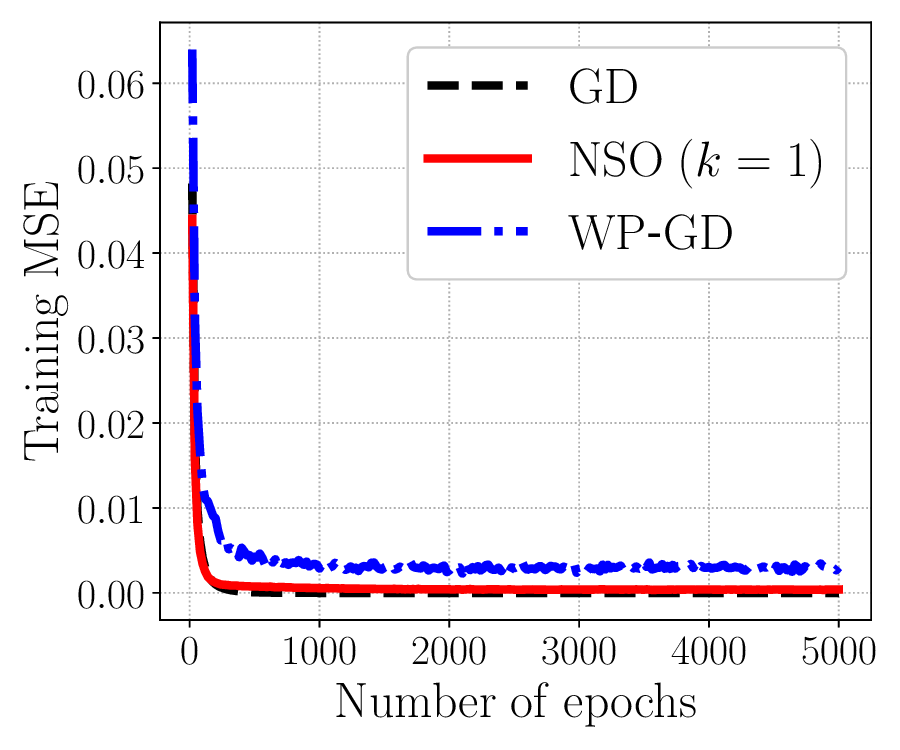}
		\caption{Training loss}\label{fig_ms_train}
	\end{subfigure}
	\begin{subfigure}[b]{0.33\textwidth}
		\centering
		\includegraphics[width=0.905\textwidth]{./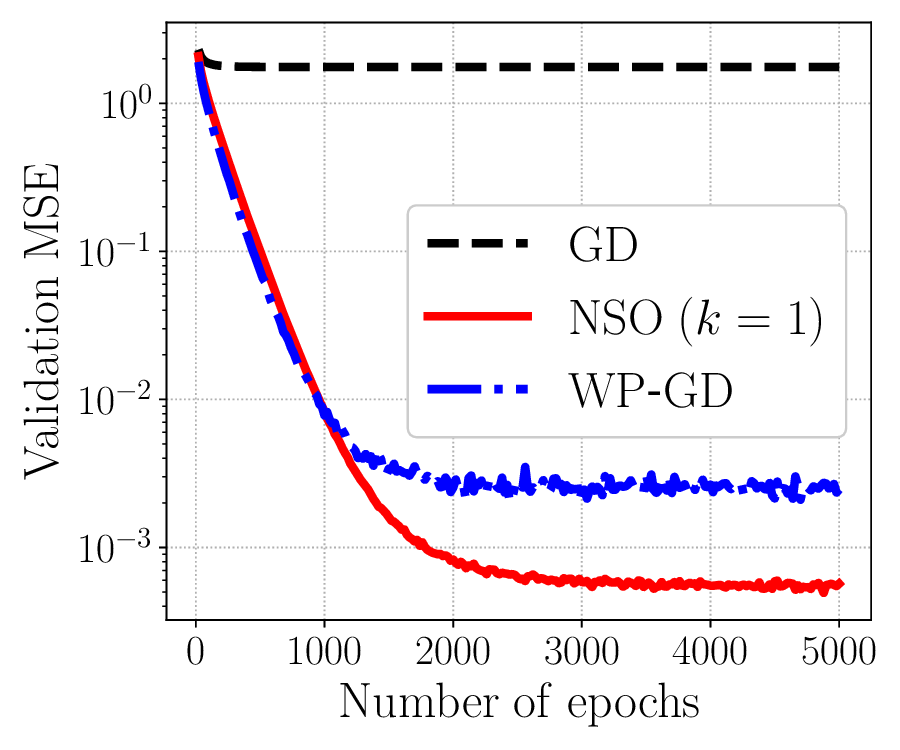}
		\caption{Validation loss}\label{fig_ms_val}
	\end{subfigure}
 	\begin{subfigure}[b]{0.33\textwidth}
		\centering
		\includegraphics[width=0.905\textwidth]{./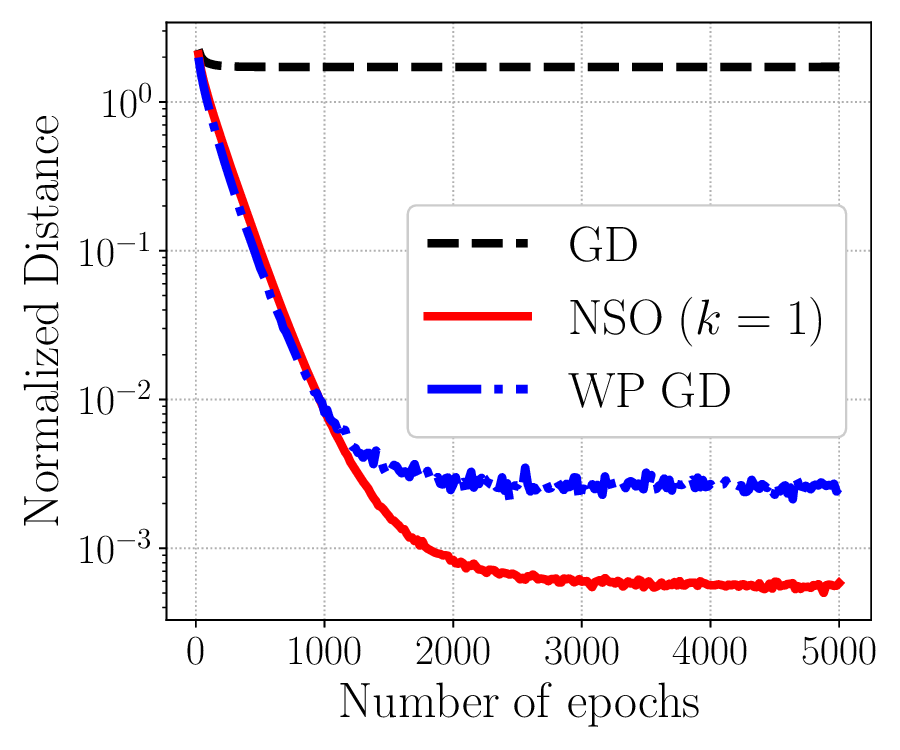}
		\caption{Normalized distance}\label{fig_ms_dist}
	\end{subfigure}
    \begin{subfigure}[b]{0.33\textwidth}
        \centering
		\includegraphics[width=0.905\textwidth]{./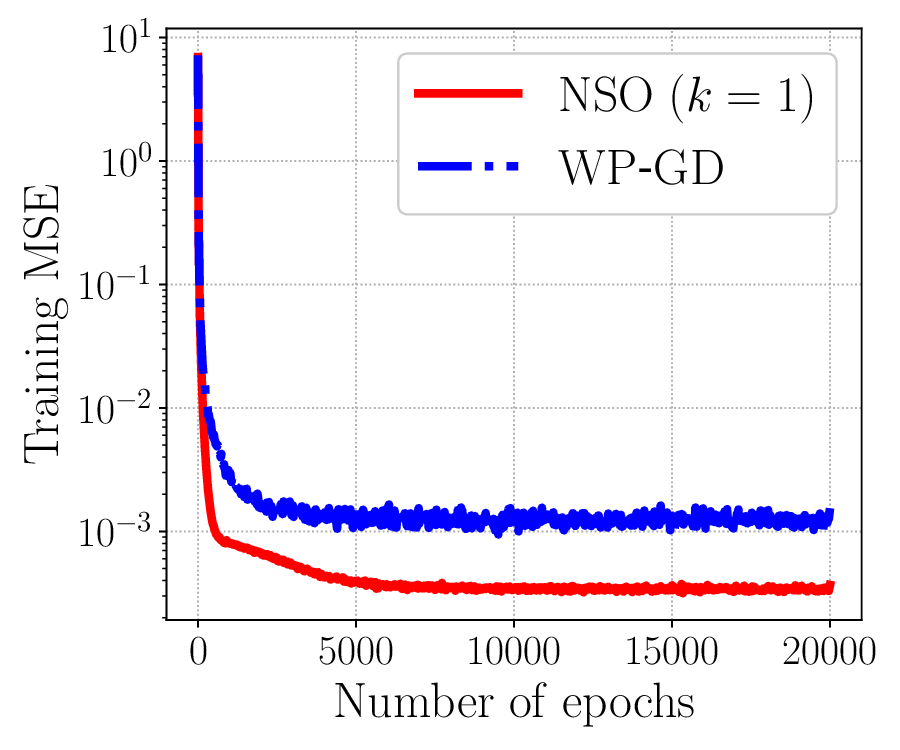}
		\caption{Training loss}\label{fig_ms_lr001_train}
    \end{subfigure}   
    \begin{subfigure}[b]{0.33\textwidth}
        \centering
		\includegraphics[width=0.905\textwidth]{./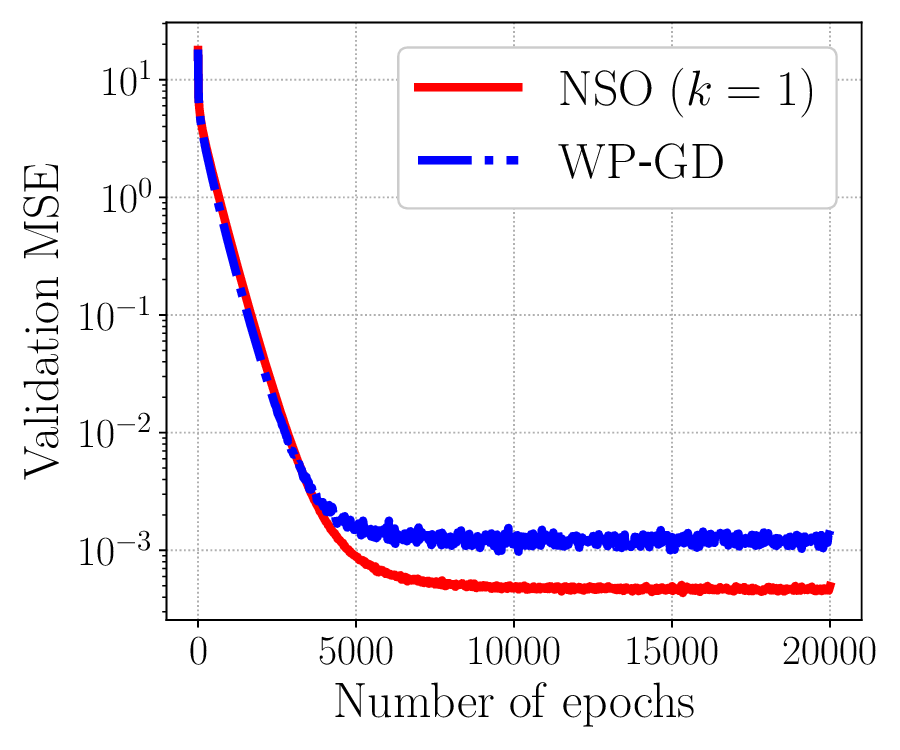}
		\caption{Validation loss}\label{fig_ms_lr001_val}
    \end{subfigure}
    \begin{subfigure}[b]{0.33\textwidth}
        \centering
		\includegraphics[width=0.905\textwidth]{./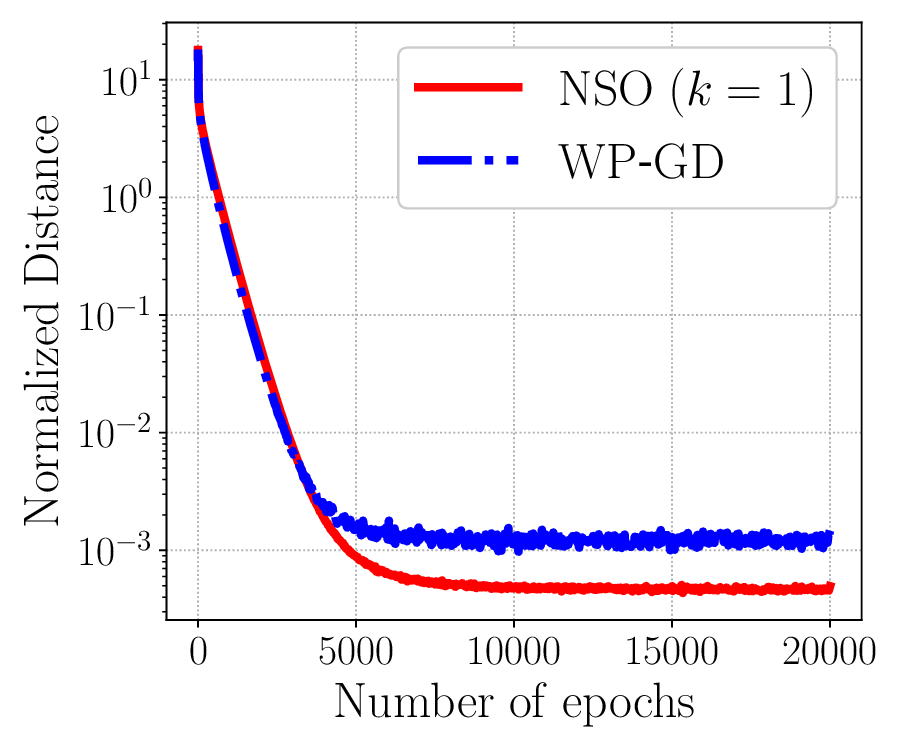}
		\caption{Normalized distance}\label{fig_ms_lr001_test}
    \end{subfigure} 
	\caption{Comparing the training loss, validation loss, and the normalized Frobenius norm distance, i.e., $\frac{\bignormFro{W_i W_i^{\top} - X^{\star}}^2 }{\bignormFro{X^{\star}}^2}$, between GD, our approach (NSO), and weight-perturbed (WP) GD (which computes the full gradient as opposed to the stochastic gradient). For the top panel, the learning rate is fixed at $0.0025$ for all the runs.
    For the bottom panel, the learning rate is set at $0.0001$.
    $\sigma$ is set as $0.008$ for WP-GD and NSO.
    Also, we trained sufficiently long until the loss curves fully converged.}\label{fig_weight_based_hessian_measure}
\end{figure}

\section{Discussions and Related Work}\label{sec_related}

Using noise injection during neural network training has appeared in very early studies of machine learning research \citep{hinton1993keeping,an1996effects}.
\cite{graves2011practical} test a variety of variational inference approaches with different prior and posterior distributions with recurrent neural networks.
\cite{cohen2019certified} examine the use of randomized smoothing (with different smoothing distributions) against different $\ell_p$ adversaries for certified robustness.
\cite{camuto2020explicit} propose a layer-wise regularization scheme motivated by adaptation patterns of weights through deeper layers.
\cite{yang2020randomized} show how to turn any classifier that classifies well under Gaussian noise into a new classifier robust to adversarial perturbation under the $\ell_2$ norm.
One of the implications of their work is that smoothing with Gaussian noise naturally confers adversarial robustness in the $\ell_2$ norm.
\cite{bisla2022low} conduct an extensive empirical study to explore the connection between sharpness and generalization for training neural networks.
\cite{orvieto2022explicit} analyze Taylor's expansion of the stochastic objective after noise injection, examining the induced regularization in various neural network training settings, and find that layer-wise perturbation can improve generalization.

There is also a line of work on Hessian and sharpness in the edge of stability regime during gradient descent dynamics \citep{cohen2021gradient}.
In particular, the edge of stability refers to scenarios where the learning rate goes out of bounds beyond the Lipschitz continuity of a function, which is inversely proportional to the largest eigenvalue of the Hessian matrix.
\cite{long2024sharpness} identify the edge of stability regime for the SAM algorithm, highlighting the differences of these regimes between SAM and gradient descent.
\cite{agarwala2023sam} present a detailed study of the gradient dynamics of SAM, documenting various respects of this algorithm.
They first analyze the full-batch gradient descent with unnormalized SAM in a quadratic regression model.
This analysis suggests that at initialization, full-batch SAM presents limited suppression of the largest eigenvalue of the Hessian matrix.
They also show that as the batch size decreases, the regularization of SAM becomes stronger.
This work underscores the intricate dynamics of SAM due to its connection to the min-max problem, which is computationally intractable \citep{daskalakis2021complexity}.
\cite{dauphin2024neglected} provide an in-depth comparison between SAM and weight noise by examining the structure of the Hessian during training.
Our results in Section \ref{sec_motivating_exp}, which show that weight noise remains ineffective (for fine-tuning), are consistent with the findings of this work.
\cite{wu2020dissecting} study the structure of the Hessian and conduct experiments on how the Hessian structure changes based on architecture and the training method.

Randomized smoothing has been studied in stochastic optimization under various contexts, for instance, estimating gradients in zeroth-order optimization \citep{duchi2015optimal}, and for nonsmooth convex optimization problems \citep{duchi2012randomized}.
In particular, \cite{duchi2012randomized} analyze the convergence rates of stochastic optimization algorithms and examine a convolution-based smoothing technique for nonsmooth stochastic optimization problems by drawing stochastic gradient samples from the smoothed problem with an appropriate choice of smoothing density.
They show that with the ability to issue several queries to the stochastic oracle, the original problem can be solved with faster convergence rates than a simple stochastic oracle.
Besides, recent research has investigated the query complexity of finding stationary points of nonconvex functions \citep{carmon2020lower,arjevani2023lower}.
These results provide a fine-grained characterization of the complexity of iterative methods under different orders of gradient oracles.

The findings from our work suggest several avenues that seem ripe for future work.
Can recent advancements in optimization be used to design better noise injection algorithms with faster convergence?
Can we better understand the effect of noise injection on the Hessian during training (e.g., in tensor regression where saddle points are known to exist \citep{li2020learning})?
In particular, our work highlights the need for more accurate measurements to understand the learning mechanisms of complex models.

\section{Conclusion and Limitations}\label{sec_conclude}

This paper examines the regularization and generalization effects of noise-injection methods for training neural networks.
The study begins by noting that a straightforward implementation of injecting noise into weight matrices (of a neural network) before computing the gradient in SGD does not perform well in practice.
Thus, an alternative, two-point noise injection scheme is proposed and is shown to be effective through extensive experiments.
In particular, this new algorithm can be used to regularize the Hessian and improve generalization.
The results are tested on fine-tuning, pretraining, and instruction tuning.
As a complement, a PAC-Bayes generalization bound is provided to support the rationale of this approach.
Finally, this paper presents a detailed convergence analysis of the proposed algorithm.

\paragraph{Limitations:}
In Theorem \ref{thm_hessian}, we have shown that the generalization error of a training algorithm can be bounded by the trace of the Hessian of the loss matrix, scaled by the distance of the hypothesis space.
Notice that this result applies to both Algorithm \ref{alg:two_point} (NSO) and the naive noise injection algorithm (WP-SGD).
As shown in Figure \ref{fig_lower_trace}, this result can provide a descriptive measure to explain different algorithms.
Since the Hessian measurements can be used on both algorithms, they can only distinguish one from another after taking the measurements from the data.
Thus, our generalization theory should be interpreted with this data-dependent lens in mind. %
We hope future work could work on addressing such limitations, along with designing more principled optimization algorithms for training neural networks.

\section*{Acknowledgements}

H. Z. would like to thank Huy Nguyen, Zhiyuan Li, and Guanghui Lan for the discussions and for pointing out several references during various stages of this work.
The authors would also like to thank the anonymous reviewers and the action editor for their constructive feedback.
We acknowledge financial support from NSF award IIS-2412008.

\bibliography{bibliography/ref}
\bibliographystyle{tmlr}
\appendix
\newpage
\section{Omitted Proofs from Section \ref{sec_alg}}\label{proof_gen}

We state a few standard notations. Given two matrices $X, Y$ having the same dimension, let $\inner{X}{Y} = \tr[X^{\top} Y]$ denote the matrix inner product of $X$ and $Y$.
Let $\bignorms{X}$ denote the spectral norm (largest singular value) of $X$, and let $\bignormFro{X}$ denote the Frobenius norm of $X$.
We use the big-O notation $f(x) = O(g(x))$ to indicate that there exists a fixed constant $C$ independent of $x$ such that $f(x) \le C\cdot g(x)$ for large enough $x$.

\subsection{Proof of the PAC-Bayes Bound}\label{sec_proof_pacbayes}

We will use the following PAC-Bayes bound. For reference, see, e.g., Theorem 2, \citet{mcallester2013pac}.

\begin{theorem}\label{lemma_pac}
    Suppose the loss function $\ell(f_W(x),y)$ lies in a bounded range $[0, C]$ given any $x\in\cX$ with label $y$.
    For any $\beta\in (0,1)$ and $\delta\in (0,1)$, with probability at least $1-\delta$, the following holds:
    \begin{equation}
        L_{\cQ}(W)\leq \frac{1}{\beta}\hat{L}_{\cQ}(W) + \frac{C \big(KL(\cQ||\cP)+\log\frac{1}{\delta}\big)}{2\beta(1-\beta)n}. \label{eq_pac}
    \end{equation}
\end{theorem}

This result provides flexibility in setting $\beta$.
Our results will set $\beta$ to balance the perturbation error of $\cQ$ and the KL divergence between $\cP$ and $\cQ$.
We will need the KL divergence between the prior $\cP$ and the posterior $\cQ$ in the PAC-Bayesian analysis. This is stated in the following result.

\begin{proposition}\label{prop_kl}
    Suppose $\cP = N(X, \Sigma)$ and $\cQ = N(Y, \Sigma)$ are both Gaussian distributions with mean vectors given by $X\in\real^p, Y\in\real^p$, and population covariance matrix $\Sigma \in \real^{p \times p}$.
    The KL divergence between $\cP$ and $\cQ$ is equal to
    \begin{align*}
        KL(\cQ||\cP) = \frac{1}{2}(X- Y)^\top \Sigma^{-1} (X - Y).
    \end{align*}
    Specifically, if $\Sigma = \sigma^2\id_p$, then the above simplifies to
    \begin{align*}
        KL(\cQ||\cP) = \frac{\bignorms{X - Y}^2}{2\sigma^2}.
    \end{align*}
\end{proposition}

We will use Taylor's expansion on the perturbed loss. This is stated precisely as follows.
\begin{claim}\label{lemma_taylor}
    Let $f_W$ be twice-differentiable, parameterized by weight vector $W \in\real^p$.
    Let $U\in\real^p$ be another vector with dimension $p$.
    For any $W$ and $U$, the following identity holds
    \begin{align*}%
        \ell(f_{W + U}(x), y) = \ell(f_W(x), y) +  U^{\top} \nabla \ell(f_W(x), y) 
        +  {U}^{\top} [\nabla^2\ell(f_W(x),y)] {U} + R_2(\ell(f_W(x),y)),
    \end{align*}
    where $R_2(\ell(f_W(x),y)))$ is a second-order error term in Taylor's expansion.
\end{claim}

\begin{proof}
    The proof follows by the fact that $\ell\circ f_W$ is twice-differentiable.
    Let $\eta \in\real^p$ be a vector with the same dimension as $W$ and $U$ from the mean value theorem. There must exist an $\eta$ between $W$ and $U+W$ such that the following equality holds:
    \begin{align*}
        R_2(\ell(f_W(x),y)) = {U}^{\top} \Big(\nabla^2[\ell(f_{\eta}(x), y)] - \nabla^2[\ell(f_W(x), y)]\Big){U}.
    \end{align*}
    This completes the proof of the claim.
\end{proof}

We provide Taylor's expansion of $\ell_{\cQ} - \ell$ based on the above.

\begin{lemma}\label{lemma_perturb}
    In the setting of Theorem \ref{thm_hessian}, suppose each parameter is perturbed by an independent noise drawn from $N(0,\sigma^2)$. 
    Let ${\ell}_{\cQ}(f_W(x),y)$ be the perturbed loss with noise perturbation injection vector on $W$. 
    There exist some fixed value $C_1$ that do not grow with $n$ and $1/\delta$ such that
    \begin{align*}
         \bigabs{\ell_{\cQ}(f_W(x), y) - \ell(f_W(x), y) - \frac 1 2 \sigma^2 \bigtr{\nabla^2[\ell (f_W(x), y)]} } \le  C_1 \sigma^3.
    \end{align*}
\end{lemma}
 
\begin{proof}%
We take the expectation over $U$ for both sides of the equation in Claim \ref{lemma_taylor}. The result becomes
\begin{align*}
    \exarg{U}{\ell(f_{W+U}(x), y)} = \exarg{U}{\ell(f_W(x), y) + U^{\top} \nabla \ell(f_W(x), y) + {U}^{\top}\nabla^2[\ell(f_W(x), y)]{U} + R_2(\ell(f_W(x),y))}.
\end{align*}
Then, we use the perturbation distribution $\cQ$ on $\mathbb{E}_U[\ell(f_{W+U}(x),y)]$, and get
\begin{align*}
    \ell_\cQ(f_W(x),y) = \exarg{U}{\ell(f_W(x),y)} + \exarg{U}{U^{\top} \nabla\ell(f_W(x), y)} + \exarg{U}{{U}^{\top}\nabla^2[\ell(f_W(x), y)]{U}} + \exarg{U}{R_2(\ell(f_W(x), y))}.
\end{align*}
Since $\mathbb{E}[U] = 0$, the first-order term will be zero in expectation.
The second-order term becomes equal to
\begin{align}
    \exarg{U}{{U}^{\top} [\nabla^2\ell(f_W(x), y)] {U}} = \sigma^2\bigtr{\nabla^2[\ell(f_W(x),y)]}. \label{eq_taylor_2}
\end{align}
The expectation of the error term $R_2(\ell(f_W(x),y))$ be
\begin{align*}
    \exarg{U}{R_2(\ell(f_W(x), y))} &= \exarg{U}{U^{\top}\bigbrace{{\nabla^2[\ell(f_{\eta}(x), y)] - \nabla^2[\ell(f_W(x), y)]}} U} \\
    &\le \exarg{U}{\bignorms{U}^2 \cdot \bignormFro{\nabla^2[\ell(f_{\eta}(x), y)] - \nabla^2[\ell(f_W(x), y)]}} \\
    &\lesssim \exarg{U}{\bignorms{U}^2 \cdot C_1 {\bignorms{U}}}
    \lesssim C_1 \sigma^3.
\end{align*}
Thus, the proof is complete.
\end{proof}

The last piece we will need is the uniform convergence of the Hessian operator. The result uses the fact that the Hessian matrix is Lipschitz continuous.

\begin{lemma}\label{lemma_union_bound}
    In the setting of Theorem \ref{thm_hessian}, there exist some fixed values $C_2, C_3$ that do not grow with $n$ and $1/\delta$, such that with probability at least $1- \delta$ for any $\delta > 0$, over the randomness of the $n$ training examples, we have
    \begin{align}
        \bignormFro{\frac 1 n \sum_{i=1}^n\nabla^2[\ell(f_W(x_i), y_i)] - \exarg{(x,y)\sim\cD}{\nabla^2[\ell(f_W(x), y)]}} \le \frac{C_2\sqrt{\log (C_3 n/\delta)}}{\sqrt n}.
    \end{align}
\end{lemma}
The proof will be deferred to Section \ref{proof_uniform}.
With these results ready, we will provide proof of the Hessian-based generalization bound.

\subsubsection{Proof of Theorem \ref{thm_hessian}}\label{sec_hessian_proof}

\begin{proof}[Proof of Theorem \ref{thm_hessian}]
    First, we separate the gap of $L({W})$ and $\frac{1}{\beta}\hat{L}({W})$ into three parts:
    \begin{align*}
        L({W}) - \frac{1}{\beta} \hat{L}({W})
        = L({W}) - L_\cQ({W}) + L_\cQ({W}) - \frac{1}{\beta}\hat{L}_\cQ({W}) + \frac{1}{\beta}\hat{L}_\cQ({W}) - \frac{1}{\beta}\hat{L}({W}).
    \end{align*}
    By Lemma \ref{lemma_perturb}, we can bound the difference between $L(W)$ and $L_{\cQ}(W)$ by the Hessian trace plus an error:
    \begin{align*}
        L({W}) - \frac{1}{\beta} \hat{L}({W})
        \leq& -\exarg{(x,y)\sim\cD}{\frac{\sigma^2}2 \bigtr{\nabla^2[\ell(f_{W}(x), y)]}} + C_1{\sigma}^{3} + \Big(L_\cQ({W}) - \frac{1}{\beta}\hat{L}_\cQ({W})\Big) \\
        &+ \frac{1}{\beta}\Big(\frac{1}{n}\sum_{i=1}^n \frac{\sigma^2}2 \bigtr{\nabla^2[\ell(f_{W}(x_i), y_i)]} + C_1{\sigma}^{3}\Big).
    \end{align*}
    After re-arranging the terms, we can get the following:
    \begin{align}
        L({W}) - \frac{1}{\beta} \hat{L}({W}) \leq& \underbrace{-\exarg{(x,y)\sim\cD}{\frac{\sigma^2}2 \bigtr{ \nabla^2[\ell(f_{W}(x), y)]}}
         + \frac{1}{n\beta} \sum_{i=1}^n\frac{\sigma^2}2\bigtr{\nabla^2[\ell(f_{W}(x_i), y_i)] }}_{E_1} \nonumber \\
        &  + \frac{1 + \beta}{\beta} C_1 {\sigma}^{3}
        + \underbrace{L_\cQ({W}) - \frac{1}{\beta}\hat{L}_\cQ({W})}_{E_2}. \label{eq_combine_3_hess}
    \end{align}
    We will examine $E_1$ by separating it into two parts:
    \begin{align}
        E_1 =& \frac{1}{\beta}\left(\frac{1}{n}\sum_{i=1}^n \frac{\sigma^2}2 \bigtr{ \nabla^2[\ell(f_{\hat W}(x_i), y_i)] } - \exarg{(x,y)\sim\cD}{\frac{\sigma^2}2 \bigtr{\nabla^2[\ell(f_{W}(x), y)]}}\right) \label{eq_unif_conv} \\
         &+ \frac {1 - \beta} {\beta} \frac{\sigma^2} 2 \exarg{(x,y)\sim\cD}{\bigtr{\nabla^2\ell(f_{W}(x), y)}}. \label{eq_chernoff_trace_hess}
    \end{align}
    We can use the uniform convergence result of Lemma \ref{lemma_union_bound} to bound equation \eqref{eq_unif_conv}, leading to:
    \begin{align}
        & \frac{\sigma^2}{2\beta}\left(\frac{1}{n}\sum_{i=1}^n \bigtr{\nabla^2\ell(f_{{W}}(x_i), y_i)} - \exarg{(x,y)\sim\cD}{ \bigtr{\nabla^2\ell(f_{{W}}(x),y))}}\right) \nonumber \\
        \le& \frac{\sigma^2}{2\beta} \cdot \sqrt{p} \cdot \bignormFro{\frac 1 n \sum_{i = 1}^n{ \nabla^2[\ell(f_{{W}}(x_i), y_i)] } - \exarg{(x,y)\sim\cD}{{ \nabla^2[\ell(f_{{W}}(x), y)] }}} \tag{by Cauchy-Schwarz} \\
        \le& \frac{\sigma^2\sqrt p \cdot C_2\sqrt{\log(C_3 n/\delta)}}{2\beta \sqrt n}.\label{eq_con_err_hess}
    \end{align}
    In particular, the second step also uses the fact that the Hessian matrix is a symmetric $p$ by $p$ matrix.
    As for equation \eqref{eq_chernoff_trace_hess}, we recall that
    \begin{align*}
        \alpha \define \max_{(x, y) \sim \cD} \bigtr{\nabla^2 \ell(f_W(x), y)}.
    \end{align*}
    Combined with equation \eqref{eq_con_err_hess}, we have shown that
    \begin{align}
        E_1 \le \frac{\sigma^2 \sqrt p \cdot C_2 \sqrt{\log(C_3 n /\delta)}}{2\beta\sqrt n} + \frac{1-\beta}{\beta} \frac{\sigma^2}2 \cdot \alpha. \label{eq_E1_err}
    \end{align}
    As for $E_2$, we will use the PAC-Bayes bound of Theorem \ref{lemma_pac}.
    In particular, we set the prior distribution $\cP$ as the distribution of $U$ and the posterior distribution $\cQ$ as the distribution of $W + U$.
    Thus,
    \begin{align}
        E_2 \le \frac{C \big(KL(\cQ || \cP) + \log\frac{1}{\delta} \big)}{2\beta(1 - \beta) n} \le \frac{C \Big(\frac{ \bignorms{W}^2 }{2 \sigma^{2}} + \log\frac{1}{\delta} \Big)} {2\beta (1 - \beta) n}
        \le \frac{C(\frac{r^2}{2\sigma^2} + \log \delta^{-1})}{2\beta(1 - \beta) n}. \label{eq_combine_1_hess}
    \end{align}
    The last step is because $\bignorms{W} \le r$ by the assumption of the hypothesis space.
    Combining equations \eqref{eq_combine_3_hess}, \eqref{eq_E1_err}, \eqref{eq_combine_1_hess}, we claim that with probability at least $1 - 2\delta$, the following must be true:
    \begin{align}
        L({W}) - \frac{1}{\beta}\hat{L}({W}) \leq  
        \frac{\sigma^2\sqrt p \cdot C_2 \sqrt{\log(C_3 n/\delta)}}{2 \beta\sqrt{n}}
         + \frac {1 - \beta} {\beta} \frac{\sigma^2} 2 \alpha
        + \frac{1 + \beta}{\beta} C_1{\sigma}^{3} + \frac{C(\frac{r^2}{2 \sigma^2} + \log\frac{1}{\delta})}{2\beta(1-\beta)n}. \label{eq_gap1}
    \end{align}
    Thus, we will now choose $\sigma$ and $\beta \in (0,1)$ to minimize the term above.
    In particular, we will set $\sigma$ as
    \begin{align}
        \sigma^2 = \frac{r}{1-\beta}\sqrt{\frac{C}{\alpha n}}. \label{eq_equal_hess}
    \end{align}
    By plugging in $\sigma$ to equation \eqref{eq_gap1} and re-arranging terms, the gap between $L(W)$ and $\frac{\hat{L}(W)} {\beta}$ becomes:
    \begin{align*}
         L(W) - \frac{1}{\beta} \hat{L}(W) 
        \le \frac{1}{\beta} \sqrt{\frac{C \alpha r^2}{n}} 
        + \frac{C_2\sqrt{2 p \log (C_3 n/\delta)}}{2\beta \sqrt n} \sigma^2 + \frac{1 + \beta}{\beta} C_1 \sigma^3 + \frac{C}{2\beta(1-\beta)n}\log \frac{1}{\delta}.
    \end{align*}
    Let $\beta$ be a fixed value close to $1$ and independent of $N$ and $\delta^{-1}$, and let $\epsilon = (1 - \beta)/\beta$. We get
    \begin{align*}
        L(W) &\le  (1 + \epsilon) \hat{L}(W) + (1 + \epsilon) \sqrt{\frac{C \alpha r^2}{n}} + \xi, \text{ where } \\
        &\quad\xi = \frac{C_2\sqrt{2 p \log (C_3 n/\delta)}}{2\beta \sqrt n}  \sigma^2 + \Bigbrace{1 + \frac{1}{\beta}} C_1 {\sigma^3} + \frac{C}{2\beta(1-\beta)n}\log \frac{1}{\delta}.
    \end{align*}
    Notice that $\xi$ is of order $O(n^{-\frac 3 4} + n^{-\frac 3 4} + \log(\delta^{-1}) n^{-1}) \le O(\log(\delta^{-1})  n^{-\frac 3 4})$.
    Therefore, we have finished the proof of equation \eqref{eq_main_1}.
\end{proof}

\begin{remark}
When $f$ is strongly convex, the lowest eigenvalue of the Hessian is bounded from below.
Once the algorithm reaches the global minimizer, our result from Theorem \ref{eq_main_1} can be used to provide a generalization bound based on the trace of the Hessian.
Notice that the noise injection will add some bias to this minimizer, leading to a sub-optimal empirical loss.
To remedy this issue, one can place the regularization of Hessian as a constraint, similar to how $\ell_2$-regularization can be implemented as a constraint.
\end{remark}

\subsubsection{Proof of Lemma \ref{lemma_union_bound}}\label{proof_uniform}

In this section, we provide the proof of Lemma \ref{lemma_union_bound},
which shows the uniform convergence of the loss Hessian.

\begin{proof}[Proof of Lemma \ref{lemma_union_bound}]
    Let $C$, $\epsilon > 0$, and let $S = \{W\in\real^{p}: \bignorms{W}\leq C\}$.
    There exists an $\epsilon$-cover of $S$ with respect to the $\ell_2$-norm at most $\max\Big(\big(\frac{3C}{\epsilon}\big)^p,1\Big)$ elements; see, e.g., Example 5.8 \citep{wainwright2019high}.
    Let $T\subseteq S$ denote the set of this cover.
    Recall that the Hessian $\nabla^2[\ell(f_W(x), y)]$ is  $C_1$-Lipschitz for all $(W+U) \in S, W\in S$. Then we have
    \begin{align*}
        \bignormFro{\nabla^2[\ell(f_{W+U}(x),y)] - \nabla^2[\ell(f_W(x),y)]}\leq C_1 \bignorms{U}.
    \end{align*}
    For parameters $\delta,\epsilon > 0$, let $\cN$ be the $\epsilon$-cover of $S$ with respect to the $\ell_2$-norm. Define the event 
    \begin{align*}
        E=\Big\{\forall W\in T, \bignormFro{\frac{1}{n}\sum_{i=1}^n \nabla^2[\ell(f_{W}(x_i), y_i)] - \exarg{(x,y)\sim\cD}{ \nabla^2[\ell(f_W(x), y)]}}\leq\delta\Big\}.
    \end{align*}
    By the matrix Bernstein inequality, we have
    \[ \Pr[E] \geq 1 - 4\cdot |\cN|\cdot p\cdot \exp\left(-\frac {n\delta^2} {2\alpha^2}\right). \]
    Next, for any $W\in S$, we can pick some $W + U\in T$ such that $\bignorms{U}\leq \epsilon$. We have
    \begin{align*}
        &\bignormFro{\exarg{(x,y)\sim\cD}{\nabla^2[\ell(f_{W+U}(x), y)]} - \exarg{(x,y)\sim\cD}{\nabla^2[\ell(f_W(x), y)]}}\leq C_1\bignorms{U}\leq C_1\epsilon\\
        &\bignormFro{\frac{1}{n}\sum_{j=1}^n\nabla^2[\ell(f_{W+U}(x_j), y_j)] -  \frac{1}{n}\sum_{j=1}^n\nabla^2[\ell(f_W(x_j),y_j)]}\leq C_1\bignorms{U}\leq C_1\epsilon.
    \end{align*}
    Therefore, for any $W\in S$, we obtain:
    \begin{align*}
        \bignormFro{\frac{1}{n}\sum_{j=1}^n\nabla^2[\ell(f_{W}(x_j), y_j)] - \exarg{(x,y)\sim\cD}{ \nabla^2[\ell(f_W(x), y)]}}\leq 2C_1\epsilon + \delta.
    \end{align*}
    We will also set the value of $\delta$ and $\epsilon$. First, set $\epsilon = \delta/(2C_1)$ so that conditional on $E$,
    \begin{align*}
        \bignormFro{\frac{1}{n}\sum_{j=1}^n\nabla^2[\ell(f_{W}(x_j), y_j)] - \exarg{(x,y)\sim\cD}{ \nabla^2[\ell(f_W(x), y)]}}\leq 2\delta.
    \end{align*}
    The event $E$ happens with a probability of at least:
    \begin{align*}
        1 - 4|T|p \cdot\exp\left(-\frac{ n\delta^2} {2\alpha^2}\right) = 1 - 4p\cdot \exp\left(\log |T| - \frac{n\delta^2}{2\alpha^2}\right).
    \end{align*}
    We have $\log |T| \leq p\log(3B/\epsilon) = p\log (6C C_1 /\delta)$.
    If we set
    \[ \delta = \sqrt{\frac{4p\alpha^2\log(3\tau C C_1 n/\alpha)}{n}}\]
    so that $\log(3\tau C C_1 n/\alpha) \geq 1$ (because $n\geq \frac{e\alpha}{3C_1}$ and $\tau\geq 1$), then we get
    \begin{align*}
         p\log(6C C_1/\delta) - n\delta^2/(2\alpha^2)
        =&p\log\fullbrace{\frac{6C C_1\sqrt{n}}{\sqrt{4p\alpha^2\log(3\tau C C_1 n/\alpha)}}} - 2p\log\fullbrace{3\tau C C_1 n/\alpha} \\
        =&p\log\fullbrace{\frac{3C C_1\sqrt{n}}{\alpha\sqrt{p\log(3\tau C C_1 n/\alpha)}}} - 2p\log\fullbrace{3\tau C C_1 n/\alpha} \\
        \leq& p\log\fullbrace{3\tau C C_1 n/\alpha} - 2p\log\fullbrace{3\tau C C_1 n/\alpha} \tag{$\tau\ge 1,\log(3\tau C C_1 n/\alpha)\geq 1$}\\
        =& -p\log\fullbrace{3\tau C C_1 n/\alpha} \leq -p\log(e\tau) \tag{$3C C_1 n/ \alpha\geq e$}.
    \end{align*}
    Therefore, with a probability greater than
    \[ 1 - 4|\cN|p\cdot\exp(-n\delta^2/(2\alpha^2))\geq 1 - 4p (e\tau)^{-p}, \]
    the following estimate holds:
    \begin{align*}
        \bignormFro{\frac{1}{n}\sum_{j=1}^n \nabla^2[\ell(f_{W}(x_j),y_j)] - \exarg{(x,y)\sim\cD}{ \nabla^2[\ell(f_W(x), y)]}}\leq \sqrt{\frac{16p\alpha^2\log(3\tau C C_1 n/\alpha)}{n}}.
    \end{align*}
    Denote $\delta' = 4p(e\tau)^{-p}$, $C_2 = 4\alpha\sqrt{p}$, and $C_3 = 12 p C C_1 /(e\alpha)$. With probability greater than $1 - \delta'$, the final result is:
    \begin{align*}
        \bignormFro{\frac{1}{n}\sum_{i=1}^n \nabla^2[\ell(f_{W}(x_i), y_i)] - \exarg{(x,y)\sim\cD}{ \nabla^2[\ell(f_W(x), y)]}}\leq C_2\sqrt{\frac{\log(C_3 n/\delta')}{n}}.
    \end{align*}
    This completes the proof of Lemma \ref{lemma_union_bound}.
\end{proof}

\subsection{Proof of Proposition \ref{thm_sam_sol}}\label{proof_prop_ms}

\begin{proof}[Proof of Proposition \ref{thm_sam_sol}]
    We can calculate the gradient as
    \begin{align}
        \nabla \hat L(W) = \frac 1 n \sum_{i=1}^n (\inner{A_i}{WW^{\top}} - y_i) A_i W.
    \end{align}
    For a particular entry $W_{j, k}$ of $W$, for any $1 \le j, k\le d$, the derivative of the gradient with respect to $W_{j, k}$ is
    \begin{align}
        \frac 1 n \sum_{i=1}^n \left([A_i W]_{j, k} A_i W + \Big(\inner{A_i}{WW^{\top}} - y_i\Big) \frac{\partial (A_i W)}{\partial W_{j, k}}\right). \label{eq_hessian_per_col}
    \end{align}
    When $\hat L(W)$ is zero, the second term of equation \eqref{eq_hessian_per_col} above must be zero, because $\inner{A_i}{WW^{\top}}$ is equal to $y_i$, for any $i = 1,\dots,n$.

    We use the {assumption that $A_i$ is a random Gaussian matrix, in which every entry is drawn from a normal distribution with mean zero and variance one.}
    Notice that the expectation of $\bignormFro{A_i W}^2$ satisfies:
    \begin{align*}
        \ex{\bignormFro{A_i W}^2} = \ex{\bigtr{W^{\top} A_i^{\top} A_i W}} = \bigtr{W^{\top} (d \cdot \id_{d\times d}) W^{\top}} = d\cdot \bigtr{W^{\top} W} = d \bignormFro{W}^2.
    \end{align*}
    Thus, by concentration inequality for $\chi^2$ random variables (e.g., \citet[equation (2.19)]{wainwright2019high}), the following holds for any $0< \epsilon < 1$,
    \begin{align}
        \Pr\fullbracket{\bigabs{\frac 1 n \sum_{i=1}^n \bignormFro{A_i W}^2 - d\bignormFro{W}^2} \ge \epsilon d \bignormFro{W}^2}
        \le 2\exp\Bigbrace{-\frac {n\epsilon^2 } 8}. \label{eq_dev}
    \end{align}
    This implies that $\epsilon$ must be smaller than $O(n^{-1/2})$ with high probability.
    As a result, the average of $\bignormFro{A_i W}^2$ must be $d \bignormFro{W}^2$ plus some deviation error that scales with $n^{-1/2}$ times the expectation.
    
    {By Theorem 3.2, \citet{recht2010guaranteed}, the minimum Frobenius norm ($\normFro{W}^2$) solution that satisfies $\hat L(W) = 0$ (for Gaussian random matrices) is precisely $U^{\star}$.}
    Thus, we conclude that equation \eqref{eq_lambda_char} holds. 
\end{proof}

\section{Omitted Proofs from Section \ref{sec_converge}}\label{proof_converge}

\subsection{Proof of Proposition \ref{thm_main_ub}}\label{proof_main}

Recall that each iteration involves two sources of randomness stemming from $g_z$ and $\set{U_i^{(j)}}_{j = 1}^k$, respectively.
Let us define
{\begin{align*}
    \delta_i     =& \frac 1 {2k} \sum_{j=1}^k \big( \nabla f\big({W_i + U_i^{(j)}}\big) + \nabla f\big({W_i - U^{(j)}_i}\big) \big) - \nabla F(W_i),\\
    \xi_i       =& \frac 1 {2k} \sum_{j=1}^k \big( G_i^{(j)} - \nabla f\big({W_i + U_i^{(j)}}\big) - \nabla f\big({W_i - U_i^{(j)}}\big) \big),%
\end{align*}}%
for $i = 0, \dots, T-1$.
One can see that both $\delta_i$ and $\xi_i$ have mean zero. The former is by the symmetry of $\cP$. The latter is because $g_z$ is unbiased under Assumption \ref{ass_oracle}.
The following result gives their variance.
\begin{lemma}\label{lemma_sd}
    In the setting of Proposition \ref{thm_main_ub}, for any $i = 1, \dots, T$, we have%
    {\begin{align} \mathbb{E}\left[\bignorm{\xi_i}^2\right] \le \frac{\sigma^2} k~ ~  and~~\mathbb{E}\left[\bignorm{\delta_i}^2\right] \le \frac{ C^2 H(\cP)}{k}. \label{eq_lem_sd_1} \end{align}}%
\end{lemma}
The last step uses smoothness to show that $\norm{\nabla F(W_t)}$ keeps reducing.
\begin{proof}%
Let us bound the variance of $\delta_i$ and $\xi_i$ for $i = 0, 1, \dots, T-1$.
First, we see that
    {\begin{align}
        \exarg{U_i^1, \dots, U_i^k}{\bignorm{\delta_i}^2} &= \exarg{U_i^1, \dots, U_i^k}{\bignorm{\frac 1 {2k} \sum_{j=1}^k \Big( \nabla f({W_i + U_i^j}) + \nabla f({W_i - U^j_i}) - 2\nabla F(W_i)\Big)}^2} \nonumber\\
        &= \frac{1}{k^2} \sum_{j=1}^k \exarg{U_i^j}{\bignorm{\frac 1 2 \Big( \nabla f({W_i + U_i^j}) + \nabla f({W_i - U^j_i}) - 2\nabla F(W_i) \Big)}^2} \\
        &= \frac{1}{k} \exarg{U_i^1}{\bignorm{\frac 1 2 \Big( \nabla f({W_i + U_i^1}) + \nabla f({W_i - U_i^1}) \Big) - \nabla F(W_i) }^2}
    \end{align}}%
    where in the second line we use that $U_i^{j_1}$ and $U_i^{j_2}$ are independent when $j_1 \neq j_2$,
    in the last line we use fact that {$U_i^1,\dots,U_i^k$ are identically distributed}.
    In the second step, we use the fact that for two independent random variables $U, V$, and any continuous functions $h(U), g(V)$, $h(U)$ and $g(V)$ are still independent (recall that $f$ is continuous since it is twice-differentiable).
    We include a short proof of this fact for completeness.
    If $U$ and $V$ are independent, we have
    $\Pr[U \in A, V \in B] = \Pr[U \in A] \cdot \Pr[V \in B]$, for any $A,B \in \text{Borel}(\real).$
    Thus, if $h$ and $g$ are continuous functions, we obtain
    \begin{align*}
        \Pr[h(U) \in A, g(V) \in B] 
        =& \Pr[U \in h^{-1}(A), V \in g^{-1}(B)] \\
        =& \Pr[U \in h^{-1}(A)] \cdot \Pr[V \in g^{-1}(B)] = \Pr[h(U) \in A] \cdot \Pr[g(V) \in B].
    \end{align*}
    Thus, we have shown that
    \begin{align}
        \ex{\bignorm{\delta_i}^2} = \frac{1}{k} \exarg{U\sim \cP}{\bignorm{\frac 1 2 \Big(\nabla f(W_i + U) + f(W_i - U)\Big) - \nabla F(W_i)}^2}.
    \end{align}
    Next, we deal with the variance of the two-point stochastic gradient. %
    We will show that 
    \begin{align}
        \exarg{U}{\bignorm{\frac 1 2 \Big(\nabla f(W + U) + \nabla f(W - U)\Big) - \nabla F(W)}^2}
        \le {C^2 H(\cP)}. \label{eq_lem_grad}
    \end{align}
    We mainly use the Lipschitz continuity of the gradient of $F$. %
    The left-hand side of equation \eqref{eq_lem_grad} is equal to
    \begin{align*}
        & \exarg{U}{\bignorm{\frac 1 2 \Big(\nabla f(W + U) - \nabla F(W) \Big) + \frac 1 2 \Big(\nabla f (W - U) - \nabla F(W) \Big)}^2} \\
        \le& \exarg{U}{\frac 1 2 \bignorm{\nabla f(W + U) - \nabla F(W)}^2 + \frac 1 2 \bignorm{\nabla f(W - U) - \nabla F(W)}^2} \tag{by Cauchy-Schwartz} \\
        =& \frac 1 2 \exarg{U}{\bignorm{\nabla f(W + U) - \nabla F(W)}^2} \tag{by symmetry of $\cP$ since it has mean zero} \\
        =& \frac 1 2 \exarg{U}{ \bignorm{ \exarg{U'\sim \cP}{\nabla f(W + U)  - \nabla f(W + U')}}^2 } 
        \le \frac 1 2 \exarg{U}{\exarg{U'\sim\cP}{\bignorm{\nabla f(W + U) - \nabla f(W + U')}^2}} \\
        \le& \frac 1 2 \exarg{U, U'}{C^2 \bignorm{U - U'}^2} 
        = \frac 1 2 C^2 \exarg{U, U'}{\bignorm{U}^2 + \bignorm{U'}^2} 
        = C^2 H(\cP) \tag{by equation \eqref{eq_grad_lip}}
    \end{align*}

    As for the variance of $\xi_i$, we note that $U_i^{(1)}, \dots, U_i^{(j)}$ are all independent from each other. Therefore,
    \begin{align*}
         \exarg{\big\{U_i^{(j)}, z_i^{(j)}\big\}_{j=1}^k}{\bignorm{\xi_i}^2} 
        =& \frac 1 {4k} \exarg{U, z}{\bignorm{g_z(W + U) - \nabla f(W + U) + g_z(W - U) - f(W - U)}^2} \\
        \le& \frac 1 {2k} \exarg{U, z}{ \bignorm{g_z(W + U) - \nabla f({W + U})}^2 + \bignorm{g_z(W - U) - \nabla f({W - U})}^2 } 
        \le \frac {\sigma^2} k.
    \end{align*}
    The first step uses the fact that both $g_z(\cdot)$ and $f(\cdot)$ are continuous functions
    The second step above uses Cauchy-Schwartz inequality.
    The last step uses the variance bound of $g_z(\cdot)$,
    Thus, the proof is finished.
\end{proof}

In the next step, we use a result from Theorem 2.1, \citet{ghadimi2013stochastic}. %
Our proof follows from their work, but we deal with some extra technical details related to the noise injection.

\begin{lemma}[Slightly adapted from Theorem 2.1, \cite{ghadimi2013stochastic}]\label{lemma_converge}
    In the setting of Proposition \ref{thm_main_ub}, for any $\eta_0, \cdots, \eta_{T-1}$ less than ${C}^{-1}$ and a random variable according to a distribution
    $\Pr [ t = j ] = \frac {\eta_j} {\sum_{i = 0}^{T-1} \eta_i }$, for any $j = 0, \dots, T-1$, the following holds:
    {\begin{align}\label{eq_lemma_converge}
        \ex{\bignorm{\nabla F(W_t)}^2} \leq \frac{2C }{\sum_{i=0}^{T-1} \eta_i} D^2 
        + \frac{C \sum_{i=0}^{T-1} \eta_i^2 \big( \ex{\bignorm{\delta_i}^2}
        + \ex{\bignorm{\xi_i}^2} \big) }{\sum_{i=0}^{T-1} \eta_i}.
    \end{align}}%
\end{lemma}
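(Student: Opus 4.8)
The plan is to recognize Algorithm \ref{alg:two_point}'s update as a single stochastic-gradient step on $F$ and then run the descent-lemma argument of \citet{ghadimi2013stochastic}. Writing $\bar g_i = \frac{1}{2k}\sum_{j=1}^k G_i^{(j)}$ for the search direction, the definitions of $\delta_i$ and $\xi_i$ give the clean decomposition $\bar g_i = \nabla F(W_i) + \delta_i + \xi_i$, so the iteration reads $W_{i+1} = W_i - \eta_i\big(\nabla F(W_i) + \delta_i + \xi_i\big)$, where $\delta_i$ and $\xi_i$ are freshly drawn, mean-zero noise terms (the former by symmetry of $\cP$, the latter by unbiasedness of $g_z$ under Assumption \ref{ass_oracle}). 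First I would invoke the $C$-smoothness of $\nabla F$, established at the start of this appendix, to write the descent inequality
\begin{align*}
  F(W_{i+1}) \le F(W_i) - \eta_i \inner{\nabla F(W_i)}{\bar g_i} + \frac{C\eta_i^2}{2}\bignorm{\bar g_i}^2.
\end{align*}

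Next I would take expectations over the step-$i$ randomness conditioned on $W_i$. Because $\delta_i$ and $\xi_i$ are mean-zero, the inner-product term collapses to $\eta_i\bignorm{\nabla F(W_i)}^2$, and expanding $\bignorm{\bar g_i}^2$ leaves $\bignorm{\nabla F(W_i)}^2 + \bignorm{\delta_i + \xi_i}^2$ up to cross terms with $\nabla F(W_i)$ that vanish in expectation. The one place I expect to need genuine care---the main obstacle---is showing $\ex{\bignorm{\delta_i + \xi_i}^2} = \ex{\bignorm{\delta_i}^2} + \ex{\bignorm{\xi_i}^2}$, i.e.\ that $\ex{\inner{\delta_i}{\xi_i}}$ vanishes even though $\delta_i$ and $\xi_i$ are built from the \emph{same} perturbation draws $\{U_i^{(j)}\}$ and might naively look correlated. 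I would settle this by conditioning on $\{U_i^{(j)}\}$: then $\delta_i$ becomes deterministic while $\xi_i$ still has conditional mean zero since each oracle call is unbiased, so the conditional expectation of the inner product is zero and the tower rule finishes it. The two individual variances are then exactly Lemma \ref{lemma_sd}, which I may quote.

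With the cross term gone, the conditional descent bound reads
\begin{align*}
  \ex{F(W_{i+1})} \le \ex{F(W_i)} - \eta_i\Big(1 - \frac{C\eta_i}{2}\Big)\ex{\bignorm{\nabla F(W_i)}^2} + \frac{C\eta_i^2}{2}\big(\ex{\bignorm{\delta_i}^2} + \ex{\bignorm{\xi_i}^2}\big).
\end{align*}
Since each $\eta_i \le C^{-1}$, the factor $1 - C\eta_i/2 \ge 1/2$, so I may lower-bound the coefficient of $\ex{\bignorm{\nabla F(W_i)}^2}$ by $\eta_i/2$. Telescoping over $i = 0, \dots, T-1$ and using $F(W_0) - \ex{F(W_T)} \le F(W_0) - \min_W F(W) \le D^2$ from Assumption \ref{ass_diff} yields
\begin{align*}
  \frac 1 2 \sum_{i=0}^{T-1}\eta_i\,\ex{\bignorm{\nabla F(W_i)}^2} \le D^2 + \frac{C}{2}\sum_{i=0}^{T-1}\eta_i^2\big(\ex{\bignorm{\delta_i}^2} + \ex{\bignorm{\xi_i}^2}\big).
\end{align*}

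Finally I would divide both sides by $\tfrac 1 2 \sum_{i} \eta_i$. The left-hand side becomes precisely $\ex{\bignorm{\nabla F(W_t)}^2}$ once I recognize that sampling $t$ with $\Pr[t = j] = \eta_j / \sum_{i} \eta_i$ turns the $\eta_i$-weighted average of the per-step gradient norms into the expectation over the randomized stopping index; the right-hand side reproduces the two terms of \eqref{eq_lemma_converge}. Beyond the conditional-independence step flagged above, the argument is essentially bookkeeping, hinging on the decomposition $\bar g_i = \nabla F(W_i) + \delta_i + \xi_i$ and on the smoothness-plus-step-size pairing $\eta_i \le C^{-1}$ that makes the negative drift term absorb the $\tfrac{C\eta_i^2}{2}\bignorm{\nabla F(W_i)}^2$ contribution.
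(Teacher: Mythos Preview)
Your proposal is correct and follows essentially the same route as the paper: descent lemma for the $C$-smooth $F$, the decomposition $\bar g_i = \nabla F(W_i) + \delta_i + \xi_i$, use of $\eta_i \le C^{-1}$ to get the $\tfrac{1}{2}\eta_i$ coefficient, telescoping against $D^2$, and the randomized stopping index. If anything, your treatment of the cross term $\ex{\inner{\delta_i}{\xi_i}}$ via conditioning on $\{U_i^{(j)}\}$ is more careful than the paper's, which simply asserts that $\delta_i$ and $\xi_i$ are independent.
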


\begin{proof}%
    First, let us show that $\nabla F$ is $C$-Lipschitz continuous. To see this, we apply the Lipschitz condition of the gradient inside the expectation of $F(W)$.
    For any $W_1, W_2 \in \real^d$, by definition,
    {\begin{align*}
        \bignorm{\nabla F(W_1) - \nabla F(W_2)}
        &= \bignorm{\nabla \exarg{U\sim\cP} {f({W_1 + U})} - \nabla \exarg{U\sim\cP} {f({W_2 + U})}} \\
        &= \bignorm{ \exarg{U\sim\cP} {\nabla f({W_1 + U}) - \nabla f({W_2 + U})}}  \\
        &\leq \exarg{U\sim\cP} { \bignorm{\nabla f({W_1 + U}) - \nabla f({W_2 + U})}}  
        \leq C \bignorm{W_1 - W_2}.
    \end{align*}}%
    Since $\nabla F(W)$ is $C$-Lipschitz continuous, we have the following domination inequality:
    \begin{align}
        \bigabs{F({W_2}) - F({W_1}) - \langle \nabla F({W_1}), W_2 - W_1 \rangle} &\leq \frac{C}{2} \bignorm{W_2 - W_1}^2. \label{eq_grad_lip}
    \end{align}
    Based on the above inequality, we have
    \begin{align*}
          F(W_{i+1})  
        \leq& F(W_i) + \langle \nabla F(W_i), W_{i+1} - W_{i} \rangle + \frac{C}{2}\eta_i^2 \bignorm{\frac{1}{2} \Big( \nabla f({W_i + U_i}) + \nabla f({W_i - U_i}) \Big) + \xi_i}^2 \\
       =& F(W_i) - \eta_i \langle \nabla F(W_i), \delta_i + \xi_i + \nabla F(W_i) \rangle + \frac{C\eta_i^2}{2} \bignorm{\delta_i + \xi_i + \nabla F(W_i)}^2 \\
        =& F(W_i) - \Big(\eta_i - \frac{C\eta_i^2}{2}\Big) \bignorm{\nabla F(W_i)}^2 - \Big(\eta_i - C\eta_i^2\Big) \langle \nabla F(W_i), \delta_i + \xi_i \rangle + \frac{C\eta_i^2}{2} \bignorm{\delta_i + \xi_i}^2.
    \end{align*}
    Summing up the above inequalities for $i = 0, 1, \dots, T-1$, we obtain
    \begin{align*}
        \sum_{i=0}^{T-1} F(W_{i+1})
        \le \sum_{i=0}^{T-1} F(W_i) &- \sum_{i=0}^{T-1} \Big( \eta_i - \frac{C \eta_i^2}2 \Big) \bignorm{\nabla F(W_i)}^2 \\
        &- \sum_{i=0}^{T-1} \Big(\eta_i - C \eta_i^2\Big) \inner{\nabla F(W_i)}{\delta_i + \xi_i} + \sum_{i=0}^{T-1} \frac{C \eta_i^2} 2 \bignorm{\delta_i + \xi_i}^2,
    \end{align*}
    which implies that
    \begin{align}
        \sum_{i=0}^{T-1} \Big(\eta_i - \frac{C\eta_i^2}{2}\Big) \bignorm{\nabla F(W_i)}^2
        \leq& F(W_0) - F(W_T) - \sum_{i=0}^{T-1} \Big(\eta_i - C\eta_i^2\Big) \langle \nabla F(W_i), \delta_i + \xi_i \rangle + \frac{C}{2} \sum_{i=0}^{T-1} \eta_i^2 \bignorm{\delta_i + \xi_i}^2 \nonumber \\
        \leq& D^2 - \sum_{i=0}^{T-1} \Big(\eta_i - C\eta_i^2\Big) \langle \nabla F(W_i), \delta_i + \xi_i \rangle + \frac{C}{2} \sum_{i=0}^{T-1} \eta_i^2 \bignorm{\delta_i + \xi_i}^2. \label{eq_sumup}
    \end{align}
    where in the last step, we use the fact that
    \[ F(W_0) - F(W_T) \leq F(W_0) - \min_{W\in\real^d} F(W) \leq D^2. \]
    For any $t= 0, 1, \dots, T-1$, notice that as long as $0 < \eta_t \leq \frac{1}{C}$, then
    $ \eta_t \leq 2\eta_t - C\eta_t^2.$
    Hence, we have
    \begin{align*}
        \frac 1 2 \sum_{t=0}^{T-1} \eta_t \bignorm{\nabla F(W_t)}^2 
        \leq \sum_{t=0}^{T-1} \Big(\eta_t - \frac{C\eta_t^2}{2}\Big) \bignorm{\nabla F(W_t)}^2,
    \end{align*}
    which implies that
    \begin{align}
        \frac 1 2 \sum_{i=0}^{T-1} \eta_i \bignorm{\nabla F(W_i)}^2
        \le D^2 - \sum_{i=0}^{T-1} \Big(\eta_i - C \eta_i^2\Big) \inner{\nabla F(W_i)}{\delta_i + \xi_i} + \frac {C} 2 \sum_{i=0}^{T-1} \eta_i^2 \bignorm{\delta_i + \xi_i}^2. \label{eq_converge_last}
    \end{align}
    Additionally, since $U_t$ is drawn from a distribution with mean zero.
    Hence, by symmetry, we get that
    \begin{align}
        &\exarg{U_t}{\delta_t} = \frac{1}{2} \exarg{U_t}{\nabla f({W_t - U_t}) - \nabla f({W_t + U_t})}= 0.
    \end{align}
    Thus, if we take the expectation over $U_0, U_1, \dots, U_{T-1}, \xi_0, \xi_1, \dots, \xi_{T-1}$, then $ \ex{\inner{\nabla F(W_i)}{\delta_i + \xi_i}} = 0.$
    Recall that $t$ is a random variable whose probability mass is specified in Lemma \ref{lemma_converge}.
    We can write equation \eqref{eq_converge_last} equivalently as (below, we take expectation over all the random variables along the update since $W_t$ is a function of the previous gradient updates, for each $t = 0, 1, \dots, T-1$, recalling that $\Pr[t = i] = \frac{\eta_i}{\sum_{j=0}^{T-1} \eta_j}$)
    \begin{align*}
         \exarg{t;~ U_0, \dots, U_{T-1}, \xi_0, \xi_1, \dots, \xi_{T-1}}{\bignorm{\nabla F(W_t)}^2} 
        =& \frac{\sum_{i=0}^{T-1} \eta_i \exarg{}{\bignorm{\nabla F(W_i)}^2}}{\sum_{i=0}^{T-1} \eta_i} \\
        \leq& \frac{2 D^2 + C \sum_{i=0}^{T-1} \eta_i^2 \ex{\bignorm{\delta_i + \xi_i}^2}}{\sum_{i=0}^{T-1} \eta_i} \\
        =& \frac{2 D^2 + C \sum_{i=0}^{T-1} \eta_i^2 \big( \ex{\bignorm{\delta_i}^2} + \ex{\bignorm{\xi_i}^2} \big)}{\sum_{i=0}^{T-1} \eta_i}.
    \end{align*}
    where we use the fact that $\delta_i$ and $\xi_i$ are independent for any $i$. Hence, equation \eqref{eq_lemma_converge} is proved.
\end{proof}

Based on the above result, we now finish the proof of Proposition \ref{thm_main_ub}.

\begin{proof}[Proof of Proposition \ref{thm_main_ub}]
    Let the step sizes be a fixed $\eta$ for all epochs.
    Thus, equation \eqref{eq_lemma_converge} becomes
    {\begin{align}
        \ex{\bignorm{\nabla F(W_t)}^2} 
        \leq \frac{2}{T \eta} D^2 + \frac {C \eta} T \sum_{i=0}^{T-1}\Big( \ex{\bignorm{\delta_i}^2} + \ex{\bignorm{\xi_i}^2} \Big). \label{thm_eq_inter}
    \end{align}}%
    By Lemma \ref{lemma_sd},
    \begin{align}
        \sum_{i=0}^{T-1} \Big(\ex{\bignorm{\delta_i}^2} + \ex{\bignorm{\xi_i}^2}\Big) \le T \cdot \frac{\sigma^2 + C^2 H(\cP)} {k}.
    \end{align} 
    For simplicity, let us denote $\Delta = \frac{\sigma^2 + C^2 H(\cP)} {k}$.
    The proof is divided into two cases.
    
    \paragraph{Case 1: $\Delta$ is large.} More precisely, suppose that $\Delta \ge 2 C D^2 / T$.
    Then, minimizing over $\eta$ above leads us to the following upper bound on the right-hand side of equation \eqref{thm_eq_inter}:
    {\begin{align}
         \sqrt{ \frac{  {2C D^2} {\Delta} } {{T}}}, \label{eq_eta}
    \end{align}}%
    which is obtained by setting
    $\eta = \sqrt{\frac{2 D^2} {C \Delta T}}.$ %
    One can verify that this step size is less than $\frac 1 {C}$ since $\Delta$ is at least $2 C D^2$.
    Thus, we conclude that equation \eqref{thm_eq_inter} must be less than
    \begin{align}
        \sqrt{\frac{2C D^2 \Delta} {T}} = \sqrt{ \frac {2 C D^2 (\sigma^2 + C^2 H(\cP)){})} {k T}}. \label{eq_ub_1}
    \end{align}

    \paragraph{Case 2: $\Delta$ is small.} In this case, suppose $\Delta < 2C D^2 / T$.
    Then, the right-hand side of equation \eqref{thm_eq_inter} must be less than
    \begin{align}
        \frac{2 D^2}{T \eta} + \frac {2C^2 D^2 \eta} {T}
        \le {\frac{2 C D^2}{T}}.  \label{eq_ub_2}%
    \end{align}
    Thus, combining equations \eqref{eq_ub_1}  and \eqref{eq_ub_2}, we have completed the proof of equation \eqref{eq_thm_grad}.

\end{proof}

\subsection{Proof of Theorem \ref{thm_main_lb}}\label{proof_thm_lb_grad}

Recall our construction from Section \ref{sec_converge} as follows.
Let $e_t$ be the basis vector for the $t$-th dimension, for $t = 0, 1, \dots, T-1$.
Define $f(W)$ as
{\begin{align*}
    f(W) = \frac 1 {2G}  \langle W, e_0 \rangle^2 + \sum_{i=0}^{T-1} h_i(\langle W, e_{i+1} \rangle), %
\end{align*}}%
where $h_i$ a quadratic function parameterized by $\alpha_i$, defined as follow:
{\begin{equation*}
        h_i(x) = \left \{
        \begin{array}{ll}
            \frac {C\alpha_i^2} 4   &|x|\leq \alpha_i \\
            - \frac {C(|x| - \alpha_i)^2} 2  + \frac{C\alpha_i^2} 4     &\alpha_i \leq |x| \leq \frac 3 2 \alpha_i \\
            \frac {C(|x| - 2\alpha_i)^2} 2   &\frac 3 2 \alpha_i \leq |x| \leq 2\alpha_i \\
            0    &2\alpha_i \leq |x|.
        \end{array}
        \right.
\end{equation*}}%

For technical reasons, we define a truncated perturbation distribution $\cP$.
Given a sample $U$ from a $d$-dimensional isotropic Gaussian $N(0, \id_d)$, we truncate the $i$-th coordinate of $U$ so that $\tilde U_i = \min(U_i, a_i)$, for some fixed $a_i > 0$ that we will specify below, for all $i = 0, 1, \dots, d-1$.
Let $\cP$ denote the distribution of $\tilde U$.
The proof of Theorem \ref{thm_main_lb} is divided into two cases.
First, we examine the case when the averaged learning rate is $O(T^{-1/2})$.

\begin{lemma}\label{lemma_lb_grad_1}
    In the setting of Theorem \ref{thm_main_lb}, suppose the learning rates satisfy that $\sum_{i=0}^{T-1} \eta_i \le \sqrt{\frac{D^2 k T}{2\sigma^2 C}}$, consider the function $f(W)$ constructed in equation \eqref{eq_def_f}, we have
    \[ \min_{1\le t \le T} \ex{\bignorm{\nabla F(W_t)}^2} \ge D \sqrt{\frac{C \sigma^2}{32 k T}}. \]
\end{lemma}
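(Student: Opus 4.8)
The plan is to show that, under this construction, the whole trajectory collapses onto a one-dimensional problem: the iterates move deterministically along $e_0$, while every other coordinate stays frozen inside the flat plateau of its bump $h_i$, so that all bump gradients vanish identically. The lower bound then reduces to tracking the single scalar $\langle W_t, e_0\rangle$, whose decay is throttled by the function-value budget $D^2$. Concretely, I would first establish a \emph{trapping invariant}. By design the adversarial oracle noise at step $i$ is $\xi_i = \langle \xi_i, e_{i+1}\rangle e_{i+1}$ with $\langle \xi_i, e_{i+1}\rangle = \pm \sigma/\sqrt{k}$, so that $\mathbb{E}[\langle \xi_i, e_{i+1}\rangle^2] = \sigma^2/k$ saturates the variance budget of Lemma \ref{lemma_sd}. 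With $\alpha_i = 2\eta_i \sigma/\sqrt{k}$, the displacement of coordinate $i+1$ at step $i$ is exactly $\eta_i \langle \xi_i, e_{i+1}\rangle$, of magnitude $\frac12\alpha_i$; combined with a perturbation truncated to $\frac12\alpha_i$ in that coordinate, the argument of $h_i$ never leaves $[-\alpha_i,\alpha_i]$, where $h_i' \equiv 0$. I would then verify coordinate by coordinate that the perturbation fluctuation $\delta_i$ vanishes everywhere (each bump is in its flat region), and that the two-point symmetrization exactly cancels the perturbation in coordinate $e_0$ because the quadratic $\frac{1}{2G}\langle\cdot,e_0\rangle^2$ has a linear gradient. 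Hence coordinate $i+1$ is updated only at step $i$, every bump gradient is zero along the trajectory and under the perturbation expectation, and therefore $\nabla F(W_t) = G^{-1}\langle W_t, e_0\rangle\, e_0$ holds deterministically, giving $\mathbb{E}[\bignorm{\nabla F(W_t)}^2] = G^{-2}\langle W_t, e_0\rangle^2$.

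Next I would solve the resulting scalar dynamics. Coordinate $e_0$ follows exact gradient descent on $\frac{1}{2G}x^2$, so $\langle W_t, e_0\rangle = \langle W_0, e_0\rangle \prod_{s=0}^{t-1}(1 - \eta_s/G)$, which decreases monotonically in magnitude as $t$ grows; thus the minimizing index in $\min_{1\le t\le T}$ is $t = T$, and it suffices to lower bound $G^{-2}\langle W_T, e_0\rangle^2$. Using $\eta_s \le C^{-1} \le G$ together with $\prod_s(1 - \eta_s/G) \ge 1 - G^{-1}\sum_s \eta_s$, I would take $G = \max\{C^{-1},\, 2\sum_s \eta_s\}$, which is $\ge C^{-1}$ (so $\nabla f$ stays $C$-Lipschitz) and forces the product to be at least $\frac12$, yielding $G^{-2}\langle W_T, e_0\rangle^2 \ge \langle W_0, e_0\rangle^2/(4G^2)$.

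Finally I would account for the budget and balance the free parameters. Evaluating $F(W_0) - \min_W F(W) = \frac{1}{2G}\langle W_0, e_0\rangle^2 + \sum_i \frac{C\alpha_i^2}{4}$, the bump term equals $\frac{C\sigma^2}{k}\sum_i \eta_i^2$, which under the hypothesis $\sum_i \eta_i \le \sqrt{D^2 kT/(2\sigma^2 C)}$ is at most $D^2/2$ (with equality for uniform rates); allocating the remaining half of the budget to $e_0$ sets $\langle W_0, e_0\rangle^2 = G D^2$. Substituting into the previous display gives $G^{-2}\langle W_T, e_0\rangle^2 \ge D^2/(8\sum_s \eta_s)$, and inserting the hypothesis on $\sum_s \eta_s$ produces exactly $D\sqrt{C\sigma^2/(32 kT)}$ (the powers of $2$ cancel since $\sqrt{2}/8 = 1/(4\sqrt{2})$), which is the claimed bound.

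The main obstacle is the budget bookkeeping in the last step together with the simultaneous feasibility of the choice $G = 2\sum_s \eta_s$: one must confirm that the bump mass $\frac{C\sigma^2}{k}\sum_i \eta_i^2$ genuinely occupies at most a constant fraction of $D^2$ for the given learning-rate sequence — which is immediate for (near-)uniform rates via Cauchy--Schwarz but relies on the per-step widths $\alpha_i \propto \eta_i$ to absorb non-uniform rates — and that all constants align to yield the factor $32$. The trapping invariant is conceptually the heart of the argument, yet it is essentially mechanical once the magnitudes are matched through $\frac12\alpha_i + \frac12\alpha_i = \alpha_i$; the genuinely delicate part is choosing $G$ and $\langle W_0, e_0\rangle$ so that the Lipschitz constraint, the product lower bound, and the gradient magnitude are all satisfied at once under precisely the stated hypothesis on $\sum_i \eta_i$.
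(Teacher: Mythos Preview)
Your proposal is correct and follows essentially the same route as the paper: the same adversarial oracle and widths $\alpha_i = 2\eta_i\sigma/\sqrt{k}$, the same trapping invariant forcing $h_i'\equiv 0$ along the trajectory, the same reduction to the scalar dynamics on $e_0$ with $G=\max\{C^{-1},2\sum_i\eta_i\}$ and $\langle W_0,e_0\rangle^2=GD^2$, and the same final arithmetic. The only cosmetic differences are that you bound the product $\prod_s(1-\eta_s/G)\ge 1/2$ via Bernoulli's inequality rather than the paper's $1-z/2\ge 2^{-z}$, and you argue the minimum is attained at $t=T$ by monotonicity rather than bounding uniformly in $t$; the budget caveat you flag (that $\sum_i\eta_i^2$ is controlled by the hypothesis on $\sum_i\eta_i$ only through Cauchy--Schwarz in the uniform-rate direction) is the same looseness present in the paper's own equation \eqref{eq_d2}.
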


\begin{proof}%
    We start by defining a gradient oracle by choosing the noise vectors $\{\xi_t\}_{t=0}^{T-1}$ to be independent random variables such that
    \begin{align}\label{eq_ave}
        \xi_t = \langle \xi_t, e_{t+1} \rangle e_{t + 1} \text{ and } |\langle \xi_t, e_{t+1} \rangle| \leq \frac{\sigma}{\sqrt{k}},
    \end{align}
    where $e_{t+1}$ is a basis vector whose $(t+1)$-th entry is one and otherwise is zero. 
    In other words, only the $(t+1)$-th coordinate of $\xi_t$ is nonzero. Otherwise, the rest of the vector remains zero.
    We use $\bar \xi_t$ to denote the averaged noise variable as
    \begin{align*}
        \bar \xi_t = \frac{1}{k} \sum_{i=1}^k \xi_t^{(i)},
    \end{align*}
    where $\xi_t^{(i)}$ is defined following the condition specified in equation \eqref{eq_ave}. Thus, we can also conclude that
    \begin{align*}
        |\langle \bar \xi_t, e_{t+1} \rangle| \leq \frac{\sigma}{\sqrt{k}}.
    \end{align*}
    We consider the objective function $f(W):\real^d \rightarrow \real$ defined above (see also equation \eqref{eq_def_f}, Section \ref{sec_converge}), with
    \begin{align}
        \alpha_i = \frac{2 \eta_i \sigma }{\sqrt{k}}, \text{ for } i = 0, 1, \dots, T. \label{eq_alphai}
    \end{align}
    
    We will analyze the dynamics of Algorithm \ref{alg:two_point} with the objective function $f(W)$ and the starting point $W_0 = D \sqrt{G} \cdot e_{0}$, where $G = \max \big\{ C^{-1} , 2 \sum_{i=0}^{T-1} \eta_i \big\}$.
    For the first iteration, we have
    \begin{align*}
        W_1 &= W_0 - \eta_0 \Big( \frac 1 2 \sum_{i=1}^k \big( \nabla f(W_0 + U_0^{(i)}) + \nabla f(W_0 - U_0^{(i)}) \big) + \bar \xi_0 \Big) \\
        &= (1 - \eta_0 G^{-1}) W_0 - \eta_0 \bar \xi_0,
    \end{align*}
    where $U$ is a random draw from the truncated distribution $\cP$ with $\langle U, e_i \rangle = \min\{\cP_i, a_i\}$ for $a_i = \frac{\eta_{i-1} \sigma } {\sqrt{k}}$.
    Next, from the construction of $h_1$, we get
    \begin{align*}
        & \frac 1 2 \big( \nabla f(W_1 + U) + \nabla f(W_1 - U) \big) \\
        =\,& G^{-1} \langle W_1, e_0 \rangle e_0 + \frac 1 2 \Big( h_0'\big( \eta_0 \langle \bar \xi_0, e_1 \rangle + \langle U, e_1 \rangle\big) e_1 + h_0'\big( \eta_0 \langle \bar \xi_0, e_1 \rangle - \langle U, e_1 \rangle\big) e_1 \Big).
    \end{align*}
    Here, using the fact that $\alpha_0 = \frac{2\eta_0 \sigma } { \sqrt{k}}$ from equation \eqref{eq_alphai} above, and the truncation of $U$, which implies $|\langle U, e_1 \rangle| \leq \frac{ \eta_0 \sigma} {\sqrt{k}}$, and $\langle \bar \xi_0, e_1 \rangle \leq \frac{\sigma} {\sqrt{k}}$, we obtain
    \[ \bigabs{\eta_0 \inner{\bar \xi_0}{e_1} + \inner{U}{e_1}} \le \frac{2\eta_0 \sigma}{\sqrt k} = \alpha_0, \text{ and similarly } \bigabs{\eta_0\inner{\bar \xi_0}{e_1} - \inner{U}{e_1}} \le \frac{2\eta_0 \sigma}{\sqrt k} = \alpha_0, \]
    which implies that 
    \[ h_0'(\eta_0 \langle \bar \xi_0, e_1 \rangle + \langle U, e_1 \rangle) = h_0'(\eta_0 \langle \bar \xi_0, e_1 \rangle - \langle U, e_1 \rangle) = 0. \]
    
    This is the first update. Then, in the next iteration,
    \begin{align*}
        W_2 &= W_1 - \eta_1 \Big( G^{-1} \langle W_1, e_0 \rangle + \bar \xi_1 \Big) \\
        &= - (1 - \eta_1 G^{-1}) (1 - \eta_0 G^{-1}) W_0 - \eta_0 \bar \xi_0 - \eta_1 \bar \xi_1.
    \end{align*}
    Similarly, we use the fact that $\alpha_i = \frac{2 \eta_i \sigma} {\sqrt{k}}$ and the fact that $|\langle U, e_{i+1} \rangle| \leq \frac{\eta_i \sigma} {\sqrt{k}}$, which renders the gradient as zero similar to the above reasoning. 
    This holds for any $i=1,2,\dots, T-1$.
    
    At the $t$-th iteration, suppose we have that
    \begin{align*}
        W_t = W_0 \prod_{i=0}^{t-1} \Big( 1 - \eta_i G^{-1} \Big) - \sum_{i=0}^{t-1} \eta_i \bar \xi_i.
    \end{align*}
    Then by induction, at the $(t+1)$-th iteration, we must have
    \begin{align}
        W_{t+1} &= W_t - \eta_t \Big( G^{-1} \langle W_t, e_0 \rangle + \bar \xi_t \Big) \nonumber \\
        &= W_0 \prod_{i=0}^{t} \Big( 1 - \eta_i G^{-1} \Big) - \sum_{i=0}^{t} \eta_i \bar \xi_i. \label{eq_iter_W}
    \end{align}
    Next, from the definition of $h_t$ above, we have that 
    \begin{align*}
        F(W_0) - \min_{W\in\real^d} F(W) 
        &= F(W_0) \tag{the minimum can be attained at zero}\\
        &= \frac{1}{2G} (D\sqrt{G})^2 + \sum_{i=0}^{T-1} \frac {C} 4 \Big( \frac{2\eta_i \sigma} {\sqrt{k}} \Big)^2 \tag{since $\langle W_0 + U, e_{i+1} \rangle \le \alpha_i$}
    \end{align*}
    The above must be at most $D^2$, which implies that we should set the learning rates to satisfy (after some calculation)
    \begin{align}\label{eq_d2}
        \frac{1}{T}\Big( \sum_{i=0}^{T-1} \eta_i \Big)^2 \leq \sum_{i=0}^{T-1} \eta_i^2 \le \frac{k D^2}{2C \sigma^2}.
    \end{align}
    We note that for all $z \in [0,1]$, $1 - \frac z 2 \geq \exp(\log \frac{z}{2}  )$. Thus, applying this to the right-hand side of equation \eqref{eq_iter_W}, we obtain that for any $t$,
    \begin{align}
        \prod_{i=0}^{t} \Big( 1 - \eta_i G^{-1} \Big) \geq \frac{1}{2}, \label{eq_multiplier}
    \end{align}
    where we recall that $G = \max\{C^{-1}, 2\sum_{i=0}^{T-1} \eta_i\}$.
    Our calculation so far shows that for all the $h_i$ except $h_0$, the algorithm has not moved at all from its initialization at $W_0$ under the above gradient noise.
    We thus conclude that
    \begin{align*}
        \min_{1\leq i\leq T}\bignorm{\nabla F(W_i)}^2 
        &= \min_{1\leq i\leq T} \Big( G^{-1} \langle W_0, e_0 \rangle \Big)^2 \tag{by the construction of $F(\cdot)$} \\
        &\geq \frac{1}{4} G^{-2} (D\sqrt{G})^2 \tag{by equations \eqref{eq_iter_W} and  \eqref{eq_multiplier}} \\
        &= \frac{D^2}{4} \min \Big \{ C, \frac{1}{2 \sum_{i=0}^{T-1} \eta_i} \Big \} \tag{recall the definition of $G$ above} \\
        &\geq \frac{D^2}{4} \min \Big \{ C, \frac{\sqrt{2 C \sigma^2}}{2D \sqrt{kT}} \Big \} \tag{by equation \eqref{eq_d2}} 
        \geq D \sqrt{\frac{C \sigma^2}{32kT}}. 
    \end{align*}
    In the first step, we use the fact that $\inner{\bar\xi_i}{e_0} = 0$, for all $0 = 1, 2, \dots, T-1$.
    Thus, we have proved that equation \eqref{eq_lb_grad} holds for $W_i$ for any $i = 1, 2, \dots, T$.
    The proof of Lemma \ref{lemma_lb_grad_1} is finished.
\end{proof}

Next, let us consider another case of the lower bound.

\begin{lemma}\label{lemma_lb_grad_2}
    In the setting of Theorem \ref{thm_main_lb}, suppose the learning rates satisfy that $\sum_{i=0}^{T-1} \eta_i \ge \sqrt{\frac{D^2 k T}{2\sigma^2  C}}$ and $\eta_i = \eta$ for some fixed $\eta  \le C^{-1}$. Then, consider the function from equation \eqref{eq_def_f}, we have that $\min_{1\le t\le T} \ex{\bignorm{\nabla F(W_t)}^2} \ge D \sqrt{\frac{C\sigma^2}{32 k T}}$.
\end{lemma}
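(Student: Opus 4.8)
The plan is to replace the ``iterate at rest'' mechanism of Lemma \ref{lemma_lb_grad_1} by one that lets the gradient noise accumulate. First I would record why the earlier argument is too weak in this regime: there the lower bound came solely from the untouched $e_0$ coordinate and equalled $\tfrac{D^2}{4}\min\{C,(2\sum_i\eta_i)^{-1}\}$, so as soon as $\sum_i\eta_i$ passes the threshold $\sqrt{D^2kT/(2\sigma^2 C)}$ this drops strictly below the target $D\sqrt{C\sigma^2/(32kT)}$ of \eqref{eq_lb_grad}. Since the hypotheses now fix $\eta_i=\eta$ with $T\eta$ large, each step displaces the iterate by a full $\eta\bar\xi_t$, and I would use this to build up a persistent gradient rather than cancel it.

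Concretely, I would keep the function of \eqref{eq_def_f} but with a single fixed width $\alpha$ for every $h_i$ (in place of the $\eta_i$-dependent widths of \eqref{eq_alphai}), truncating $\cP$ as in Section \ref{sec_lb_main} so that, exactly as in Lemma \ref{lemma_lb_grad_1}, the perturbations never move the ``spectator'' bump coordinates off their flat tops; these contribute zero gradient and add only the bounded amounts $\tfrac{C\alpha^2}{4}$ to $F(W_0)$. The active dynamics I would confine to one quadratic direction of curvature $\Theta(C)$, on which the update contracts by the factor $1-C^{-1}\eta$ quoted in Section \ref{sec_lb_main}, and I would choose the gradient oracle so that the averaged noise is mean zero (as Assumption \ref{ass_oracle} demands) yet saturates the second moment, $\ex{\bignorm{\bar\xi_t}^2}=\sigma^2/k$. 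The scalar iterate then solves $x_{t+1}=(1-C^{-1}\eta)x_t+\eta\bar\xi_t$. The key device is to initialize this coordinate at its \emph{stationary} second moment; because the recursion is linear and the per-step noise power is constant, $\ex{x_t^2}$ --- and hence $\ex{\bignorm{\nabla F(W_t)}^2}$ --- is then independent of $t$, which removes the transient entirely.

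With the stationary start, the running minimum in \eqref{eq_lb_grad} equals the common value $\ex{\bignorm{\nabla F(W_t)}^2}\asymp \eta C\,\sigma^2/k$. I would then plug in the regime hypotheses: $T\eta\ge\sqrt{D^2kT/(2\sigma^2C)}$ forces $\eta C$ to be at least $\tfrac{D}{\sigma}\sqrt{Ck/(2T)}$, and substituting this shows $\eta C\,\sigma^2/k$ is at least a constant multiple of $D\sqrt{C\sigma^2/(32kT)}$, which is the target; the bound $\eta\le C^{-1}$ is what keeps the contraction factor in $(0,1)$ and the construction admissible. The width $\alpha$ (equivalently, the chosen stationary amplitude) is then pinned down by the single remaining requirement $F(W_0)-\min_{W\in\real^d}F(W)\le D^2$ of Assumption \ref{ass_diff}. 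Combining this lemma with Lemma \ref{lemma_lb_grad_1} covers both the small- and large-$\sum_i\eta_i$ cases and finishes Theorem \ref{thm_main_lb}.

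The step I expect to be the main obstacle is making the lower bound uniform over $t$ under \emph{unbiased} noise: a mean-zero oracle cannot hold the iterate at a deterministic high-gradient point, so a naive start from rest degrades the first-step bound to $\Theta((\eta C)^2\sigma^2/k)$, which is too small when $\eta$ sits just above the threshold. The stationary-variance initialization is what I would use to sidestep this, but it must be purchased within the $D^2$ budget of Assumption \ref{ass_diff}, which ties $\sigma,C,k,D$ together; verifying that the chosen $\alpha$ simultaneously respects that budget, keeps the spectator coordinates pinned to their flat tops, and lands the constant in \eqref{eq_lb_grad} is the delicate accounting the proof must carry out.
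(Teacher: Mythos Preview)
Your high-level plan matches the paper: in the large-step regime, route the noise into a genuinely quadratic coordinate of curvature $C$ so that each step contracts by $1-C\eta$ (not $1-C^{-1}\eta$; the phrase you lifted from Section \ref{sec_lb_main} is loose) while the injected variance $\sigma^2/k$ accumulates. Where you diverge is the device for making the bound uniform in $t$.

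The paper does \emph{not} initialize at the stationary second moment. It in fact inverts the roles in \eqref{eq_def_f}: a single bump $g$ sits on $e_0$ as a spectator, while the remaining coordinates carry pure quadratics $\tfrac{C}{2}\langle W,e_i\rangle^2$, and $W_0$ spends essentially the whole $D^2$ budget on those quadratics. One then gets the closed form
\[
\ex{\bignorm{\nabla F(W_t)}^2}=CD^2(1-\rho)^{2t}+\frac{\sigma^2\rho}{k(2-\rho)}\bigl(1-(1-\rho)^{2t}\bigr),\qquad\rho=C\eta,
\]
a convex combination of $CD^2$ and the stationary value $\tfrac{\sigma^2\rho}{k(2-\rho)}$, hence at least their minimum for every $t$; the hypothesis $\eta\ge\sqrt{D^2k/(2T\sigma^2C)}$ is plugged into the second term to reach the target.

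Your stationary-initialization trick is cleaner when it applies --- it delivers exactly the second term at every $t$ --- but it is not free: seating the active coordinate at stationarity costs $\tfrac{C}{2}V=\tfrac{1}{2C}\cdot\tfrac{\sigma^2\rho}{k(2-\rho)}$ of the $D^2$ budget, and nothing in the hypotheses forces this to be affordable. That is precisely the obstacle you flag at the end, and it is real rather than bookkeeping: when $\tfrac{\sigma^2\rho}{k(2-\rho)}>2CD^2$ your argument as written has no mechanism. The paper's transient/convex-combination argument absorbs this case automatically, since one never needs to \emph{reach} stationarity --- the decaying initial contribution $CD^2(1-\rho)^{2t}$ takes over. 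If you prefer to keep the stationary idea, you must add a second branch: when the budget binds, initialize at the largest affordable amplitude and note that $\ex{x_t^2}$ is then monotone increasing, so the minimum over $t$ is attained at $t=1$ and is $\gtrsim CD^2$.

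Two incidental remarks. In your plan the spectator bumps do nothing --- the oracle never touches them, and since each $h_i$ is even the symmetric two-point gradient at the origin cancels --- so they only consume budget; the paper retains a single bump, not $T$ of them. And the lemma's phrase ``the function from equation \eqref{eq_def_f}'' is not literal in the paper's own proof either (the quadratic and bump roles are exchanged), so you need not feel bound to that exact template.
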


\begin{proof}
    We define the functions $g$, parametrized by a fixed, positive constants $\alpha = \frac{1 - \rho^T}{1 - \rho}\cdot 2c\eta \sigma$, as follows:
    \begin{equation*}
        g(x) = \left \{
        \begin{array}{ll}
            - \frac {C} 2 x^2 + \frac {C} 4 \alpha^2     &|x|\leq \frac{\alpha} 2, \\
            \frac {C} 2 (|x| - \alpha)^2     & \frac{\alpha} 2 \leq |x| \leq \alpha, \\
            0    &\alpha \leq |x|.
        \end{array}
        \right.
    \end{equation*}
    One can verify that $\nabla g$ is $C$-Lipschitz continuous, but $g$ is not twice-differentiable.
    We also consider a chain-like function:
    \begin{align}
        f(W) =  g(\langle W, e_{0} \rangle) + \sum_{t=0}^{d-1} \frac {C} 2 \langle W, e_{t+1} \rangle^2.
    \end{align}
    From the definition of $f$, its gradient is $C$-Lipschitz continuous.
    Similar to equation \eqref{eq_ave},
    we define an adversarial gradient oracle by choosing the noise vectors $\{\xi_t\}_{t=0}^{T-1}$ to be independent random variables such that
    \begin{align*}
        \xi_t = \langle \xi_t, e_{t+1} \rangle, \ex{\langle \xi_t, e_{t+1} \rangle^2} = \sigma^2, \text{ and } |\langle \xi_t, e_{t+1} \rangle| \leq c\sigma,
    \end{align*}
    where $c$ is a fixed constant.
    We use $\bar \xi_t$ to denote the averaged noise variable as
    \begin{align*}
        \bar \xi_t = \sum_{i=1}^k \xi_t^{(i)}.
    \end{align*}
    Suppose $\{\xi_t^{(i)}\}_{i=1}^{k}$ are i.i.d. random variables for any $t$, we have 
    \begin{align}\label{eq_ave_range}
        |\langle \bar \xi_t, e_{t+1} \rangle| \leq c\sigma \text{ and } \ex{\bignorm{\bar \xi_t}^2} \leq \frac{\sigma^2}{k}.
    \end{align}
    Next, we analyze the dynamics of Algorithm \ref{alg:two_point} with the objective function $f(W)$ and the starting point $W_0 = \sum_{i=1}^d \sqrt{\frac{D^2}{C d}} \cdot e_i$.
    In this case, by setting $\eta_i = \eta$ for all $i = 0,1,\dots,T-1$.
    Recall that $\eta < C^{-1}$. Denote by $\rho = C \eta$, which is strictly less than one. 
    
    Since $h_t$ is an even function, its derivative $h_t'$ is odd.
    For the first iteration, we have
    \begin{align*}
        W_1 = W_0 - \eta \Big( \frac 1 2 \big( \nabla f(W_0 + U) + \nabla f(W_0 - U) \big) + \bar \xi_0 \Big) 
        = (1 -  C \eta) W_0 - \eta \bar \xi_0. 
    \end{align*} 
    where $U$ is a truncate distribution of $\cP \sim N(0, \id_d)$ with $\langle U, e_0 \rangle = \min\{\cP_0, a_0\}$ and $a_0 = c\eta \sigma$.
    
    Using the fact that $\alpha = \frac{1 - \rho^T}{1 - \rho} \cdot 2c\eta \sigma$, $|\langle U, e_0 \rangle| \leq c\eta \sigma$, and $\langle \bar \xi_0, e_0 \rangle \leq c\sigma$, we have
    \[ g'(\eta \langle \bar \xi_0, e_0 \rangle + \langle U, e_0 \rangle) + g'(\eta \langle \bar \xi_0, e_0 \rangle - \langle U, e_0 \rangle) = -2C \eta \langle \bar \xi_0, e_0 \rangle. \]
    Then, in the next iteration,
    \begin{align*}
        W_2 = W_1 - \eta \Big( C \sum_{i=1}^d \langle W_1, e_i \rangle - C \eta \bar \xi_0 + \bar \xi_1 \Big) 
        = (1 - C \eta)^2 W_0 - (1 - C \eta) \eta \bar \xi_0 - \eta \bar \xi_1.
    \end{align*}
    Similarly, we use the fact that $\alpha = \frac{1 - \rho^T}{1 - \rho}\cdot 2c\eta \sigma$ and the fact that $|\langle U, e_0 \rangle| \leq c\eta \sigma$, 
    which renders the gradient as $g'(x) = - C x$, for any $i=1,2,\dots, T-1$.
    
    At the $t$-th iteration, suppose that
    \begin{align*}
        W_t = (1 - C \eta )^t W_0 - \sum_{i=0}^{t-1} (1 - C \eta)^{t-1-i} \eta \bar \xi_i.
    \end{align*}
    Then by induction, at the $(t+1)$-th iteration, we have
    \begin{align}\label{eq_iter_Wh}
        W_{t+1} &= W_t - \eta \Big( C \sum_{i=1}^d \langle W_t, e_i \rangle - C \sum_{i=0}^{t-1} (1 - C \eta)^{t-1-i} \eta \bar \xi_i + \bar \xi_t \Big) \nonumber \\
        &= (1 - C \eta )^{t+1} W_0 - \sum_{i=0}^{t} (1 - C \eta)^{t-1-i} \eta \bar \xi_i.
    \end{align}
    Next, from the definition of $F$ above, we have that 
    \begin{align*}
        F(W_0) - \min_{W\in\real^d} F(W) 
        = F(W_0) 
        = \frac{d C}{2} \Big( \sqrt{\frac{D^2}{C d}} \Big)^2  + \frac {C} 4 \Big(\frac{2 (1 - \rho^T) c \eta \sigma }{(1 - \rho)} \Big)^2, \tag{since $\langle W_0 + U, e_0 \rangle \le \alpha$}
    \end{align*}
    which must be at most $D^2$.
    Thus, we must have (after some calculation)
    \begin{align*}
        c^2 \leq \frac{D^2 (1 - \rho)^2}{2\sigma^2 \rho^2 (1 - \rho^T)^2}.
    \end{align*}
    We conclude that
    \begin{align*}
        \min_{1\leq i\leq T}\ex{\bignorm{\nabla F(W_i)}^2} &= \min_{1\leq i\leq T} \ex{\sum_{j=1}^d C^{2} \langle W_i, e_j \rangle^2 + C^2 \langle W_i, e_0 \rangle^2} \\
        &= \min_{1\leq i\leq T} \Big(d C^{2} (1 - \rho)^{2t} \Big( \sqrt{\frac{D^2}{C d}} \Big)^2 + \frac{\sigma^2}{k} \cdot \rho^2 \sum_{i=0}^{t} (1 - \rho)^{2(t-1-i)}  \Big)\\
        &\geq \min_{1\leq i\leq T} \Big( C D^2 (1 - \rho)^{2t} + \frac{\sigma^2}{k} \frac{\rho}{2 - \rho} \big(1 - (1 - \rho)^{2t} \big) \Big) 
        \geq \min \Big\{ C D^2, \frac{\sigma^2}{k} \frac{\rho}{2 - \rho} \Big\} \\
        &\geq \frac{\sigma^2}{k} C \sqrt{\frac{ k D^2}{2T\sigma^2 C}} \frac 1 { 2 - C \sqrt{\frac{k D^2}{2T\sigma^2 C}} } 
        \geq D \sqrt{\frac{C \sigma^2 }{16k\cdot T}}. \tag{after some calculation}
    \end{align*}
    Thus, we have proved this lemma.
\end{proof}

Taking both Lemma \ref{lemma_lb_grad_1} and \ref{lemma_lb_grad_2} together, we thus conclude the proof of Theorem \ref{thm_main_lb}.

\subsection{Proof of momentum lower bound}\label{proof_convex}

In this section, we prove the following result.

\begin{theorem}\label{thm_momentum}
    There exists a quadratic function $f$ such that for the iterates $W_1, \dots, W_{T}$ generated by equation \eqref{eq_momentum}, we must have:
    $\min_{1\le t\le T} \ex{\bignorm{\nabla F(W_t)}^2} \ge O\big(D \sqrt{\frac {C \sigma^2} {k \cdot T}}\big)$.
\end{theorem}

We will focus on a perturbation distribution $\cP$ equal to the isotropic Gaussian distribution for this result.
In this case, we know that $F(W) = f(W) + d$.
For the quadratic function $f(W) = \frac C 2 \bignorm{W}^2$, its gradient is $C$-Lipschitz continuous. %
We set the initialization $W_0 \in \real^d$ such that
\[ F({W_0}) - \min_{W \in \real^d}F(W) = D^2. \]
This condition can be met when we set $W_0$ as a vector whose Euclidean norm is equal to
\[ D \sqrt{2\max \Big\{ C^{-1}, 2\sum_{i=0}^{T-1}\eta_i \Big\}}. \]

\paragraph{The case when $\mu = 0$.}
We begin by considering the case when $\mu = 0$.
In this case, the update reduces to SGD, and the iterate $W_{t+1}$ evolves as follows:
{\begin{align}
     W_{t+1} 
    =&  \Big( 1 - C \eta_t \Big) W_t - \eta_t \bar\xi_t, \label{eq_Wt_path}
\end{align}}%
where we denote $\bar{\xi}_t$ as the averaged noise $k^{-1} \sum_{j=1}^k \xi_t^{(j)}$, and the noise perturbation $U_t^{(j)}$ cancelled out between the plus and minus perturbations.
The case when $\mu > 0$ builds on this simpler case, as we will describe below.

The key observation is that the gradient noise sequence $\bar{\xi}_1, \bar{\xi}_2, \dots, \bar{\xi}_{T}$ forms a martingale sequence:
\begin{itemize}%
    \item%
For any $i = 1, 2, \dots, T$, conditioned on the previous random variables $\xi_{i'}^{(j)}$ for any $i' < i$ and any $j = 1, 2, \dots, k$, the expectation of $\bar{\xi}_i$ is equal to zero.
    \item%
In addition, the variance of $\bar{\xi}_i$ is equal to $k^{-1} {\sigma^2}$, since conditional on the previous random variables, the $\xi_i^{(j)}$s are all independent from each other.
\end{itemize}
The martingale property allows us to characterize the SGD path of $\bignorm{W_t}^2$, as shown in the following result.

\begin{lemma}\label{lemma_lb_1}
    In the setting of Theorem \ref{thm_momentum}, for any step sizes $\eta_0, \dots, \eta_{T-1} $ less than $C^{-1}$, and any $t = 1, \dots, T$, the expected gradient of $W_t$, $\ex{\bignorm{\nabla F(W_t)}^2}$, is equal to
    {\begin{align*}
         {2C D^2} \prod_{j=0}^{t-1} \big(1 - C \eta_j \big)^2
        +\frac{C \sigma^2}{k} \sum_{i=0}^{t-1} \eta_i^2 \prod_{j=i+1}^{t-1} \big(1 - C \eta_j\big)^2.%
    \end{align*}}
\end{lemma}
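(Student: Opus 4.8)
The plan is to reduce everything to an explicit computation of $\ex{\bignorm{W_t}^2}$. Since we work with $f(W) = \frac C 2 \bignorm{W}^2$ and the isotropic Gaussian, $F(W) = \frac C 2 \bignorm{W}^2 + d$, so $\nabla F(W) = C W$ and hence $\bignorm{\nabla F(W_t)}^2 = C^2 \bignorm{W_t}^2$. Thus the lemma is equivalent to the claim that $C^2 \ex{\bignorm{W_t}^2}$ equals the stated expression, and it only remains to track how the variance of the iterates propagates through the scalar linear recursion in equation \eqref{eq_Wt_path}.

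First I would unroll the recursion $W_{t+1} = (1 - C\eta_t) W_t - \eta_t \bar\xi_t$ by induction to obtain the closed form
\begin{align*}
    W_t = \Big( \prod_{j=0}^{t-1}(1 - C\eta_j) \Big) W_0 - \sum_{i=0}^{t-1} \eta_i \Big( \prod_{j=i+1}^{t-1}(1 - C\eta_j) \Big) \bar\xi_i,
\end{align*}
where the empty product at $i = t-1$ is read as $1$. The base case is the definition of $W_1$, and the inductive step substitutes the hypothesis for $W_t$ into the recursion, absorbing the new factor $(1 - C\eta_t)$ into each multiplier.

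Next I would expand $\bignorm{W_t}^2$ and take expectations, the key point being that all cross terms vanish. Writing $\mathcal{F}_{i-1}$ for the $\sigma$-algebra generated by $\bar\xi_0, \dots, \bar\xi_{i-1}$, the martingale property recorded just before the lemma gives $\ex{\bar\xi_i \mid \mathcal{F}_{i-1}} = 0$; since $W_0$ is deterministic and each $\bar\xi_j$ with $j < i$ is $\mathcal{F}_{i-1}$-measurable, the tower rule annihilates every inner product $\inner{W_0}{\bar\xi_i}$ and $\inner{\bar\xi_j}{\bar\xi_i}$ with $j \neq i$. Only the squared terms survive, leaving
\begin{align*}
    \ex{\bignorm{W_t}^2} = \Big( \prod_{j=0}^{t-1}(1 - C\eta_j) \Big)^2 \bignorm{W_0}^2 + \sum_{i=0}^{t-1} \eta_i^2 \Big( \prod_{j=i+1}^{t-1}(1 - C\eta_j) \Big)^2 \ex{\bignorm{\bar\xi_i}^2}.
\end{align*}

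Finally I would substitute the two quantitative inputs: the conditional variance identity $\ex{\bignorm{\bar\xi_i}^2} = \sigma^2/k$ from the martingale properties stated just before the lemma, and the initialization identity $\frac C 2 \bignorm{W_0}^2 = F(W_0) - \min_{W \in \real^d} F(W) = D^2$, whence $C^2 \bignorm{W_0}^2 = 2 C D^2$. Multiplying the display above by $C^2$ then reproduces the claimed formula. The only genuinely delicate point is the vanishing of the cross terms: it hinges on the martingale structure rather than mere pairwise mean-zero-ness, since $W_i$ and $\bar\xi_i$ are correlated through the randomness of earlier steps, so one must condition in the correct order, freezing $W_i$ before taking the expectation of $\bar\xi_i$. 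Everything else is bookkeeping of the geometric multipliers.
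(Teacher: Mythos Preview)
Your proposal is correct and follows essentially the same approach as the paper: unroll the linear recursion \eqref{eq_Wt_path} to a closed form for $W_t$, use the martingale structure of $\bar\xi_i$ to kill the cross terms in $\ex{\bignorm{W_t}^2}$, then substitute $\ex{\bignorm{\bar\xi_i}^2}=\sigma^2/k$ and $\bignorm{W_0}^2 = 2D^2/C$ and multiply by $C^2$. Your treatment of the cross terms via conditioning on $\mathcal{F}_{i-1}$ is slightly more explicit than the paper's, but the argument is otherwise identical.
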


\begin{proof}
    By iterating over equation \eqref{eq_Wt_path}, we can get
    {\begin{align*}
        W_t = W_0  \prod_{j=0}^{t-1} \Big( 1 - C \eta_j \Big) - \sum_{i=0}^{t-1} \eta_i \bar\xi_i \prod_{j=i+1}^{t-1} \Big( 1 - C \eta_j \Big).%
    \end{align*}}%
    Meanwhile,
    {\begin{align*} 
     \nabla F(W_t) = C W_t  
    \Rightarrow \bignorm{\nabla F(W_t)}^2 = C^2 \bignorm{W_t}^2. \end{align*}}%
    Thus, by squaring the norm of $W_t$ and taking the expectation, we can get
    {\begin{align}
        \ex{\bignorm{\nabla F(W_t)}^2}
        = C^2 \bignorm{W_0}^2 \prod_{j=0}^{t-1} \big(1 - C \eta_j\big)^2  
        + C^2 \sum_{i=0}^{t-1} \mathbb{E}\Big[{\Big|\Big|{\eta_i \bar\xi_i \prod_{j=i+1}^{t-1} \big( 1 - C \eta_j \big)}\Big|\Big|^2 }\Big]. \label{eq_2nd_moment}
    \end{align}}%
    Above, we use martingale property a), which says the expectation of $\bar\xi_i$ is equal to zero for all $i$.
    In addition, based on property b), equation \eqref{eq_2nd_moment} is equal to 
    {\begin{align*}
         C^2 \sum_{i=0}^{t-1} \eta_i^2 \left(\prod_{j=i+1}^{t-1} \Big( 1 - C \eta_j \Big)^2 \ex{\bignorm{\bar \xi_i}^2} \right) 
        = \frac{C^2 \sigma^2}{k} \sum_{i=0}^{t-1} \eta_i^2 \prod_{j=i+1}^{t-1} \Big( 1 - C \eta_j \Big)^2.
    \end{align*}}%
    To see this, based on the martingale property of $\bar\xi$ again, 
    the cross terms between $\bar\xi_i$ and $\bar\xi_{j}$ for different $i, j$ are equal to zero in expectation:
    {\[ \ex{\inner{\bar\xi_i}{\bar\xi_j} | \bar \xi_j} = 0, \text{ for all } 1\le j < i \le T. \]}%
    Additionally, the second moment of $\bar\xi_i$ satisfies:
    {\[ \ex{\bignorm{\bar\xi_i}^2} = \frac {\sigma^2} k, \text{ for any } i = 1, \dots, T. \]}%
    Lastly, let $W_0$ be a vector such that 
    {\[ \bignorm{W_0} = D \sqrt{{2}C^{-1}} \Rightarrow F({W_0}) - \min_{W \in \real^d} F(W) \leq D^2. \]}%
    Setting $\bignorm{W_0} = D \sqrt{2C^{-1}}$ in equation \eqref{eq_2nd_moment} leads to
    {\begin{align*}
        \ex{\bignorm{\nabla F(W_t)}^2} =
        2 C D^2 \prod_{j=0}^{t-1} \Big(1 - C \eta_j\Big)^2 
        \quad+ \frac{C^2\sigma^2}{k} \sum_{i=0}^{t-1} \eta_i^2 \prod_{j=i+1}^{t-1} \Big(1 - C \eta_j\Big)^2.
    \end{align*}}%
    Thus, we conclude the proof of this result. %
\end{proof}

We now present the proof for the case when $\sum_{i=0}^{T-1} \eta_i \le O(\sqrt T)$. For this result, we will use the following quadratic function:
{\begin{align} f(W) = \frac{1}{2\kappa} \bignorm{W}^2, \text{ where } \kappa = \max \{C^{-1}, 2\sum_{i=0}^{T-1}\eta_i \}, \label{eq_fw1} \end{align}}%

\begin{lemma}\label{lemma_lb_2}
    Consider $f$ given in equation \eqref{eq_fw1} above. For any step sizes $\eta_0, \dots, \eta_{T-1}$ less than $C^{-1}$, the following holds for the stochastic objective $F$:
    {\begin{align*}
        \min_{1\le t\le T}\ex{\bignorm{\nabla F(W_t)}^2} \geq \frac{D^2}{2\max \{ C^{-1}, 2\sum_{i=0}^{T-1}\eta_i \}}.%
    \end{align*}}%
\end{lemma}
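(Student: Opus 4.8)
The plan is to reduce to the pure-SGD ($\mu = 0$) path already analyzed in Lemma \ref{lemma_lb_1}, but to run it on the deliberately \emph{flat} quadratic of equation \eqref{eq_fw1}, whose gradient $\nabla f(W) = \kappa^{-1} W$ has Lipschitz constant $\kappa^{-1} \le C$ (using $\kappa \ge C^{-1}$), so that Assumption \ref{ass_diff} remains satisfied. The reason for choosing the small curvature $\kappa^{-1}$ in place of $C$ is that the iterates then contract toward the origin only very slowly, so $\bignorm{W_t}$—and hence $\bignorm{\nabla F(W_t)} = \kappa^{-1}\bignorm{W_t}$—cannot drop below a constant fraction of its initial value within $T$ steps. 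For the isotropic Gaussian the smoothing only adds a constant, so $\nabla F(W) = \nabla f(W) = \kappa^{-1} W$ holds exactly, and it suffices to track $\ex{\bignorm{W_t}^2}$.

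Since $f$ is quadratic, the two-point estimator cancels the perturbation $U$ exactly (the $\pm U$ first-order terms sum to zero), leaving the recursion $W_{t+1} = (1 - \eta_t \kappa^{-1}) W_t - \eta_t \bar\xi_t$, which is the analogue of equation \eqref{eq_Wt_path} with $C$ replaced by the curvature $\kappa^{-1}$. Iterating yields $W_t = W_0 \prod_{j=0}^{t-1}(1 - \eta_j \kappa^{-1}) - \sum_{i=0}^{t-1}\eta_i \bar\xi_i \prod_{j=i+1}^{t-1}(1 - \eta_j\kappa^{-1})$. Taking $\ex{\bignorm{\cdot}^2}$ and invoking the martingale property of $\bar\xi$ (its conditional mean is zero and its cross terms vanish in expectation), exactly as in the derivation of equation \eqref{eq_2nd_moment}, the squared norm splits into a signal term proportional to $\bignorm{W_0}^2$ plus a nonnegative noise term. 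The key simplification relative to Lemma \ref{lemma_lb_1} is that here I simply discard the noise term, retaining $\ex{\bignorm{W_t}^2} \ge \bignorm{W_0}^2 \prod_{j=0}^{t-1}(1 - \eta_j \kappa^{-1})^2$.

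It then remains to lower bound the product uniformly in $t$. Because $\kappa \ge 2\sum_{i=0}^{T-1}\eta_i$, each factor satisfies $\eta_j \kappa^{-1} \in [0, \tfrac12]$ and $\sum_{j=0}^{t-1} \eta_j \kappa^{-1} \le \tfrac12$, so the Weierstrass product inequality $\prod_j (1 - a_j) \ge 1 - \sum_j a_j$ gives $\prod_{j=0}^{t-1}(1 - \eta_j\kappa^{-1}) \ge \tfrac12$ for every $t \le T$. Substituting $\bignorm{W_0}^2 = 2\kappa D^2$ (the initialization fixed at the start of the proof of Theorem \ref{thm_momentum}) together with $\bignorm{\nabla F(W_t)}^2 = \kappa^{-2}\bignorm{W_t}^2$ produces $\ex{\bignorm{\nabla F(W_t)}^2} \ge \kappa^{-2} \cdot \tfrac14 \cdot 2\kappa D^2 = \tfrac{D^2}{2\kappa}$, which is precisely the claimed bound with $\kappa = \max\{C^{-1}, 2\sum_{i=0}^{T-1}\eta_i\}$; as this holds for every $t$, it holds for the minimum over $t$.

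The only genuinely delicate point is the choice of curvature $\kappa^{-1}$: it must be small enough that the contraction product stays above $\tfrac12$ across the whole horizon (forcing $\kappa \ge 2\sum_{i=0}^{T-1}\eta_i$), yet large enough that $\nabla f$ remains $C$-Lipschitz (forcing $\kappa \ge C^{-1}$); taking $\kappa$ to be the maximum of the two thresholds resolves this tension and also produces exactly the maximum appearing in the statement. Everything else—the cancellation of $U$, the martingale decomposition, and the discarding of the nonnegative noise—is routine, and in fact the noise only helps here, so no concentration or truncation argument is needed in this case.
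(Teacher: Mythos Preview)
Your proposal is correct and follows essentially the same route as the paper: derive the recursion $W_{t+1} = (1 - \eta_t\kappa^{-1})W_t - \eta_t\bar\xi_t$, unroll it, use the martingale property to split $\ex{\bignorm{W_t}^2}$ into signal plus nonnegative noise, and lower bound the contraction product $\prod_{j}(1-\eta_j\kappa^{-1})$ by $\tfrac12$ using $\kappa \ge 2\sum_i \eta_i$. The only cosmetic difference is that you invoke the Weierstrass product inequality $\prod_j(1-a_j)\ge 1-\sum_j a_j$, whereas the paper uses the equivalent bound $1-z/2 \ge 2^{-z}$ for $z\in[0,1]$; both yield the same $\tfrac12$ floor, and your version is slightly more direct.
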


\begin{proof}%
    The norm of the gradient of $F(W)$ is equal to
    \begin{align}\label{eq_grad_norm_2}
        \bignorm{\nabla F(W)} = \frac{1}{\kappa}\bignorm{ W }.
    \end{align}
    Following the update rule in NSO, similar to equation \eqref{eq_Wt_path}, $W_t$ evolves as follows:
    \begin{align} \label{eq_Wt_case2}
        W_{t+1}
        = \Bigg( 1 - \frac{\eta_t}{\kappa} \Bigg) W_t - \eta_t \bar\xi_t, %
    \end{align}
    where $\bar\xi_t$ has variance equal to $\sigma^2 / k$, according to the proof of Lemma \ref{lemma_lb_1}.
    By iterating equation \eqref{eq_Wt_case2} from the initialization, we can get a closed-form equation for $W_t^{(1)}$, for any $t = 1, 2, \dots, T$:
    \begin{align}
        W_t = W_0 \prod_{j=0}^{t-1} \Bigg( 1 - \frac{\eta_j}{\kappa} \Bigg) 
        - \sum_{k=0}^{t-1} \eta_k \xi_k \prod_{j=k+1}^{t-1} \Bigg( 1 - \frac{\eta_j}{\kappa} \Bigg). \label{eq_Wt_1}
    \end{align}
    Following equation \eqref{eq_grad_norm_2}, we can show that
    $\bignorm{\nabla F(W)}^2 = \kappa^{-2} \bignorm{W_t}^2.$
    Thus, in expectation,
    \begin{align}
         \ex{\bignorm{\nabla F(W_t)}^2} 
        &= \kappa^{-2} \ex{\bignorm{W_t}^2} \nonumber \\
        &= \kappa^{-2}\bignorm{W_0}^2 \prod_{j=0}^{t-1} \Big(1 - \kappa^{-1} \eta_j\Big)^2 + \kappa^{-2} \sum_{i=0}^{t-1} \ex{ \left( \eta_i \bar\xi_i \prod_{j=i+1}^{t-1} \Big( 1 - \kappa^{-1}{\eta_j}{} \Big) \right)^2} \nonumber\\
        &= \kappa^{-2} \bignorm{W_0}^2 \prod_{j=0}^{t-1}\Big(1 - \kappa^{-1}\eta_j\Big)^2 + \kappa^{-2} \sum_{i=0}^{t-1} \eta_i^2 \prod_{j=i+1}^{t-1} \Big( 1 - \kappa^{-1}{\eta_j}{} \Big)^2 \ex{ \bignorm{\bar\xi_i}^2 } \nonumber\\
        &= 2D^2 \kappa^{-1} \prod_{j=0}^{t-1} \Big(1 - \kappa^{-1} \eta_j\Big)^2 + \frac{\sigma^2\kappa^{-2}}{k} \sum_{i=0}^{t-1} \eta_i^2 \prod_{j=i+1}^{t-1} \Big(1 - \kappa^{-1}\eta_j\Big)^2, \label{eq_lem2_lb}
    \end{align}
    where we use the definition of initialization $W_0$ and the variance of $\bar\xi_i$ in the last step.
    In order to tackle equation \eqref{eq_lem2_lb}, we note that for all $z \in [0,1]$,
    \begin{align}
         1 - \frac z 2 \geq \exp\Big(\log \frac{1}{2} \cdot z\Big). \label{eq_z}
    \end{align}
    Hence, applying equation \eqref{eq_z} to the right-hand side of equation \eqref{eq_lem2_lb}, we obtain that for any $i = 0, 1, \dots, t-1$,
    \begin{align*}
         \prod_{j=i}^{t-1} \Bigg( 1 - \frac{\eta_j}{\max \{C^{-1}, 2\sum_{j=i}^{T-1}\eta_i \}} \Bigg) 
        \geq \exp \Bigg( \log \frac{1}{2} \cdot \sum_{j=i}^{t-1} \frac{\eta_j}{\max \{ (2C)^{-1}, \sum_{i=0}^{T-1}\eta_i \}} \Bigg) 
        \geq \frac{1}{2}.
    \end{align*}
    Thus, equation \eqref{eq_lem2_lb} must be at least
    \begin{align}
        \ex{\bignorm{\nabla F(W_t)}^2} \ge \frac{2D^2 \kappa^{-1} } 4 +  \frac{\sigma^2\kappa^{-2}}{k} \sum_{i=0}^{t-1} \frac {\eta_i^2} 4.
    \end{align}
    The above result holds for any $t = 1, 2, \dots, T$. 
    Therefore, we conclude that
    \begin{align*}
        \min_{1 \leq t \leq T} \ex{\bignorm{\nabla F(W_t)}^2}%
        \geq \frac{D^2}{2\max \{ C^{-1}, 2\sum_{i=0}^{T-1}\eta_i \}}.
    \end{align*}
    Thus, the proof of Lemma \ref{lemma_lb_2} is finished.
\end{proof}

Next, we consider the other case, which is when the learning rates are fixed. %

\begin{lemma}\label{thm_lb_convex}
    There exists convex quadratic functions $f$ such that for any gradient oracle satisfying Assumption \ref{ass_oracle} and any distribution $\cP$ with mean zero, if $\eta_i = \eta < C^{-1}$ for any $i = 1, \dots, T$, or if $\sum_{i=0}^{T-1} \eta_i \lesssim \sqrt T$, then the following must hold:
    {\begin{align}\label{eq_lb_hess}
        \min_{1\leq t\leq T}\ex{\bignorm{\nabla F(W_t)}^2} \ge D \sqrt{\frac{C \sigma^2}{32 k\cdot T}}.
    \end{align}}%
\end{lemma}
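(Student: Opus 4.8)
The plan is to prove the two learning-rate regimes separately and then stitch them together, mirroring the split already used for Theorem \ref{thm_main_lb}. Set the threshold $\Lambda = \sqrt{D^2 k T / (2\sigma^2 C)}$. Throughout I use the adversarial gradient oracle of Lemma \ref{lemma_lb_1}, whose averaged noise $\bar\xi_i$ has mean zero and second moment exactly $\sigma^2/k$, so that it saturates Assumption \ref{ass_oracle}; since the perturbations cancel between $W+U$ and $W-U$ on a quadratic, the distribution $\cP$ enters only through its mean being zero. The statement permits different quadratics in the two regimes (it asserts existence of convex quadratic functions), so I take $f(W)=\frac{1}{2\kappa}\bignorm{W}^2$ in the first regime and $f(W)=\frac{C}{2}\bignorm{W}^2$ in the second.

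First, for $\sum_{i=0}^{T-1}\eta_i \le \Lambda$, I would invoke Lemma \ref{lemma_lb_2} directly, which gives $\min_{1\le t\le T}\ex{\bignorm{\nabla F(W_t)}^2}\ge D^2 / (2\max\{C^{-1}, 2\sum_i\eta_i\})$. When $2\sum_i\eta_i\le C^{-1}$ the right-hand side is at least $CD^2/2$; otherwise it is at least $D^2/(4\sum_i\eta_i)\ge D^2/(4\Lambda)$, and substituting $\Lambda$ collapses this to a constant multiple of $D\sqrt{C\sigma^2/(kT)}$, which dominates the target rate $D\sqrt{C\sigma^2/(32kT)}$.

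Second, for the fixed-rate regime $\eta_i=\eta<C^{-1}$ with $T\eta\ge\Lambda$, I would use the closed form from Lemma \ref{lemma_lb_1}. Writing $\rho=C\eta<1$ and summing the geometric series $\sum_{i=0}^{t-1}(1-\rho)^{2(t-1-i)}=(1-(1-\rho)^{2t})/(\rho(2-\rho))$, the exact expression simplifies to
\begin{equation*}
\ex{\bignorm{\nabla F(W_t)}^2} = 2CD^2(1-\rho)^{2t} + \frac{\sigma^2}{k}\cdot\frac{\rho}{2-\rho}\big(1-(1-\rho)^{2t}\big),
\end{equation*}
a convex combination of $2CD^2$ and $\frac{\sigma^2}{k}\frac{\rho}{2-\rho}$, hence bounded below by their minimum for every $t$. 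The constraint $T\eta\ge\Lambda$ forces $\rho\ge\sqrt{CD^2k/(2\sigma^2T)}$, so using $\rho/(2-\rho)\ge\rho/2$ the noise term is at least $D\sqrt{C\sigma^2/(8kT)}$, again comfortably above the target. Taking the smaller of the two regime bounds then yields \eqref{eq_lb_hess}.

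The main obstacle is the constant bookkeeping in the fixed-rate case: one must verify that the \emph{signal} term $2CD^2$ (the decaying contribution of the initialization) does not pull the minimum of the two terms below the target. This reduces to checking that the initial gradient $\bignorm{\nabla F(W_0)}^2=2CD^2$ is itself at least the claimed rate whenever the lower bound is non-vacuous, which is exactly the signal-versus-noise-floor balance settled at the end of Lemma \ref{lemma_lb_grad_2}; there is slack here because that computation already delivers the constant $1/16$ while we only claim $1/32$. Once this balance is confirmed, the two cases combine to finish the proof.
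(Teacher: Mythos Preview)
Your proposal is correct and follows essentially the same route as the paper: split at the threshold $\Lambda=\sqrt{D^2kT/(2\sigma^2C)}$ (which is exactly the paper's choice $x=\sqrt{2\sigma^2C/(kD^2)}$ in disguise), apply Lemma~\ref{lemma_lb_2} in the small-sum regime, and use the closed form from Lemma~\ref{lemma_lb_1} in the fixed-rate regime. The only cosmetic differences are that you fix the threshold up front instead of solving for it at the end, and you invoke the convex-combination lower bound $\min\{2CD^2,\frac{\sigma^2}{k}\frac{\rho}{2-\rho}\}$ directly rather than the paper's equivalent case split on the sign of $2CD^2-\frac{\sigma^2\rho}{k(2-\rho)}$.
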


\begin{proof}%
    By Lemma \ref{lemma_lb_2}, there exists a function such that the left-hand side of equation \eqref{eq_lb_hess} is at least
    {\begin{align}
         \frac{D^2}{2 \max \{ C^{-1}, 2 \sum_{i=0}^{T-1} \eta_i \}} &\geq \frac{C D^2}{2 \max \{1, 2x^{-1}\sqrt{T} \}} 
         = \frac{D^2 x}{4\sqrt{T}}, \label{eq_lb_1}
    \end{align}}%
    which holds if $\sum_{i=0}^{T-1} \eta_i \le \sqrt T x^{-1}$ for any fixed $x > 0$.
    
    On the other hand, if $\sum_{i=0}^{T-1} \eta_i \ge x^{-1} \sqrt T$ and $\eta_i = \eta$ for a fixed $\eta$, then $\eta > x^{-1} / \sqrt T$.
    By setting $\eta_i = \eta$ for all $i$ in Lemma \ref{lemma_lb_1}, the left-hand side of equation \eqref{eq_lb_hess} is equal to
    {\begin{align*}
         \min_{1\le t\le T} &\Big( 2C D^2 (1 - C\eta)^{2t} 
         + \frac{C^2\sigma^2}{k} \sum_{k=0}^{t-1}\eta^2 (1 - C \eta)^{2(t-k-1)}\Big).
    \end{align*}}%
    Recall that $\eta < C^{-1}$.
    Thus, $\rho = C \eta$ must be less than one.
    With some calculations, we can simplify the above to
    {\begin{align}
         & \min_{1\le t\le T} \left(2 C  D^2 (1 - \rho)^{2t} + \frac{\sigma^2\rho^2}{k} \frac{1 - (1 - \rho)^{2t}}{1 - (1 - \rho)^2} \right) \nonumber\\
        =& \min_{1\le t\le T} \left(\frac{\sigma^2 \rho}{k(2 - \rho)} + (1 - \rho)^{2t} \Big( 2 C  D^2 - \frac{\sigma^2 \rho}{k(2 - \rho)}\Big) \right).\label{eq_grad_lb_last}
    \end{align}}%
    If $2C  D^2 < \frac{\sigma^2 \rho}{k(2- \rho)}$, the above is the smallest when $t = 1$. In this case, equation \eqref{eq_grad_lb_last} is equal to
        {\[ 2 C  D^2 (1 - \rho)^2 + \frac{\sigma^2 \rho^2}{k} \ge \frac {1} {\frac 1 {2C  D^2} + \frac{k}{\sigma^2}} = O(1). \]}%
    If $2C  D^2 \ge \frac{\sigma^2 \rho}{k(2 - \rho)}$, the above is the smallest when $t = T$. In this case, equation \eqref{eq_grad_lb_last} is at least
        {\begin{align} \frac{\sigma^2 \rho}{k(2 - \rho)} \ge \frac{\sigma^2 \rho}{2k} \ge \frac{\sigma^2 C  x^{-1}}{2k} \cdot \frac 1 {\sqrt T}. \label{eq_lb_2}
        \end{align}}%
    To conclude the proof, we set $x$ so that the right-hand side of equations \eqref{eq_lb_1} and \eqref{eq_lb_2} match each other.
    This leads to
    {\[ x = \sqrt{\frac {2\sigma^2 C } {k D^2} }. \]}%
    Thus, by combining the conclusions from both equations \eqref{eq_lb_1} and \eqref{eq_lb_2} with this value of $x$, we finally conclude that if
    $\sum_{i=0}^{T-1} \eta_i \le \sqrt T x^{-1}$, or
    for all $i = 0, \dots, T-1$, $\eta_i = \eta < C^{-1}$,
    then in both cases, there exists a function $f$ such that
    equation \eqref{eq_lb_hess} holds. This completes the proof of Lemma \ref{thm_lb_convex}.
\end{proof}

\paragraph{The case when $\mu > 0$.}
In this case, since the update of $W_t$ also depends on the momentum update, it becomes significantly more involved.
One can verify that the update from step $t$ to step $t+1$ is based on
\begin{align}
    X_u = \left[\begin{array}{cc}
        1 - C \eta_t & \mu \\
        C \eta_t & \mu
        \end{array}
    \right].
\end{align}
Our analysis examines the eigenvalues of the matrix $X_u X_u^{\top}$ and the first entry in the corresponding eigenvectors.
Particularly, we show that the two entries are bounded away from zero. Then, we apply the H\"older's inequality to reduce the case of $\mu > 0$ to the case of $\mu = 0$, Lemma \ref{thm_lb_convex} in particular.

\begin{proof}%
    First, consider a quadratic function
    \[ f(W) = \frac {1}{2C} \bignorm{W}^2. \]
    Clearly, $f(W)$ is $C$-Lipschitz continuous.
    Further, $F(W) = f(W) + d$, for $\cP$ being the isotropic Gaussian.
    Let $W_0$ be a vector whose Euclidean norm equals $D \sqrt{2C}$.
    Thus, \[ F(W_0) - \min_{W\in\real^d} F(W) = D^2. \]
    As for the dynamic of momentum SGD, recall that
    \[ M_{t+1} = \mu M_t - \eta_t G_t \text{ and }~ W_{t+1} = W_t + M_{t+1}. \]
    We consider the case where $\eta_t = \eta$ for all steps $t$.
    In this case, we can write the above update into a matrix notation as follows:
    \begin{align*}
        \left[\begin{array}{c}
            W_{t+1} \\ M_{t+1} 
        \end{array}\right]
        = \left[ \begin{array}{cc} 
            1 - C \eta & \mu \\ -C \eta & \mu
            \end{array} \right]
            \left[ \begin{array}{c} W_t \\ M_t \end{array}\right]
            + C\eta \left[ \begin{array}{c} \bar \xi_t \\ \bar \xi_t
                \end{array} \right].
    \end{align*}
    Let $X_\mu = [1-C\eta, \mu; -C\eta, \mu]$ denote the $2$ by $2$ matrix (that depends on $\mu$) above.
    Similar to Lemma \ref{lemma_lb_1}, we can apply the above iterative update to obtain the formula for $W_{t+1}$ as:
    \begin{align}\label{eq_Wt_iterate}
        \left[\begin{array}{c}
            W_{t+1} \\ M_{t+1}
        \end{array}\right]
        = X_u^{t} \left[\begin{array}{c}
            W_0 \\ M_0 
        \end{array}\right] + \sum_{i=0}^t C\eta X_u^{t - i} \left[\begin{array}{c}
            \bar \xi_i \\ \bar \xi_i
        \end{array}\right].
    \end{align}
    By multiplying both sides by the vector $e_1 = [1, 0]^{\top}$, and then taking the Euclidean norm of the vector (notice that this now only evolves that $W_{t+1}$ vector on the left, and the $W_t$ vector on the right), we now obtain that, in expectation over the randomness of the $\bar \xi_i$'s, the following holds:
    \begin{align}
        \ex{\bignorm{W_{t+1}}^2}
        = 2C D^2 (e_1^{\top} X_u^t e_1)^2 + \frac{C^2\eta^2\sigma^2} k \sum_{i=0}^t \bignorm{e_1^{\top} X_u^{i} e}^2. \label{eq_lb_mom_1}
    \end{align}
    Above, similar to Lemma \ref{lemma_lb_1}, we have set the length of $W_0$ appropriately so that its size is equal to $D\sqrt{2 C^{-1}}$, which has led to the $CD^2$ term above.
    Recall that $M_0$ is equal to zero in the beginning.
    To get the first term above, we follow this calculation:
    \begin{align*}
        \bignorm{e_1^{\top} X_{\mu}^t \left[\begin{array}{c}
            W_0 \\ M_0 
        \end{array}\right]}^2
        &= \bigtr{e_1^{\top} X_{\mu}^t \left[\begin{array}{c}
            W_0 \\ M_0 
        \end{array}\right] \left[\begin{array}{c}
            W_0 \\ M_0 
        \end{array}\right]^{\top}  {X_{\mu}^t}^{\top} e_1} \\
        &= \bigtr{e_1^{\top} X_{\mu}^t  \left[\begin{array}{cc}
            CD^2 & 0 \\ 0 & 0
        \end{array}\right] {X_{\mu}^t}^{\top} e_1} \\
        &= 2CD^2 (e_1^{\top} X_{\mu}^t e_1)^2.
    \end{align*}
    
    We use $e = [1, 1]^{\top}$ to denote the vector of ones.
    Now, we focus on the $2$ by $2$ matrix $X_u$ (recall this is the coefficient matrix on the right side of equation \eqref{eq_Wt_iterate}).
    Let its singular values be denoted as $\lambda_1$ and $\lambda_2$.
    In addition, to deal with equation \eqref{eq_lb_mom_1}, let $\alpha_1$ and $\alpha_2$ denote the first entry of $X_u$'s left singular vectors, corresponding to $a$ and $b$, respectively.
    Thus, we can write
    \begin{align}\label{eq_norms}
        (e_1^{\top} X_{\mu}^{i} e)^2 = \alpha_1^2 \lambda_1^{2i} + \alpha_2^2 \lambda_2^{2i}.
    \end{align}
    Now, one can verify that $\lambda_1^2$ and $\lambda_2^2$ are the roots of the following quadratic equation over $x$:
    \begin{align}
        x^2 - ( (1 - C \eta)^2 + (C \eta)^2 + 2 \mu^2 ) x + \mu^2 = 0.
    \end{align}
    This can be checked by first taking $X_u$ times $X_u^{\top}$, then using the definition of the eigenvalues by calculating the determinant of $X_u X_u^{\top} - x \id = 0$.
    Thus, we have that $\lambda_1$ and $\lambda_2$ are equal to:
    \begin{align}
        \lambda_1, \lambda_2 = \frac{(1 - C \eta)^2 + (C \eta)^2 + 2\mu^2 \pm \sqrt{( (1 - C\eta)^2 + (C \eta)^2 + 2\mu^2)^2 - 4 \mu^2}} {2}.
    \end{align}
    Now, $\alpha_1^2$ (and $\alpha_2^2$, respectively) satisfies that:
    \begin{align}
        \alpha_1^2 = \frac{- C\eta(1 - C\eta) + \mu^2} { (1 - C \eta)^2 + \mu^2 - \lambda_1 + - C\eta(1 - C\eta) + \mu^2}.
    \end{align}
    By enumerating the possible values of $C\eta$ between $0$ and $1$, one can verify that for a fixed value of $\mu$, $\alpha_1^2$ and $\alpha_2^2$ are both bounded below from zero.
    Therefore, we can claim that from equation \eqref{eq_norms},
    \begin{align}
        \alpha_1^2 \lambda_1^{2i} + \alpha_2^2 \lambda_2^{2i}
        \gtrsim \lambda_1^{2i} + \lambda_2^{2i}.
    \end{align}
    By the H\"older's inequality,
    \begin{align}
        (\lambda_1^{2i} + \lambda_2^{2i})^{\frac 1 {2i}} (1 + 1)^{1 - \frac 1 {2i}}
        &\ge \lambda_1 + \lambda_2 = (1 - C\eta)^2 + (C \eta)^2 + 2\mu^2 \\
        &\ge (1 - C\eta)^2 + (C \eta)^2,
    \end{align}
    which implies that
    \begin{align}
        \lambda_1^{2i} + \lambda_2^{2i} \ge \frac{((1 - C\eta)^2 + (C\eta)^2)^{i}}{2^{(2i - 1)}}.
    \end{align}
    Now, we consider two cases.
    If $C\eta < 1/2$, then the above is greater than $(1 - C\eta)^{2i}$, which holds for any $i = 0, 1, \dots, T-1$.
    By way of reduction, we can follow the proof of Lemma \ref{thm_lb_convex} to complete this proof.
    If $C\eta > 1/2$, then the above is greater than $(C \eta)^{2i}$.
    Again by following the proof steps in Lemma \ref{thm_lb_convex}, we can show that \[ \min_{t = 1}^{T}\ex{\bignorm{W_t}^2} \gtrsim D \sqrt{\frac{C \sigma^2}{ k \cdot T}}. \]
    This completes the proof of Theorem \ref{thm_momentum}.
\end{proof}

\section{Experiment Details}\label{sec_additional_exp}

We describe the setup for Figure \ref{fig_noise_stability_approximation}, ran on 
(1) a two-layer Multi-Layer Perceptron (MLP) trained on the MNIST digit classification dataset,
(2) a twelve-layer BERT-Base model trained on the MRPC sentence classification dataset from the GLUE benchmark and
(3) a two-layer Graph Convolutional Network (GCN) trained on the COLLAB node classification dataset.
We set both MLP and GCN with a hidden dimension of 128 for model architectures and initialize them randomly.
We initialize the BERT model from pretrained BERT-Base-Uncased.
We train each model on the provided training set for the training process until the training loss is close to zero. Specifically, we train the MLP, BERT, and GCN models for 30, 10, and 100 epochs.
We use the model of the last epoch to measure the error in the approximation.
We do this for $100$ times and again measure the perturbed loss $\ell_{\cQ}$ on the training set.
We take the gap between $\ell_{\cQ}$ and $\ell$. %
Our measurements show that the error between the actual gap and the Hessian approximation is within $3\%$.%

Table \ref{tab_res} reports additional comparisons between our approach and several baselines, including label smoothing (LS), random SAM (RSAM), and Bayesian SAM (BSAM).
We report the test accuracy and the trace of the Hessian for the model weights at the last epoch of training on six image classification datasets. We observe that NSO also further reduces the trace of the Hessian and improves the test accuracy over the baselines.
The largest eigenvalue reduces by \textbf{9.7}\%.

\begin{table}[h!]
\centering
\caption{Comparison between our approach (NSO), label smoothing (LS), random-SAM (RSAM), and Bayesian SAM (BSAM). Also included is the largest eigenvalue of the Hessian.}\label{tab_res}
{\small
\begin{tabular}{@{}clcccccccccc@{}}
\toprule
 & & \textbf{CIFAR-10} & \textbf{CIFAR-100} & \textbf{Aircraft} & \textbf{Caltech-256} & \textbf{Indoor} &  \textbf{Retina} \\ \midrule
\multirow{5}{*}{\shortstack{\textbf{Trace} \\ ($\downarrow$)}} 
& LS  & 2690 $\pm$ 85 & 10669 $\pm$ 363 & 5699 $\pm$ 72 &  3482 $\pm$ 85  & 3650 $\pm$ 82 & 17681 $\pm$ 193 \\
& RSAM & 2379 $\pm$ 89 & 9762 $\pm$ 422 & 4665 $\pm$ 95 & 3224 $\pm$ 97 &  3425 $\pm$ 70 & 16950 $\pm$ 257 \\
& BSAM & 2768 $\pm$ 54 & 9787 $\pm$ 465 & 4750 $\pm$ 55 &  3498 $\pm$ 38 & 3162 $\pm$ 73 & 16238 $\pm$ 286 \\
& \textbf{NSO} & \textbf{1728} $\pm$ 79 & \textbf{5244} $\pm$ 89 & \textbf{3678} $\pm$ 83 & \textbf{2958} $\pm$ 77 & \textbf{2737} $\pm$ 90 & \textbf{10970} $\pm$ 146 \\
\midrule
\multirow{5}{*}{\shortstack{\bf{Test} \\ \bf{Acc} \\ ($\uparrow$)}} 
& LS & 96.9\% $\pm$ 0.1 & 83.8\% $\pm$ 0.1 & 59.0\% $\pm$ 0.2   & 76.6\% $\pm$ 0.2   & 76.5\% $\pm$ 0.3  & 64.2\% $\pm$ 0.7 \\
& RSAM & 96.8\% $\pm$ 0.1 & 84.0\% $\pm$ 0.1 & 60.9\% $\pm$ 0.4 & 76.4\% $\pm$ 0.1 & 76.8\% $\pm$ 0.5 & 65.9\% $\pm$ 0.3 \\
& BSAM & 96.9\% $\pm$ 0.1 & 83.9\% $\pm$ 0.2 & 61.0\% $\pm$ 0.3 & 76.8\% $\pm$ 0.3 & 76.4\% $\pm$ 0.3 & 65.4\% $\pm$ 0.2 \\
& \textbf{NSO} & \textbf{97.6\%} $\pm$ 0.4 & \textbf{84.9\%} $\pm$ 0.3 & \textbf{63.2\%} $\pm$ 0.3 & \textbf{78.1\%} $\pm$ 0.5  & \textbf{78.2\%} $\pm$ 0.3 & \textbf{67.0\%} $\pm$ 0.4 \\
\bottomrule
\toprule
 & & \textbf{CIFAR-10} & \textbf{CIFAR-100} & \textbf{Aircraft} & \textbf{Caltech-256} & \textbf{Indoor} &  \textbf{Retina} \\ 
\midrule
\multirow{5}{*}{\shortstack{$\bm{\lambda_1}$  \\ ($\downarrow$)}} & SGD  & 1442 $\pm$ 63 & 4639 $\pm$ 95 &  1152 $\pm$ 40 & 1064 $\pm$ 44 & 1087 $\pm$ 56 & 8276 $\pm$ 91 \\
& LS  & 1311 $\pm$ 81 & 3051 $\pm$ 95 & 1144 $\pm$ 88 & 893 $\pm$ 79   & 764 $\pm$ 75 & 4296 $\pm$ 74 \\
& SAM & 1326 $\pm$ 72 & 2625 $\pm$ 91 & 890 $\pm$ 90 & 948 $\pm$ 95  & 887 $\pm$ 53 & 4033 $\pm$ 52 \\
& USAM & 1245 $\pm$ 43 & 2299 $\pm$ 98 & 592 $\pm$ 32 & 782 $\pm$ 38  & 755 $\pm$ 58 & 3893 $\pm$ 55 \\
& ASAM & 1383 $\pm$ 73 & 2638 $\pm$ 86 & 615 $\pm$ 95 & 795 $\pm$ 72 & {697} $\pm$ 36 & 3925 $\pm$ 56 \\
& RSAM & 1356 $\pm$ 69 & 2901 $\pm$ 121 & 895 $\pm$ 74 & 779 $\pm$ 68 & 988 $\pm$ 65 & 4537 $\pm$ 58 \\
& BSAM & 1375 $\pm$ 86 & 2788 $\pm$ 177 & 972 $\pm$ 79 & 843 $\pm$ 97 & 939 $\pm$ 73 & 4123 $\pm$ 87 \\
& \textbf{NSO} & \textbf{1070} $\pm$ 74 & \textbf{2059} $\pm$ 45 & \textbf{579} $\pm$ 59 & \textbf{643} $\pm$ 57 & \textbf{639} $\pm$ 72 & \textbf{3681} $\pm$ 66 \\
\bottomrule
\end{tabular}
}
\end{table}

Finally, we report the hyper-parameters for the experiments in Section \ref{sec_exp}. These include a learning rate of $0.0002$, momentum of $0.99$, weight decay of $0.0001$, batch size of $32$, and training epochs of $60$. We reduce the learning rate by $0.1$ every $20$ epochs. We choose these hyper-parameters based on a grid search on the validation split. 
The range in which we conduct a grid search is as follows:
Learning rate:  0.005, 0.002, 0.001, 0.0005, 0.0002, and 0.0001; 
Momentum: 0.9, 0.95, 0.99;
Weight decay: 0.01, 0.001, 0.0001;
Epochs: 20, 40, and 60; 
Batch size: 16, 32, and 64. 

Each baseline method may have its own set of hyper-parameters, which are adjusted via a grid search.
For label smoothing, we choose the weight of the loss calculated from the incorrect labels between 0.1, 0.2, and 0.3;
For SAM and BSAM, we choose the $\ell_2$ norm of the perturbation between 0.01, 0.02, and 0.05;
For ASAM, we choose the $\ell_2$ norm of the perturbation for the weights between 0.5, 1.0, and 2.0;
For RSAM, we choose the $\ell_2$ norm of the perturbation between 0.01, 0.02, and 0.05 and the standard deviation for sampling perturbation between 0.008, 0.01, and 0.012.

\end{document}